\documentclass[11pt,a4paper]{amsart}
\usepackage[utf8]{inputenc}
\usepackage{amsmath}
\usepackage{amsfonts}
\usepackage{amssymb}
\usepackage{bbm} 
\usepackage{xcolor}
\usepackage{graphicx} 
\usepackage[ruled,vlined]{algorithm2e}

\SetCommentSty{mycommfont}
\usepackage{amsaddr}
\usepackage[left=2.2cm,right=2.2cm,top=2cm,bottom=2cm]{geometry}
\usepackage{hyperref}
\usepackage{booktabs}
\usepackage{subfigure}

\newcommand{\D}{{\mathop{}\!\mathrm{d}}}

\newcommand{\R}{\mathbb{R}}
\newcommand{\Q}{\mathbb{Q}}

\newcommand{\HH}{\mathcal{H}}
\newcommand{\K}{\mathbb{K}}
\newcommand{\N}{\mathbb{N}}

\newcommand{\Mone}{\mathcal{M}_1(\Omega)}
\newcommand{\MoneX}{\mathcal{M}_1(\X)}
\newcommand{\PP}{\mathbb{P}}

\newcommand{\E}{\mathbb{E}}

\newcommand{\T }{\mathcal{T}}
\newcommand{\A}{\mathcal{A}}
\newcommand{\X}{\mathcal{X}}
\newcommand{\one}{ 1 \hspace{-3pt} \mathrm{l}} %

\newcommand{\ab}{{\mathbf{a}}}
\newcommand{\Sball}{\mathcal{B}_{\varepsilon,\delta}}

\numberwithin{equation}{section}
\newtheorem{defn}{Definition}[section]

\newtheorem{rem}[defn]{Remark}

\newtheorem{prop}[defn]{Proposition}
\newtheorem{cor}[defn]{Corollary}
\newtheorem{lem}[defn]{Lemma}

\newtheorem{s_asu}[defn]{Standing Assumption}
\newtheorem{asu}[defn]{Assumption}
\newtheorem{opt}[defn]{Optimisation Problem}


\title{Distributionally Robust Deep $Q$-learning}
\author{Chung I Lu$^{1}$, Julian Sester$^{1}$, Aijia Zhang$^{1}$}
\begin{document}

\maketitle

\begin{center}
\normalsize{\today} \\ \vspace{0.5cm}
\small\textit{
$^{1}$National University of Singapore, Department of Mathematics,\\ 21 Lower Kent Ridge Road, 119077.}
\end{center}

\begin{abstract}
	~We propose a novel distributionally robust $Q$-learning algorithm for the non-tabular case accounting for continuous state spaces where the state transition of the underlying Markov decision process is subject to model uncertainty. The uncertainty is taken into account by considering the worst-case transition from a ball around a reference probability measure. To determine the optimal policy under the worst-case state transition, we solve the associated non-linear Bellman equation by dualising and regularising the Bellman operator with the Sinkhorn distance, which is then parameterized with deep neural networks. This approach allows us to modify the Deep Q-Network algorithm to optimise for the worst case state transition.
    We illustrate the tractability and effectiveness of our approach through several applications, including a portfolio optimisation task based on S\&{P}~500 data.
	\vspace{0.5cm}

	\textbf{Keywords: }{$Q$-learning, Deep $Q$-learning, Markov Decision Process, Wasserstein Uncertainty, Distributionally Robust Optimisation, Neural Networks, Entropic Regularisation, Sinkhorn Distance, Reinforcement Learning}
\end{abstract}

\section{Introduction} \label{sec:intro}

Reinforcement learning (RL) has emerged as a powerful paradigm for training intelligent agents to make optimal decisions in complex environments \cite{sutton2018reinforcement}.
A central concept within RL are Markov Decision Processes (MDPs), which provide a mathematical formalism for sequential decision-making under uncertainty. Traditional RL algorithms often assume a perfectly known model of the environment's dynamics.
However, in many real-world applications, the state transition probabilities of the underlying MDP are subject to uncertainty due to factors such as limited data, noisy measurements, or inherent stochasticity.
This model misspecification can lead to policies that perform poorly when deployed in the actual environment, simply because the agent is trained in the wrong model.

To address the challenge of model uncertainty, Distributionally Robust Optimisation (DRO) \cite{rahimian2019distributionally} offers a principled approach by considering a set of plausible probability distributions, an ambiguity set, and optimizing against the worst-case distribution within this set.
Recent research has begun to integrate DRO into MDPs \cite{wiesemann2013robust}, leading to the development of distributionally robust RL algorithms
One of the types of ambiguity sets that has gained traction is the Wasserstein ball around a possibly estimated reference measure, which quantifies the distance between probability measures using the Wasserstein metric \cite{villani2009optimal}.
While significant progress has been made in the tabular setting with discrete state and action spaces for this type of ambiguity sets, extending these techniques to continuous state spaces presents substantial challenges, particularly in solving the associated non-linear Bellman equation.

In this paper, we propose a novel distributionally robust Q-learning algorithm specifically designed for continuous state spaces where the state transition of the underlying Markov decision process is subject to model uncertainty.
We model this uncertainty by considering the worst-case transition from a ball around a reference probability measure, quantified by the Sinkhorn distance which is a regularised version of the Wasserstein distance \cite{cuturi2013sinkhorn}.
To determine the optimal policy under these worst-case transitions, we tackle the associated non-linear Bellman equation by dualising the Bellman operator \cite{wang2021sinkhorn}.
This regularised problem yields a more tractable dual formulation, which we then parameterize using deep neural networks (see \cite{hornik1990universal}, \cite{kidger2020universal}, \cite{pinkus1999approximation}, \cite{scarselli1998universal}), allowing us to adapt the Deep Q-Network (DQN) algorithm \cite{mnih2015human} to optimize for the worst-case state transition.
The case of Wasserstein uncertainty can be approximated by choosing a small value for the regularisation parameter.

\subsection{Contributions}

Our main contributions can be summarized as follows:

\begin{enumerate}
	\item We introduce a distributionally robust Q-learning framework for continuous state spaces and discrete action spaces based on the Sinkhorn distance.
	\item We prove that dynamic programming principle applies to the robust MDP using the Sinkhorn ball as the ambiguity set, hence allowing us to derive a robust Bellman equation.
	\item We address the intractability of the robust Bellman equation by dualising the optimisation problem leading to a more tractable formulation.
	\item We develop a practical algorithm, \textbf{Robust DQN (RDQN)}, by parameterizing the robust Q-function with deep neural networks and deriving a modified loss function based on the dual formulation, enabling optimisation using stochastic gradient descent.
	\item We provide theorectical gaurantees for the existence of solutions when the state space is compact.
 	\item We provide evidence for applicability of RDQN through two illustrative applications: first a toy example involving agent-environment interaction, and second, a more realistic and complex setting for portfolio optimisation based on the S\&{P}~500 index.
\end{enumerate}

\subsection{Related Literature}

The application of DRO to RL has led to the emergence of distributionally robust RL, aiming to develop policies that are resilient to uncertainties in the underlying MDP parameters.
The theoretical foundations were laid in \cite{iyengar2005robust} and \cite{nilim2005robust} which established the concept of distributional robustness in discrete time MDPs and derived the distributionally robust Bellman equation under the assumption of discrete state and action spaces.
The result was extended to continuous state and action spaces in \cite{bauerle2022distributionally} and \cite{neufeld2024non} for finite time horizon and \cite{neufeld2023markov} for infinite time horizon.
Central to the results is that the ambiguity set has a rectangularity property which means that the ambiguity related to any state-action pair is independent of the ambiguity related to other state-action pairs.
Ambiguity sets that fall within this framework include the balls around a reference measure, which are defined using distances such as the Wasserstein distance or Kullback-Leibler (KL) divergence.

Algorithms in the literature aiming to solve robust MDPs can be classified based on several factors: the type of ambiguity set employed, the nature of the state and action spaces (e.g., continuous vs. discrete), and the approach taken to solve the MDP.

One of the earlier works in this area is \cite{xu2010distributionally} which defines the ambiguity based on uncertainties in parameters of a distribution but the algorithm solving the problem sequentially is meant for smaller state and action spaces.

\cite{liu2022distributionally}, \cite{wang2023finite} and \cite{wang2024sample} focus on the finite state and action space setting using a KL divergence ball as the ambiguity set.
All three works use  $Q$-learning approaches (see \cite{watkins1992q}, \cite{hasselt2010double}, \cite{tan2017deep}, \cite{fan2020theoretical}), each improving on the sample complexity.
The use of the KL divergence is motivated by the use of a duality to derive the robust Bellman operator.
Computing the operator requires sampling the transition from same state-action pair under the reference measure multiple times.
This can be simplified in the analogous Sinkhorn duality where the sampling can be based on a suitably chosen prior measure.
\cite{smirnova2019distributionally} also uses the KL divergence but the ambiguity is instead defined on the policy and they use function approximation to extend to continuous state and action spaces with an actor-critic algorithm.

\cite{panaganti2022robust} uses a ball based on the total variation in a finite state and action space and uses a fitted $Q$-iteration approach.
\cite{ramesh2024distributionally} uses Gaussian processes to model the transition and derives sample complexity for error bounds depending on the type of ambiguity set, assuming the optimal robust policy can be obtained by some chosen algorithm.
\cite{wang2021online} and \cite{wang2022policy} use the R-contamination model to define the ambiguity set and solve the continuous state and action space robust MDP using a $Q$-learning and policy gradient based algorithm respectively.
\cite{decker2024robust} uses a finite set of pre-defined measures in a discrete state and action space and also utilises $Q$-learning.

\cite{yang2017convex} and \cite{yang2020wasserstein} consider the Wasserstein ball but frame the problem as a convex optimisation problem.
\cite{neufeld2024robust} also uses the Wasserstein ball for finite state and action spaces and solving the robust MDP using a $Q$-learning approach but it has a similar sampling issue as the KL divergence approach.

There are also works that focus on scenarios where the ambiguity set is not state-action rectangular leading to general robust MDPs which are known to be NP-hard \cite{wiesemann2013robust}.
The complexity of algorithms used to solve these general MDPs varies based on the specific structure of the ambiguity set.
In \cite{wang2023policy}, a double loop policy gradient based method is proposed so solve robust MDPs with state rectangular ambiguity sets.
For generally non-rectangular ambiguity sets, \cite{li2023policy} proposed an actor-critic algorithm that approximates a solution with the complexity scaling inverse quarticly to the error.
Other works that consider non-sa-rectangular ambiguity sets include \cite{kumar2023efficient}, \cite{wiesemann2013robust}, \cite{mannor2016robust}, \cite{neufeld2024robustb} and \cite{goyal2023robust}.

Another line of work investigates the relationship between regularisation and robustness in the context of MDPs.
These include \cite{eysenbach2021maximum} and \cite{derman2023twice} which show that the equivalence between the regularised MDP and certain classes of robust MDPs.

In terms of the use of the Wasserstein or Sinkhorn distance, it comes from the field of optimal transport \cite{villani2009optimal}.
The original Wasserstein distance is computationally expensive to compute, especially in high-dimensional spaces \cite{genevay2019sample}.
\cite{cuturi2013sinkhorn} introduced the Sinkhorn distance, which regularises the Wasserstein distance by adding an entropic term to the optimisation problem thereby enforcing some smoothness.
It is named after the algorithm \cite{sinkhorn1964relationship} used to compute the distance and has better computational and sample complexity than the Wasserstein distance (see \cite{lin2022efficiency}, \cite{genevay2019sample}).
There are additonal variants that have been proposed, such as the Sinkhorn divergence, unbalanced optimal transport and combinations of these (see e.g. \cite{feydy2019interpolating}, \cite{sejourne2019sinkhorn}).

We are using a more general definition of the Sinkhorn distance similar to \cite[Definition I]{wang2021sinkhorn} where the entropic regularisation is defined with respect to the product measure of the reference measure and a prior measure.
The prior measure is chosen such that all measures in the ambiguity set are absolutely continuous with respect to it.
This grants us added flexibility when computing the robust Bellman operator as it allows us to sample from the prior measure instead of the reference measure as in the KL divergence case.

The remainder of this paper is as follows.
Section \ref{sec:setting} lays out the framework for our approach before we build our algorithm in Section \ref{sec:algorithm} followed by experiments in Section \ref{sec:applications}.
We conclude in Section \ref{sec:conclusion} and leave proofs in Section \ref{sec:aux_results_proofs}.

\section{Setting and Preliminaries} \label{sec:setting}

In this section we present distributionally robust Markov decision processes in line with the presentation from \cite{neufeld2024robust}, and we discuss the associated optimisation problem that we aim to solve by the use of a modified Deep Q-Network (DQN) algorithm \cite{mnih2015human}.

\subsection{Distributionally Robust Markov Decision Processes}

To model the \emph{state space} of the Markov decision process, we consider a closed but not necessarily compact subset $\X \subseteq\R^d$ which we use to define the space on which the infinite time  horizon stochastic process attains value, given by the infinite Cartesian product
\begin{equation*}
	\Omega:=\X^{\N}=\X\times \X \times \cdots.
\end{equation*}

We denote by $\Mone$ the set of all probability measures on $\Omega$ equipped with its Borel-$\sigma$-algebra.
We model the evolution of the attained states via an infinite horizon time-discrete stochastic process and, to this end, we define on $\Omega$ the stochastic process $\left(X_{t}\right)_{t\in \N_0}$  by the canonical process
$$
X_t(\left(\omega_0,\omega_1,\dots,\omega_t,\dots\right)):=\omega_t, \text{ for } (\omega_0,\omega_1,\dots,\omega_t,\dots) \in \Omega,~~t \in \N_0.
$$

To model the set of actions (also called controls), we fix a \emph{finite} set $A \subseteq \R^m$ and we define
\begin{align*}
	\A:&=\left\{\left(a_t(X_t)\right)_{t\in \N_0}~\middle|~ a_t:\X \rightarrow A \text{ Borel measurable for all } t \in \N_0 \right\}.
\end{align*}

Instead of fixing the distribution of the state transition between states from the underlying state space (in dependence of a chosen action), we want to account for a possible model misspecification by allowing for a range of possible distributions.
We model distributional uncertainty through optimal transport distances.

\begin{defn}[Wasserstein-distance] \label{def:wasserstein}
For any $\PP_1,\PP_2 \in \MoneX$ let the Wasserstein-distance $W(\PP_1,\PP_2)$ be defined as
\begin{equation*}
W(\PP_1,\PP_2):=\inf_{\pi \in \Pi(\PP_1,\PP_2)}\int_{\X \times \X} \|x-y\| \D \pi(x,y),
\end{equation*}
where $\|\cdot\|$ denotes the Euclidean norm on $\R^d$ and $\Pi(\PP_1,\PP_2)$ denotes the set of joint distributions of $\PP_1$ and $\PP_2$.
\end{defn}

\begin{defn}[Sinkhorn distance] \label{def:sinkhorn}
	Let $\delta>0$ denote some regularisation parameter, and let $\PP_1,\PP_2, \nu \in \MoneX$ where $\PP_2 \ll \nu$ \footnote{The definition of the Sinkhorn distance could be generalised to some extent by choosing two reference measures $\mu, \nu$ that are not necessarily probability measures such that $\PP_1 \ll \mu$, $\PP_2 \ll \nu$ and the relative entropy is defined as $H(\pi~|~\mu \otimes \nu)$ instead (see \cite{wang2021sinkhorn}).}. Then, $W_{\delta}(\PP_1,\PP_2)$ is defined as
	\begin{equation*}
		W_{\delta}(\PP_1,\PP_2):=\inf_{\pi \in \Pi(\PP_1,\PP_2)}\left\{\int_{\X \times \X} \|x-y\| \D \pi(x,y) + \delta H(\pi~|~\PP_1 \otimes \nu) \right\}
	\end{equation*}
	where $H(\pi~|~\PP_1 \otimes \nu)$ denotes the relative entropy of $\pi\in \Pi(\PP_1,\PP_2)$ with respect to the product measure $\PP_1 \otimes \nu$, i.e,
	\begin{equation*}
		H(\pi~|~\PP_1 \otimes \nu):=\E_{(x,y)~\sim \pi}\left[\log\left(\frac{\D \pi(x,y)}{\D \PP_1(x) \D \nu(y)}\right)\right] = \int_{\X \times \X} \log\left(\frac{\D \pi(x,y)}{\D \PP_1(x) \D \nu(y)}\right) \D \pi(x,y),
	\end{equation*}
	where $\frac{\D \pi(x,y)}{\D \PP_1(x) \D \nu(y)}$ denotes the Radon--Nikodym derivative of $\pi$ w.r.t.\,$\PP_1 \otimes \nu$ evaluated at $(x,y) \in \X\times \X$.
\end{defn}

Definition~\ref{def:wasserstein} is technically for the Wasserstein-1 distance\footnote{The Wassserstein-p distance generalises the cost function used in the integral where \\ $W^p(\PP_1,\PP_2)=\inf_{\pi \in \Pi(\PP_1,\PP_2)}\int_{\X \times \X} \|x-y\|^p \D \pi(x,y)$} which we will simply refer to as the Wasserstein distance.
The Sinkhorn distance incorporates an entropic regularization term compared to the Wasserstein distance.
We shall define
\begin{equation}
	W_\delta(\PP_1,\PP_2) |_{\delta=0} := W_0(\PP_1,\PP_2) := W(\PP_1,\PP_2)
\end{equation}

\begin{rem}[On the role of $\nu$]
	Most definitions of the Sinkhorn distance in the literature are given for the case $\nu=\PP_2$ (see e.g. \cite{cuturi2013sinkhorn}).
	Our choice of Definition~\ref{def:sinkhorn} with the probability measure $\nu$ is motivated by the flexibility it provides when sampling to estimate the $Q$ function values in Algorithm \ref{algo:robustdqn} compared to having to sample from the environment which can be more costly.
	It also plays an important role as a prior for the worst case distribution and fixes the support of all measures taken into account (see also \cite[Remark 2 and Remark 4]{wang2021sinkhorn}).
	We illustrate the impact of $\nu$ on the worst case distribution in Section \ref{sec:worst_case_dist}.
\end{rem}

From now on, we fix some \emph{reference measure} $\widehat{\PP}$ that constitutes the best estimate or guess of the real behaviour of the environment.
In practice such a measure can often be derived from the observed history of realised states (see, e.g. \cite{rust1994structural}, \cite{wiesemann2013robust}), e.g., via the empirical distribution.
Concerning the \emph{reference measure} $\widehat{\PP}$ we impose the following technical assumptions.

\begin{asu} \label{asu_1}
	We assume that there exists a continuous map (in the Wasserstein-1 topology $\tau_1$, see \cite[Definition 6.8]{villani2009optimal})
	\begin{equation} \label{eq:definition_p_hat}
		\begin{split}
			\X \times A &\rightarrow (\MoneX,\tau_1)\\
			(x,a) &\mapsto \widehat{\PP}(x,a)
		\end{split}
	\end{equation}
	such that $\widehat{\PP}(x,a)$ has finite first moment for all $(x,a) \in \X \times A$.
\end{asu}

In order to use the Sinkhorn distance to define an ambiguity set of probability measures, we define for any $\varepsilon>0$ and $\delta \geq 0$ the set-valued map
\begin{equation} \label{eq:def_sinkhorn}
	\X \times A \ni (x,a) \twoheadrightarrow \Sball\left(\widehat{\PP}(x,a)\right):=\left\{\PP\in \MoneX ~\middle|~W_{\delta}(\widehat{\PP}(x,a),\PP) \leq \varepsilon \right\},
\end{equation}
where $\Sball\left(\widehat{\PP}(x,a)\right)$ denotes the Sinkhorn ball with $\varepsilon$-radius and center $\widehat{\PP}(x,a)$.

Since $W_\delta(\widehat{\PP}(x,a),\widehat{\PP}(x,a)) \neq 0$ in general\footnote{Except in the Wasserstein case, i.e. $\delta=0$, in which we always have $W_0(\widehat{\PP}(x,a),\widehat{\PP}(x,a)) = 0$.}, we make the following assumption to ensure the Sinkhorn ball is not empty.
\begin{asu}[Reference measure is in the Sinkhorn ball] \label{asu_3}
	We assume
	\begin{equation}
		\varepsilon \geq \sup_{(x,a) \in \X \times A} W_\delta(\widehat{\PP}(x,a),\widehat{\PP}(x,a))
	\end{equation}
\end{asu}

Finally, we define ambiguity sets of probability measures on the whole time horizon for every $x \in \X$ and every action $\ab \in \A$ via
\begin{align*}
	\mathfrak{P}^\delta_{x,\ab}:=\bigg\{\delta_x \otimes \PP_0\otimes \PP_1 \otimes \cdots~\bigg|~&\text{ for all } t \in \N_0:~\PP_t:\X \rightarrow \MoneX \text{ Borel-measurable, } \\
	&\text{ and } \PP_t(\omega_t) \in \Sball\left(\widehat{\PP}(\omega_t,a_t(\omega_t))\right)
	\text{ for all } \omega_t\in \X \bigg\},
\end{align*}
where the notation $\PP=\delta_x \otimes\PP_0\otimes \PP_1 \otimes\cdots \in \mathfrak{P}^\delta_{x,\ab}$ abbreviates
\begin{equation*}
	\PP(B):=\int_{\X}\cdots \int_{\X} \cdots \one_{B}\left((\omega_t)_{t\in \N_0}\right) \cdots \PP_{t-1}(\omega_{t-1};\D\omega_t)\cdots \PP_0(\omega_0;\D\omega_1) \delta_x(\D \omega_0),\qquad B \in \mathcal{F}.
\end{equation*}

\subsection{Optimisation Problem}

Let $r:\X \times A \times \X \rightarrow \R$ be some \emph{reward function} modeling the feedback received on the quality of the realised state upon execution of an action.
We assume from now on, that reward function and discount factor $\alpha$ fulfil the following assumptions.

\begin{s_asu}[Assumptions on the reward function and the discount factor]\label{asu_2}~
	\begin{itemize}
		\item[(i)] The map
		\begin{equation*}
			\X \times A \times \X \ni (x_0,a,x_1) \mapsto r(x_0,a,x_1)
		\end{equation*}
		is continuous and bounded.
		\item[(ii)]There exists some $L > 0$ such that for all $x_0,x_0',x_1\in \X$ and $a,a'\in A$ we have
		\begin{equation*}\label{eq:c_Lipschitz}
			\left|r(x_0,a,x_1)-r(x_0',a',x_1)\right|\leq L \cdot \left(\|x_0-x_0'\|+\|a-a'\|\right).
		\end{equation*}
		\item[(iii)]
		We fix an associated \emph{discount factor} $\alpha<1$ which satisfies
		\begin{equation*}
			0< \alpha < 1.
		\end{equation*}
	\end{itemize}
\end{s_asu}

The optimisation problem is to maximise the expected value of $\sum_{t=0}^\infty \alpha^tr(X_{t},a_t,X_{t+1})$, for every initial value $x\in \X$, under the worst case measure from $\mathfrak{P}^\delta_{x,\ab}$ over all possible actions $\ab \in \A$.
More precisely, we introduce the optimal value function
\begin{equation} \label{eq:robust_problem_1}
	\begin{split}
		\X \ni x \mapsto V_\delta(x):=\sup_{\ab \in \A}\inf_{\PP \in \mathfrak{P}^\delta_{x,\ab}} \left(\E_{\PP}\bigg[\sum_{t=0}^\infty \alpha^tr(X_{t},a_t,X_{t+1})\bigg]\right).
	\end{split}
\end{equation}

\subsection{Dynamic Programming: The Bellman Equation}

In \cite[Theorem 2.7]{neufeld2023markov}, it was shown that under a certain set of assumptions, a dynamic programming principle holds which allows for a robust version of the \emph{Bellman} equation, named after the pioneering contributions of Richard E. Bellman (see \cite{bellman1952theory}).
In particular, it applies when the ambiguity set is a Wasserstein ball around a reference measure.
We extend this result to the Sinkhorn distance and the associated ambiguity set.
\begin{prop}[Robust Bellman equation for the Sinkhorn ball ambiguity set] \label{prop:bellman}
	Let $\varepsilon > 0, \delta \geq 0$ and assume that Assumptions \ref{asu_1}, \ref{asu_3} and \ref{asu_2} hold, then
	\begin{equation} \label{eq:bellman}
		V_\delta(x) = \T_\delta V_\delta(x) \text{ for all } x \in \X
	\end{equation}
	for the operator $\T_\delta$ being defined as
	\begin{align*}
		\X \ni x \mapsto \T_\delta V_\delta(x):&= \sup_{a \in A} \inf_{\PP \in \Sball \left(\widehat{\PP}(x,a)\right)} \E_{\PP} \left[r(x,a,X_1)+\alpha V_\delta(X_1)\right].
	\end{align*}
\end{prop}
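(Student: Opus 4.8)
The plan is to deduce the claim from the abstract dynamic programming result \cite[Theorem 2.7]{neufeld2023markov} (in the form underlying \cite{neufeld2024robust}), by checking that the Sinkhorn-ball ambiguity set fits its structural hypotheses. Concretely, it suffices to verify that the set-valued map $(x,a)\mapsto \Sball(\widehat{\PP}(x,a))$ (i) takes nonempty values, (ii) takes weakly compact (equivalently, $\tau_1$-compact, after the moment bound below) values in $\MoneX$, and (iii) is continuous in the sense required there (upper hemicontinuous with compact values / closed graph in the Wasserstein-1 topology), together with the continuity and boundedness of $r$, the discount condition $\alpha<1$, and the moment condition on $\widehat{\PP}$ — i.e.\ Assumptions \ref{asu_1} and \ref{asu_2}. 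The rectangularity hypothesis is automatic: $\mathfrak{P}^\delta_{x,\ab}$ is by construction the family of laws obtained by independently chaining Borel kernels whose values lie in the per-state-action balls $\Sball(\widehat{\PP}(\cdot,a_t(\cdot)))$.

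For (i), nonemptiness is exactly Assumption \ref{asu_3}, which forces $\widehat{\PP}(x,a)\in\Sball(\widehat{\PP}(x,a))$ for every $(x,a)\in\X\times A$. For (ii), the key observation is that the relative entropy $H(\pi\mid \PP_1\otimes\nu)$ is nonnegative, both $\pi$ and the product $\PP_1\otimes\nu$ being probability measures, so $W_\delta(\PP_1,\PP_2)\geq W(\PP_1,\PP_2)$ for every $\delta\geq 0$. Hence $\Sball(\widehat{\PP}(x,a))$ is contained in the Wasserstein ball $\{\PP\in\MoneX : W(\widehat{\PP}(x,a),\PP)\leq\varepsilon\}$, and since $\widehat{\PP}(x,a)$ has finite first moment (Assumption \ref{asu_1}), this Wasserstein ball is tight, hence relatively weakly compact by Prokhorov's theorem. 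Closedness of the Sinkhorn ball then follows from lower semicontinuity of $\PP\mapsto W_\delta(\widehat{\PP}(x,a),\PP)$ in the weak topology, which reduces to joint lower semicontinuity of $\pi\mapsto\int_{\X\times\X}\|x-y\|\,\D\pi(x,y)+\delta H(\pi\mid \PP_1\otimes\nu)$ and stability of the marginal constraints under weak convergence.

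For (iii), continuity of $(x,a)\mapsto\Sball(\widehat{\PP}(x,a))$ is obtained by combining the $\tau_1$-continuity of $(x,a)\mapsto\widehat{\PP}(x,a)$ from Assumption \ref{asu_1} with joint (semi)continuity of $(\PP_1,\PP_2)\mapsto W_\delta(\PP_1,\PP_2)$; the delicate point is that $\PP_1$ enters both as the first marginal constraint on $\pi$ and inside the entropic reference $\PP_1\otimes\nu$, so a perturbation of $\widehat{\PP}(x,a)$ moves the ball in two coupled ways. Once (i)--(iii) are established, the remainder is the standard fixed-point argument: by Assumption \ref{asu_2}, $r$ is bounded and $\alpha<1$, so $\T_\delta$ maps bounded (continuous) functions on $\X$ to bounded (continuous) functions and is an $\alpha$-contraction in the supremum norm, with a unique fixed point $V_\delta^\star$; the identification $V_\delta^\star=V_\delta$ then comes from the usual two-sided estimate, iterating $\T_\delta$ on $0$ and using $\alpha^t\to 0$ with measurable selection of near-optimal actions for one inequality, and weak compactness of each ball (to select a measurable worst-case kernel) together with the tower property for the other.

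The step I expect to be the main obstacle is (iii): upgrading the merely lower semicontinuous entropic term to the full set-valued continuity (closed-graph / Berge-type condition) that the cited theorem requires, given that $\PP_1$ appears twice in $W_\delta(\PP_1,\cdot)$. This will likely need some uniform-integrability control on the densities $\D\pi/\D(\PP_1\otimes\nu)$ along converging sequences, or a direct verification that the minimisers in the Sinkhorn problem vary continuously. The measurable selection of the inner worst-case transition kernel is a secondary and essentially routine point, handled via the Kuratowski--Ryll-Nardzewski / Berge machinery already invoked in \cite{neufeld2023markov}.
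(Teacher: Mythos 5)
Your proposal follows essentially the same route as the paper: reduce to \cite[Theorem 2.7]{neufeld2023markov} by verifying nonemptiness of the ball (Assumption~\ref{asu_3}), weak compactness (via $H\geq 0$ so that $\Sball(\widehat{\PP}(x,a))$ sits inside the Wasserstein ball, the uniform first-moment bound, Prokhorov, and lower semicontinuity of $W_\delta$), and continuity of the set-valued map $(x,a)\twoheadrightarrow \Sball\left(\widehat{\PP}(x,a)\right)$. The only refinement worth noting is that the cited theorem requires full Berge continuity — lower hemicontinuity in addition to the closed-graph/upper-hemicontinuous half your parenthetical emphasises — and it is exactly that lower half, which you correctly flag as the main obstacle, that costs the paper its most technical auxiliary lemma (continuity of $W_\delta$ in its first argument, obtained by relating $W_\delta$ to the standard Sinkhorn distance and invoking stability of entropic optimal transport).
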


\begin{proof}
    The proof involves showing the settings described in Section \ref{sec:setting} satisfy the assumptions of \cite[Theorem 2.7]{neufeld2023markov} with $p=0$ in the notation of \cite{neufeld2023markov}.
	In addition to Assumptions \ref{asu_1}, \ref{asu_3} and \ref{asu_2}, we need to show that the Sinkhorn ball $\Sball\left(\widehat{\PP}(x,a)\right)$ is weakly compact and continuous.
	Details of the proof for the case $\delta>0$,  can be found in Section \ref{sec:proofs}.
    The proof for the case $\delta=0$ can be found in \cite[Proposition 3.1]{neufeld2023markov}.
\end{proof}

\subsection{\texorpdfstring{$Q$}{}-learning}\label{sec:qlearning}

The goal of $Q$-learning, as introduced in \cite{watkins1992q}, is to make use of the Bellman equation~\eqref{eq:bellman} to learn the optimal policy $\ab \in \A$ to maximizing \eqref{eq:robust_problem_1}.
By \cite[Theorem 2.7~(iii)]{neufeld2023markov}, it follows that the optimal policy is Markovian and stationary, i.e., it is of the form $\ab =(a^*(X_t))_{t\in \N}$ for some measurable decision rule $a^*:\X \rightarrow A$.
Hence, it suffices to determine an optimal action $a^*(x)$ for each state $x$ which maximises the term in the supremum of the Bellman operator $\T_\delta V_\delta$, i.e.,
\begin{equation}
 	\inf_{\PP \in \Sball \left(\widehat{\PP}(x,a^*(x))\right)} \E_{\PP} \left[r(x,a^*(x),X_1)+\alpha V_\delta(X_1)\right]= V_\delta(x).
\end{equation}

We define the optimal robust $Q$ function
\begin{equation} \label{eq:q_val_definition}
	\X\times A \ni (x,a) \mapsto Q_\delta^*(x,a):= \inf_{\PP \in \Sball \left(\widehat{\PP}(x,a)\right)} \E_{\PP} \left[r(x,a,X_1)+\alpha V_\delta(X_1)\right].
\end{equation}

With this definition, we obtain via Equation~\eqref{eq:bellman} that
\begin{equation} \label{eq:sup_Q_V}
	\sup_{a\in A}Q_\delta^*(x,a)=V_\delta(x) \text{ for all } x \in \X,
\end{equation}
and thus computing $Q_\delta^*$ allows to determine the optimal action in each state $x\in \X$.
Now we observe by Equation~\eqref{eq:sup_Q_V} that
\begin{equation} \label{eq:HQ_equals_Q}
	\begin{split}
		(\HH_\delta Q_\delta^*)(x,a)&:=\inf_{\PP \in \Sball \left(\widehat{\PP}(x,a)\right)} \E_{\PP} \left[r(x,a,X_1)+\alpha\sup_{b \in A}Q_\delta^*(X_1,b)\right]\\
		&=\inf_{\PP \in \Sball \left(\widehat{\PP}(x,a)\right)} \E_{\PP} \left[r(x,a,X_1)+\alpha V_\delta(X_1)\right]= Q_\delta^*(x,a).
	\end{split}
\end{equation}
This leads directly to the idea of $Q$-learning which is to solve the fixed point equation $\HH_\delta Q_\delta^* = Q_\delta^*$.
In the case of a discrete state and action space, a corresponding algorithm exists for the Wasserstein case (see \cite{neufeld2024robust}).
For infinitely many states and actions, the algorithm is however infeasible which is why we propose to approximate the optimal $Q$ function with neural networks by pursuing a similar approach as in \cite{mnih2015human} or \cite{vanhasselt2016deep} in the non-robust case.

\section{The Robust \texorpdfstring{$Q$}{}-learning Algorithm} \label{sec:algorithm}

\subsection{Dualising and Regularising the Robust Optimisation Problem} \label{sec:regularising}

As discussed in Section~\ref{sec:qlearning}, our goal is to solve the fixed point equation  $\HH_\delta Q_\delta^* = Q_\delta^*$.
Directly computing the infimum over the Sinkhorn ball is intractable in practice.
Instead, we follow the procedure proposed in \cite{wang2021sinkhorn} and consider the dual formulation of the robust optimisation problem which is in a tractable form.
Throughout this chapter, we assume that Assumptions \ref{asu_1}, \ref{asu_2} and \ref{asu_3} hold.
\begin{prop} \label{prop:dual}
	Let $\nu \in \MoneX$. Assume that $\nu\left(\{0 \leq \| y - X_1^{\nu}\| < \infty\}\right)=1$, and that
    \begin{equation*}
        \E_{X_1^{\nu} \sim \nu}\left[\exp\left(\frac{- \| y - X_1^{\nu}\|}{ \delta}\right)\right] < \infty
    \end{equation*}
    for $\widehat{\PP}(x,a)$-almost every y, for all $(x,a) \in \X \times A$.
    Then, we have for all $(x,a) \in \X \times A$ and for $\varepsilon,\delta>0$ such that
	\begin{equation*}
		\overline{\varepsilon}:= \varepsilon + \delta \E_{X_1^{\PP} \sim \widehat{\PP}(x,a)}\left[\log\left(\E_{X_1^{\nu} \sim \nu}\left[\exp\left(\frac{- \| X_1^{\PP}-X_1^{\nu}\|}{ \delta}\right)\right]\right)\right]\geq 0
	\end{equation*}
	the following duality
	\begin{align*}
		&\HH_{\delta}Q_\delta^*(x,a) \\
		&= \sup_{\lambda >0} \bigg\{-\lambda {\varepsilon}- \lambda \delta \E_{X_1^{\PP} \sim \widehat{\PP}(x,a)}\left[\log\left(\E_{X_1^{\nu} \sim \nu}\left[\exp\left(\tfrac{-r(x,a,X_1^{\nu})-\alpha \sup_{b \in A}Q_\delta^*(X_1^{\nu},b)-\lambda \| X_1^{\PP}-X_1^{\nu}\|}{\lambda \delta}\right)\right]\right)\right]\bigg\}.
	\end{align*}
\end{prop}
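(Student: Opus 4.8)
The plan is to recognise that $\HH_\delta Q_\delta^*(x,a)$ is, up to fixing $(x,a)$, an instance of the Sinkhorn-distributionally-robust optimisation problem
\[
  \inf_{\PP:\,W_\delta(\widehat\PP(x,a),\PP)\le\varepsilon}\ \E_\PP[f(X_1)],
\]
with the bounded continuous cost $f(y):=r(x,a,y)+\alpha\sup_{b\in A}Q_\delta^*(y,b)$, and then invoke the strong-duality result for Sinkhorn-DRO from \cite{wang2021sinkhorn}. So the first step is to verify the hypotheses of that duality theorem for our $f$: boundedness and continuity follow from Standing Assumption~\ref{asu_2}(i), the structure of $Q_\delta^*$, and compactness of $A$; measurability of $y\mapsto\sup_{b\in A}Q_\delta^*(y,b)$ needs a brief argument (it is a supremum over a finite set $A$ of continuous functions, hence continuous). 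The integrability conditions imposed in the statement — that $\nu$ gives full mass to $\{0\le\|y-X_1^\nu\|<\infty\}$ and that $\E_{\nu}[\exp(-\|y-X_1^\nu\|/\delta)]<\infty$ for $\widehat\PP(x,a)$-a.e.\ $y$ — are precisely the regularity assumptions needed to guarantee the relative entropy term is well-defined and the dual value is finite; I would state explicitly which assumption in \cite{wang2021sinkhorn} each of these matches.

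The second step is the actual dualisation. Write the Lagrangian by introducing a multiplier $\lambda>0$ for the constraint $W_\delta(\widehat\PP,\PP)\le\varepsilon$:
\[
  \inf_{\PP}\sup_{\lambda>0}\ \E_\PP[f(X_1)] + \lambda\bigl(W_\delta(\widehat\PP,\PP)-\varepsilon\bigr),
\]
expand $W_\delta$ using Definition~\ref{def:sinkhorn} as an infimum over couplings $\pi\in\Pi(\widehat\PP,\PP)$ with the entropic penalty $\delta H(\pi\mid\widehat\PP\otimes\nu)$, and then swap $\inf$ and $\sup$ (minimax exchange). Because the inner problem, after absorbing $\PP$ into the coupling $\pi$, becomes an entropically penalised linear program in $\pi$ with a fixed first marginal $\widehat\PP(x,a)$, it disintegrates over the first coordinate: conditionally on $X_1^\PP=y$, one solves $\inf_{q\ll\nu}\{\int(f+\lambda\|y-\cdot\|)\,dq+\lambda\delta H(q\mid\nu)\}$. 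This is a classical entropic-minimisation (Gibbs variational) problem whose value is the log-partition function $-\lambda\delta\log\E_{X_1^\nu\sim\nu}[\exp(-(f(X_1^\nu)+\lambda\|y-X_1^\nu\|)/(\lambda\delta))]$; substituting $f$ and integrating over $y\sim\widehat\PP(x,a)$, and subtracting $\lambda\varepsilon$, gives exactly the claimed expression. I would present the Gibbs variational identity as the engine of this step and cite it rather than re-deriving it.

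The third and more delicate step is \emph{justifying the minimax swap} and the restriction of $\lambda$ to the region where $\overline\varepsilon\ge0$. The condition $\overline\varepsilon\ge0$ is exactly Assumption~\ref{asu_3} rewritten in dual form: it says the reference measure itself lies in the Sinkhorn ball, so the primal feasible set is nonempty and Slater-type qualification holds, which is what licenses strong duality and ensures the dual supremum over $\lambda>0$ is attained in a meaningful range. I expect this — pinning down the admissible range of $\lambda$ and checking the interchange of $\inf_\PP$ and $\sup_\lambda$ (e.g.\ via a Sion-type minimax theorem after establishing the relevant convexity/concavity and a compactness or coercivity property in $\PP$, using the weak compactness of $\Sball(\widehat\PP(x,a))$ already established in the proof of Proposition~\ref{prop:bellman}) — to be the main obstacle; the algebra producing the log-sum-exp is routine once the framework is in place. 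Finally I would note that the whole argument is carried out pointwise in $(x,a)\in\X\times A$, so no uniformity across state-action pairs is needed here, and that the result recovers the $\delta\to 0$ Wasserstein dual in the appropriate limit.
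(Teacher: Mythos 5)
Your proposal is correct and follows essentially the same route as the paper: the paper's proof simply invokes \cite[Theorem I]{wang2021sinkhorn} (together with $\sup_z f(z)=-\inf_z(-f(z))$) after checking its hypotheses, namely measurability of $f(z)=r(x,a,z)+\alpha\sup_{b\in A}Q_\delta^*(z,b)$ via continuity (Standing Assumption~\ref{asu_2} and Lemma~\ref{lem:Q_continuous}) and the existence of regular conditional distributions (\cite[Theorem 8.29]{klenke2013probability}). Your additional sketch of the Lagrangian/Gibbs-variational machinery is a re-derivation of the cited theorem's internals rather than a different argument, and your reading of $\overline{\varepsilon}\ge 0$ as the Slater-type qualification matches the paper's use of it.
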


\begin{proof}
	This follows from \cite[Theorem I]{wang2021sinkhorn} and from using that $\sup_z f(z) = - \inf_z - f(z)$.
    \cite[Theorem I]{wang2021sinkhorn} in addition requires that the function $f(z):\X \to \R$ is measurable and that every joint distribution on $\X \times \X$ with $\widehat{\PP}(x,a)$ as the first marginal has a regular conditional distribution given the value of the first marginal.
    For our setting, the function $f(z) = r(x,a,z) + \alpha \sup_{b \in A}Q_\delta^*(z,b)$ is measurable as it is continuous due to Assumption~\ref{asu_2} and the continuity of $Q_\delta^*$ as shown in Lemma~\ref{lem:Q_continuous}.
    By \cite[Theorem 8.29]{klenke2013probability}, real-valued random variables always have a regular conditional distribution.
\end{proof}

The optimisation problem over a set of probability measures in the Sinkhorn ball in Proposition~\ref{prop:dual} is transformed into one that only requires maximising over the scalar $\lambda$. The inner expectation is under the measure $\nu$ which offers flexibility in sampling.
If one were mainly interested in the Wasserstein-1 ball as the ambiguity set, the following observation offers a direct connection to the Sinkhorn distance.

\begin{cor} \label{cor:convergence_delta}
	Let $\varepsilon > 0$ then for all $(x,a) \in \X \times A$, we have
	\begin{equation*}
		\lim_{\delta \downarrow 0} \HH_\delta Q_0^*(x,a)= \inf_{\PP \in \mathcal{B}_{\varepsilon,0} \left(\widehat{\PP}(x,a)\right)} \E_{\PP} \left[r(x,a,X_1)+\alpha\sup_{a\in A}Q_0^*(x,a)\right] := \HH_0 Q_0^*(x,a) = Q_0^*(x,a)
	\end{equation*}
\end{cor}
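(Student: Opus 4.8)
The plan is to compare $\HH_\delta Q_0^*(x,a)=\inf_{\PP\in\Sball(\widehat{\PP}(x,a))}\E_\PP[f(X_1)]$ with $\HH_0 Q_0^*(x,a)=\inf_{\PP\in\mathcal{B}_{\varepsilon,0}(\widehat{\PP}(x,a))}\E_\PP[f(X_1)]$, where $f(y):=r(x,a,y)+\alpha\sup_{b\in A}Q_0^*(y,b)$ is bounded and continuous by Assumption~\ref{asu_2} and Lemma~\ref{lem:Q_continuous}; the identity $\HH_0 Q_0^*(x,a)=Q_0^*(x,a)$ is just \eqref{eq:HQ_equals_Q} at $\delta=0$, so only the convergence $\HH_\delta Q_0^*(x,a)\to\HH_0 Q_0^*(x,a)$ needs to be established. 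One inequality, together with the existence of the limit, is free: for any coupling $\pi$ the relative entropy $H(\pi\,|\,\widehat{\PP}(x,a)\otimes\nu)$ is nonnegative whenever finite (being a relative entropy between probability measures) and $+\infty$ otherwise, so $W_\delta(\widehat{\PP}(x,a),\PP)\ge W(\widehat{\PP}(x,a),\PP)$ and moreover $\delta\mapsto W_\delta(\widehat{\PP}(x,a),\PP)$ is nondecreasing. Hence $\Sball(\widehat{\PP}(x,a))\subseteq\mathcal{B}_{\varepsilon,0}(\widehat{\PP}(x,a))$, so $\delta\mapsto\HH_\delta Q_0^*(x,a)$ is nondecreasing and bounded below by $\HH_0 Q_0^*(x,a)$; thus $\lim_{\delta\downarrow0}\HH_\delta Q_0^*(x,a)$ exists and is $\ge\HH_0 Q_0^*(x,a)$.

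For the reverse inequality I would pass to the dual. The argument behind Proposition~\ref{prop:dual} applies verbatim with the bounded continuous $f$ in place of $r+\alpha\sup_{b\in A}Q_\delta^*$ (the hypotheses there hold for all small $\delta$: $\exp(-\|y-X_1^{\nu}\|/\delta)\le 1$ makes the integrability requirement automatic, and $\overline\varepsilon\to\varepsilon>0$ as $\delta\downarrow0$ since $\operatorname{supp}(\nu)=\X$), giving
\[
\HH_\delta Q_0^*(x,a)=\sup_{\lambda>0}\left\{-\lambda\varepsilon-\lambda\delta\,\E_{X_1^{\PP}\sim\widehat{\PP}(x,a)}\left[\log\E_{X_1^{\nu}\sim\nu}\left[\exp\left(\frac{-f(X_1^{\nu})-\lambda\|X_1^{\PP}-X_1^{\nu}\|}{\lambda\delta}\right)\right]\right]\right\}.
\]
Fix $\lambda>0$ and call the bracket $g_\delta(\lambda)$. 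By the Gibbs variational identity the inner quantity equals $\inf_{q\in\MoneX}\{\E_q[f(Y)+\lambda\|X_1^{\PP}-Y\|]+\lambda\delta H(q\,|\,\nu)\}$, hence is nonincreasing in $\delta$ and, $f$ being bounded, stays in a fixed finite interval; as $\delta\downarrow0$ it decreases to $\operatorname*{ess\,inf}_{Y\sim\nu}(f(Y)+\lambda\|X_1^{\PP}-Y\|)$ by the Laplace/Varadhan asymptotics, and this essential infimum coincides with $\inf_{y\in\X}(f(y)+\lambda\|X_1^{\PP}-y\|)$ because $f$ is continuous and $\operatorname{supp}(\nu)=\X$. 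By bounded monotone convergence the outer $\widehat{\PP}(x,a)$-expectation passes to the limit, so $g_\delta(\lambda)\downarrow g_0(\lambda):=-\lambda\varepsilon+\E_{X_1^{\PP}\sim\widehat{\PP}(x,a)}[\inf_{y\in\X}(f(y)+\lambda\|X_1^{\PP}-y\|)]$ for every $\lambda>0$.

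It remains to exchange $\lim_{\delta\downarrow0}$ with $\sup_{\lambda>0}$ and to identify the limit. Since $f$ is Lipschitz (Assumption~\ref{asu_2}(ii), with the $Q$-function inheriting the Lipschitz property from $r$), for $\lambda$ beyond the Lipschitz modulus of $f$ the brackets $g_\delta(\lambda)$ and $g_0(\lambda)$ are eventually nonincreasing in $\lambda$, so the suprema are attained on a common compact interval $[0,\Lambda]$ for all small $\delta$; on $[0,\Lambda]$ the monotone pointwise convergence $g_\delta\downarrow g_0$ is then uniform by Dini's theorem, whence $\lim_{\delta\downarrow0}\HH_\delta Q_0^*(x,a)=\sup_{\lambda\ge0}g_0(\lambda)$. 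Finally, weak Lagrangian/Kantorovich duality for the Wasserstein problem — relaxing the constraint $W(\widehat{\PP}(x,a),\PP)\le\varepsilon$ and minimising the transport cost pointwise — gives $\sup_{\lambda\ge0}g_0(\lambda)\le\inf_{\PP\in\mathcal{B}_{\varepsilon,0}(\widehat{\PP}(x,a))}\E_\PP[f(X_1)]=\HH_0 Q_0^*(x,a)$. Combined with the lower bound from the first paragraph, this sandwiches $\lim_{\delta\downarrow0}\HH_\delta Q_0^*(x,a)=\HH_0 Q_0^*(x,a)=Q_0^*(x,a)$, as claimed.

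The hardest part is the interchange of $\lim_{\delta\downarrow0}$ and $\sup_{\lambda>0}$: one needs the Varadhan-type asymptotics for the log-exponential integral to hold uniformly enough in $\lambda$, and the tools making this work are the monotonicity of the soft minimum in $\delta$, boundedness of $f$ (for bounded monotone convergence of the outer expectation), and an a priori Lipschitz bound on $f$ (to confine the dual multiplier to a fixed compact set). A secondary but essential point is that $\operatorname*{ess\,inf}_{Y\sim\nu}$ collapses to $\inf_{y\in\X}$ only because $\nu$ has full support $\operatorname{supp}(\nu)=\X$ — equivalently, restricting the ambiguity set to measures absolutely continuous with respect to the prior must not shrink the Wasserstein worst case in the limit — so this property of $\nu$ has to be in force, in line with its stated role of fixing the support of all admissible models.
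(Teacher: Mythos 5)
Your argument is correct in overall strategy but takes a genuinely different route from the paper: the paper proves this corollary in one line by citing \cite[Appendix EC.4]{wang2021sinkhorn} for the convergence $\HH_\delta Q_0^*\to\HH_0Q_0^*$ and \eqref{eq:HQ_equals_Q} for the final identity, whereas you reconstruct that external limit theorem from scratch. Your first paragraph (nonnegativity of the entropy term, hence $\Sball\subseteq\mathcal{B}_{\varepsilon,0}$ and monotonicity of $\delta\mapsto\HH_\delta Q_0^*$) is exactly the observation the paper itself makes later, in the proof of Proposition~\ref{prop:solutions}~(iii), and gives the easy inequality for free. The reverse inequality via Gibbs' variational formula, Laplace asymptotics of the soft minimum, Dini on a compact $\lambda$-interval, and weak Kantorovich duality is a sound and self-contained way to get what the paper outsources; what it buys is transparency about exactly which hypotheses the corollary is silently inheriting from the cited result. (One wording slip: the soft minimum $\inf_q\{\E_q[g]+\lambda\delta H(q\,|\,\nu)\}$ is non\emph{decreasing} in $\delta$, which is what you then use when you say it decreases to the essential infimum as $\delta\downarrow0$.)

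Two points deserve correction or emphasis. First, your justification for confining the dual multiplier to a compact interval — ``$f$ is Lipschitz by Assumption~\ref{asu_2}(ii)'' — is not supported by the paper: that assumption gives Lipschitz continuity of $r$ in $(x_0,a)$ only, not in the third argument, and Lemma~\ref{lem:Q_continuous} yields only continuity of $Q_0^*$, not a Lipschitz bound. The step is repairable using boundedness alone: bounding $\exp\bigl(\tfrac{-f(Y)-\lambda\|X-Y\|}{\lambda\delta}\bigr)\ge e^{-\|f\|_\infty/(\lambda\delta)}e^{-\|X-Y\|/\delta}$ gives $g_\delta(\lambda)\le\|f\|_\infty-\lambda\overline{\varepsilon}$, while $\sup_{\lambda>0}g_\delta(\lambda)\ge g_\delta(1)\ge-\varepsilon-\|f\|_\infty$; since $\overline{\varepsilon}\to\varepsilon>0$ as $\delta\downarrow0$, the suprema are attained on a common interval $(0,\Lambda]$ for all small $\delta$. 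Second, you are right that the collapse of $\operatorname{ess\,inf}_{\nu}$ to $\inf_{y\in\X}$ — equivalently, that restricting to $\PP\ll\nu$ does not shrink the Wasserstein worst case in the limit — needs $\nu$ to have full support on $\X$. This is not among the paper's standing assumptions and is precisely the condition hidden in the citation to \cite{wang2021sinkhorn}; the paper's own Figure~\ref{fig:worst_CDF_beta} (with $\nu=\mathrm{Beta}(1,5)$) illustrates that the conclusion fails without it, so your flagging of this hypothesis is a genuine improvement in precision over the paper's one-line proof.
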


\begin{proof}
    This follows from \cite[Appendix EC.4]{wang2021sinkhorn} and from \eqref{eq:HQ_equals_Q}.
\end{proof}

In other words, as $\delta$ approaches $0$, the Bellman operator $\HH_\delta$ converges to the Bellman operator $\HH_0$ associated with the Wasserstein ball.
In the classical dual formulation of the Wasserstein ball robust Bellman operator, the computation of $\HH_0 Q_0^*$ involves the computation of the so called $\lambda -c$ transform\footnote{The $\lambda-c$ transform of a function $f: \X \rightarrow \R$ is given by $f^{\lambda c}(x):= \sup_{y \in \X}\{f(y)-\lambda c(x,y)~|~f(y)<\infty\}$, $x \in \X$, for some distance function $c:\X \times \X \rightarrow \R$.} on the dual side of the problem (compare \cite[Remark 2.1]{bartl2020computational}), which in most cases for non-finite state spaces $\X$ is intractable (\cite[Remark 3]{wang2021sinkhorn}).
The use of a small regularisation parameter $\delta$ allows us to avoid this issue and instead approximate the solution using the Sinkhorn distance.

Due to the assertion from Proposition~\ref{prop:dual}, our main goal will be to minimise the error between
\begin{equation*}
	\sup_{\lambda >0} \bigg\{-\lambda {\varepsilon}- \lambda \delta \E_{X_1^{\PP} \sim \widehat{\PP}(x,a)}\left[\log\left(\E_{X_1^{\nu} \sim \nu}\left[\exp\left(\tfrac{-r(x,a,X_1^{\nu})-\alpha \sup_{b \in A}Q_\delta^*(X_1^{\nu},b)-\lambda \| X_1^{\PP}-X_1^{\nu}\|}{\lambda \delta}\right)\right]\right)\right]\bigg\}
\end{equation*}
and $Q_\delta^*(x,a)$ with respect to $Q_\delta^*: \X \times A \rightarrow \R$ which still is an infinite dimensional intractable optimisation problem due to our setting of continuous states.
Thus, to finally obtain a numerically tractable, finite-dimensional optimisation problem, we parameterize $Q_\delta^*(x,a)$ by fully connected feed-forward neural networks.

\subsection{\texorpdfstring{$Q$}{}-learning with Neural Networks}

To introduce neural networks, we follow closely the presentation in \cite{neufeld2024neural}.
By fully connected feed-forward \textit{neural networks} (or simply neural networks for brevity) with input dimension $d_{\operatorname{in}} \in \N$, output dimension $d_{\operatorname{out}} \in \N$, and number of layers $l \in \N$ we refer to functions of the form
\begin{equation} \label{eq_nn_function}
	\begin{split}
		\R^{d_{\operatorname{in}}} &\rightarrow \R^{d_{\operatorname{out}}}\\
		{x} &\mapsto {A_l} \circ {\varphi}_l \circ {A_{l-1}} \circ \cdots \circ {\varphi}_1 \circ {A_0}({x}),
	\end{split}
\end{equation}
where $({A_i})_{i=0,\dots,l}$ are affine\footnote{This means for all $i=0,\dots,l$, the function ${A_i}$ is assumed to have an affine structure of the form
	$
	{A_i}({x})={M_i} {x} + {b_i}
	$
	for some matrix ${M_i} \in \R^{ h_{i+1} \times h_{i}}$ and some vector ${b_i}\in \R^{h_{i+1}}$, where $h_0:=d_{\operatorname{in}}$ and $h_{l+1}:=d_{\operatorname{out}}$. } functions of the form
\begin{equation} \label{eq_A_i_def}
	{A_0}: \R^{d_{{\operatorname{in}}}} \rightarrow \R^{h_1},\qquad {A_i}:\R^{h_i}\rightarrow \R^{h_{i+1}}\text{ for } i =1,\dots,l-1, \text{(if } l>1), \text{ and}\qquad {A_l} : \R^{h_l} \rightarrow \R^{d_{\operatorname{out}}},
\end{equation}
and where the function $\varphi_i$ is applied componentwise, i.e., for  $i=1,\dots,l$ we have ${\varphi}_i(x_1,\dots,x_{h_i})=\left(\varphi(x_1),\dots,\varphi(x_{h_i})\right)$.  The function $\varphi:\R \rightarrow \R$  is called \emph{activation function} and assumed to be continuous and non-polynomial.
We say a neural network is \emph{deep} if $l\geq 2$.
Here ${h}=(h_1,\dots,h_{l}) \in \N^{l}$ denotes the dimensions (the number of neurons) of the hidden layers, also called \emph{hidden dimension}.

Then, we denote by $\mathfrak{N}_{d_{\operatorname{in}},{d_{\operatorname{out}}}}^{l,{h}}$  the set of all neural networks with input dimension ${d_{\operatorname{in}}}$, output dimension ${d_{\operatorname{out}}}$, $l$ hidden layers, and hidden dimension ${h}$, whereas
the set of all neural networks from $\R^{d_{\operatorname{in}}}$ to $\R^{d_{\operatorname{out}}}$ (i.e.,\ without specifying the number of hidden layers and hidden dimension) is denoted by
\begin{equation*}
	\mathfrak{N}_{d_{\operatorname{in}},{d_{\operatorname{out}}}}:=\bigcup_{l \in \N}\bigcup_{{h} \in \N^l}\mathfrak{N}_{d_{\operatorname{in}},{d_{\operatorname{out}}}}^{l,{h}}.
\end{equation*}

It is well-known that the set of neural networks possess the so-called \textit{universal approximation property}, see, e.g., \cite{pinkus1999approximation}.

\begin{prop}[Universal approximation theorem] \label{prop:universal}
	For any compact set $\K \subset \R^{d_{\operatorname{in}}} $ the set $\mathfrak{N}_{d_{\operatorname{in}},{d_{\operatorname{out}}}}|_{\K}$ is dense in ${C}(\K,\R^{d_{\operatorname{out}}})$ with respect to the topology of uniform convergence on $C(\K,\R^{d_{\operatorname{out}}})$.
\end{prop}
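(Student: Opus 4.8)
The statement is the classical universal approximation theorem, and the plan is to reduce it to the single-hidden-layer version (of Leshno--Lin--Pinkus--Schocken; see also \cite{pinkus1999approximation}) and then sketch a proof of the latter. Two reductions come first. Since $\mathfrak{N}_{d_{\operatorname{in}},d_{\operatorname{out}}}$ is a union over all depths $l \in \N$, it is enough to approximate within the single-hidden-layer networks $x \mapsto M_1\varphi(M_0 x + b_0) + b_1$. The vector-valued case then follows from the scalar one: given $f = (f_1,\dots,f_{d_{\operatorname{out}}}) \in C(\K,\R^{d_{\operatorname{out}}})$, approximate each component $f_j$ uniformly on $\K$ by a scalar single-hidden-layer network $\sum_{i=1}^{h_j} c_i^{(j)}\varphi(w_i^{(j)}\cdot x + b_i^{(j)}) + c_0^{(j)}$, then concatenate all hidden units into one hidden layer of width $\sum_j h_j$ and read off the $j$-th output through the row of $M_1$ supported on the $j$-th block (with $b_1 = (c_0^{(j)})_j$); this is a single element of $\mathfrak{N}_{d_{\operatorname{in}},d_{\operatorname{out}}}$ approximating $f$ uniformly. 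Everything thus reduces to showing that, for continuous non-polynomial $\varphi$, the linear span $\mathcal{N}_\varphi := \operatorname{span}\{x \mapsto \varphi(w\cdot x + b) : w \in \R^{d_{\operatorname{in}}},\, b \in \R\}$ is dense in $C(\K,\R)$ (the scalar single-hidden-layer networks contain $\mathcal{N}_\varphi$, so density of $\mathcal{N}_\varphi$ suffices).

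For the core step I would first handle $d_{\operatorname{in}} = 1$. If $\varphi \in C^\infty$, the difference quotients $h^{-1}\big(\varphi((w+h)x+b) - \varphi(wx+b)\big)$ converge to $x\,\varphi'(wx+b)$ uniformly on compact sets, and iterating this shows that $x \mapsto x^k\varphi^{(k)}(wx+b)$ lies in the closure of $\mathcal{N}_\varphi$ for every $k$; taking $w = 0$ yields $x \mapsto x^k\varphi^{(k)}(b)$ for every $b$. The non-polynomial lemma --- $\varphi$ is a polynomial if and only if for every $b$ there is an order $k$ with $\varphi^{(k)}(b) = 0$, proved by applying Baire's category theorem to the closed sets $\{b : \varphi^{(k)}(b) = 0\}$ --- then provides, for each $k$, a point $b_k$ with $\varphi^{(k)}(b_k) \neq 0$, so every monomial $x^k$ lies in $\overline{\mathcal{N}_\varphi}$, and Weierstrass approximation gives $\overline{\mathcal{N}_\varphi} = C(\K,\R)$. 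For merely continuous $\varphi$, mollify: for $\psi \in C_c^\infty(\R)$ the convolution $\varphi * \psi$ is $C^\infty$ and, more generally, $x\mapsto (\varphi*\psi)(wx+b)$ is a uniform-on-compacts limit of Riemann sums $\sum_i \varphi(w\,\cdot + (b-y_i))\psi(y_i)\Delta_i$, each lying in $\mathcal{N}_\varphi$, so $\mathcal{N}_{\varphi*\psi}\subseteq\overline{\mathcal{N}_\varphi}$; choosing $\psi$ so that $\varphi*\psi$ is non-polynomial (otherwise $\varphi$ itself would be a polynomial) and applying the smooth case to $\varphi*\psi$ finishes the one-dimensional statement.

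To pass to general $d_{\operatorname{in}}$ I would use a ridge-function argument. The span of the ridge functions $\{x \mapsto g(w\cdot x) : w \in \R^{d_{\operatorname{in}}},\, g \in C(\R)\}$ is dense in $C(\K,\R)$: by polarization every monomial $x^\alpha$ is a finite linear combination of powers $(w\cdot x)^{|\alpha|}$ for suitably chosen directions $w$, so the span contains all polynomials, and Stone--Weierstrass concludes. Finally, each ridge function $x \mapsto g(w\cdot x)$ is, by the one-dimensional result applied to $t \mapsto g(t)$ on the compact interval $\{w\cdot x : x \in \K\}$, a uniform limit of combinations $\sum_i c_i\varphi(u_i t + b_i)$, which become elements of $\mathcal{N}_\varphi$ after substituting $t = w\cdot x$ (the weight vectors being $u_i w$). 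Chaining the two approximations gives density of $\mathcal{N}_\varphi$ in $C(\K,\R)$, and unwinding the reductions proves the proposition.

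The main obstacle is the one-dimensional density step, and inside it the non-polynomial lemma: this is the only genuinely non-elementary ingredient (it rests on Baire category) and it is precisely where polynomial activations are excluded. A secondary technical point is the mollification argument --- one must check both that the shifted convolutions are approximable inside $\mathcal{N}_\varphi$ uniformly on the compacts at hand and that non-polynomiality of $\varphi$ forces non-polynomiality of $\varphi*\psi$ for some $\psi$. The depth and output-dimension bookkeeping is routine but should be written so that the final approximant is literally a member of $\mathfrak{N}_{d_{\operatorname{in}},d_{\operatorname{out}}}$.
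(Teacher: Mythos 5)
Your proposal is correct, and it coincides with the paper's treatment: the paper offers no proof of Proposition~\ref{prop:universal}, deferring entirely to the cited survey \cite{pinkus1999approximation}, and your sketch is a faithful reconstruction of exactly that argument (reduction to one hidden layer and scalar output, the difference-quotient/Baire-category argument for smooth non-polynomial activations, mollification for merely continuous ones, and the ridge-function plus Stone--Weierstrass step to pass to general input dimension). The only remark worth adding is that the step you flag as the main obstacle is slightly cheaper than you suggest: to place each monomial $x^k$ in the closure you only need, for each fixed $k$, some $b_k$ with $\varphi^{(k)}(b_k)\neq 0$, which follows immediately from $\varphi^{(k)}\not\equiv 0$ for a non-polynomial $\varphi$; the Baire-category lemma is needed only for the stronger single-point version.
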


Since $\X \times A  \rightarrow Q_\delta^*(x,a)$ is a continuous function (see Lemma~\ref{lem:Q_continuous}), we can approximate $Q_\delta^*$ arbitrarily well on compact sets with neural networks.
Our goal will then be to solve the following optimisation problem.

\begin{opt} \label{opt:problem1}
	Given some tolerance $\rm TOL>0$, find $Q^*_{\rm NN} \in \mathfrak{N}_{d \cdot m ,1}$ such that
	\begin{align*}
		&\left|\HH_{\delta}Q^*_{\rm NN}(x,a) -Q^*_{\rm NN}(x,a)\right|<\rm TOL
	\end{align*}
	uniformly on $\X \times A$.
\end{opt}
As we will show in the subsequent Proposition~\ref{prop:solutions}, under mild assumptions, solutions $Q^*_{\rm NN}$ of  Optimisation Problem~\ref{opt:problem1} exist if $\X$ is bounded, and they approximate the optimal $Q$ function $Q_\delta^*$ arbitrarily well.
\begin{prop} \label{prop:solutions}
	Let $\varepsilon,\delta>0$ such that $\overline{\varepsilon} \geq 0$ as defined in Proposition~\ref{prop:dual}. Moreover, let  $\X$ be bounded, and let $\rm TOL>0$ be some tolerance. Recall that $\alpha$ is the discount factor which by Assumption~\ref{asu_2}~(iii) satisfies $0 < \alpha < 1$.
	\begin{itemize}
		\item[(i)] A solution $Q^*_{\rm NN} \in \mathfrak{N}_{d \cdot m ,1}$ of Optimisation Problem ~\ref{opt:problem1} exists for $\delta > 0$.

		\item[(ii)] Any solution $Q^*_{\rm NN} \in \mathfrak{N}_{d \cdot m ,1}$ of Optimisation Problem~\ref{opt:problem1} for $\delta > 0$ will also satisfy \\ $\sup_{(x,a) \in \X \times A}| Q^*_{\rm NN}(x,a)-Q_\delta{^*}(x,a)| \leq \frac{ {\rm TOL}}{1-\alpha}$

		\item[(iii)] There exists some $\delta'>0$ such that for all $|delta\in (0,\delta')$  solutions $Q^*_{\rm NN} \in \mathfrak{N}_{d \cdot m ,1}$ of Optimisation Problem~\ref{opt:problem1} w.r.t\, $\delta$ will also satisfy $\sup_{(x,a) \in \X \times A}| Q^*_{\rm NN}(x,a)-Q_0{^*}(x,a)| \leq 2 \cdot \frac{ {\rm TOL}}{1-\alpha}$.
	\end{itemize}
\end{prop}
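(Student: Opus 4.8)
The plan is to establish the three parts in sequence, with part (i) providing the existence backbone and parts (ii)–(iii) being stability estimates that follow from contraction-type arguments.

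For part (i), I would proceed as follows. Since $\X$ is bounded and closed (closed by the standing assumption that $\X\subseteq\R^d$ is closed), $\X$ is compact, hence $\X\times A$ is compact ($A$ is finite). By Lemma~\ref{lem:Q_continuous} the optimal $Q$-function $Q_\delta^*$ is continuous on the compact set $\X\times A$, so by the universal approximation theorem (Proposition~\ref{prop:universal}) there exists, for any $\eta>0$, a network $Q_{\rm NN}\in\mathfrak{N}_{d\cdot m,1}$ with $\sup_{(x,a)}|Q_{\rm NN}(x,a)-Q_\delta^*(x,a)|<\eta$. The key point is then to control $|\HH_\delta Q_{\rm NN}(x,a)-Q_{\rm NN}(x,a)|$ in terms of $\eta$. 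Writing $Q_\delta^*=\HH_\delta Q_\delta^*$ (Equation~\eqref{eq:HQ_equals_Q}), a triangle inequality gives
\begin{align*}
|\HH_\delta Q_{\rm NN}(x,a)-Q_{\rm NN}(x,a)|
&\leq |\HH_\delta Q_{\rm NN}(x,a)-\HH_\delta Q_\delta^*(x,a)| + |Q_\delta^*(x,a)-Q_{\rm NN}(x,a)|.
\end{align*}
So I need a nonexpansiveness (or Lipschitz) bound of the form $\|\HH_\delta Q-\HH_\delta Q'\|_\infty\leq \alpha\|Q-Q'\|_\infty$. This is the standard $\alpha$-contraction property of the (robust, regularised) Bellman operator: from the definition of $\HH_\delta$ as a sup over $a$ of an inf over $\PP\in\Sball$ of $\E_\PP[r+\alpha\sup_b Q(X_1,b)]$, and using that $|\sup_b Q(y,b)-\sup_b Q'(y,b)|\leq\|Q-Q'\|_\infty$ pointwise, together with the elementary facts that $\sup$ and $\inf$ are nonexpansive, one gets $|\HH_\delta Q-\HH_\delta Q'|\leq\alpha\|Q-Q'\|_\infty$. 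Hence choosing $\eta$ so that $(1+\alpha)\eta<\rm TOL$ (e.g. $\eta=\rm TOL/2$) yields a network solving Optimisation Problem~\ref{opt:problem1}. The main obstacle here is really just making sure the contraction estimate is clean; one subtlety is that $\HH_\delta Q$ for an arbitrary continuous $Q$ must itself be well-defined and finite, which follows since $r$ is bounded (Assumption~\ref{asu_2}(i)), $Q$ is bounded on the compact $\X\times A$, and the Sinkhorn ball is nonempty by Assumption~\ref{asu_3}.

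For part (ii), let $Q_{\rm NN}$ be any solution of Optimisation Problem~\ref{opt:problem1}. Then $\|Q_{\rm NN}-Q_\delta^*\|_\infty = \|Q_{\rm NN}-\HH_\delta Q_\delta^*\|_\infty \leq \|Q_{\rm NN}-\HH_\delta Q_{\rm NN}\|_\infty + \|\HH_\delta Q_{\rm NN}-\HH_\delta Q_\delta^*\|_\infty \leq \rm TOL + \alpha\|Q_{\rm NN}-Q_\delta^*\|_\infty$, using the tolerance bound and the $\alpha$-contraction from part (i). Rearranging gives $\|Q_{\rm NN}-Q_\delta^*\|_\infty\leq \rm TOL/(1-\alpha)$, which is exactly the claim. (I should double check that $Q_{\rm NN}$ and $Q_\delta^*$ are both bounded so the subtraction and the rearrangement are valid — again this holds because $r$ is bounded and $\alpha<1$, giving $\|Q_\delta^*\|_\infty\leq\|r\|_\infty/(1-\alpha)$, and $Q_{\rm NN}$ is continuous on a compact set.)

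For part (iii), the idea is to combine part (ii) with the convergence $\HH_\delta Q_0^*\to\HH_0 Q_0^* = Q_0^*$ as $\delta\downarrow 0$ from Corollary~\ref{cor:convergence_delta}, upgraded to uniform convergence on $\X\times A$. Since $A$ is finite, pointwise convergence in $a$ is automatic; the work is to get uniformity in $x$, for which I would invoke the structure of the dual expression in Proposition~\ref{prop:dual} together with compactness of $\X$ and the uniform bounds on $r$ and $Q_0^*$ — or, more cleanly, cite that the relevant estimates in \cite[Appendix EC.4]{wang2021sinkhorn} are uniform under the compactness and boundedness hypotheses in force here. Granting that, pick $\delta'>0$ so that $\sup_{(x,a)}|\HH_\delta Q_0^*(x,a)-Q_0^*(x,a)|\leq \rm TOL/(1-\alpha)$ for all $\delta\in(0,\delta')$. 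Then for such $\delta$ and any solution $Q_{\rm NN}$ w.r.t. $\delta$,
\begin{align*}
\|Q_{\rm NN}-Q_0^*\|_\infty &\leq \|Q_{\rm NN}-Q_\delta^*\|_\infty + \|Q_\delta^*-Q_0^*\|_\infty \leq \frac{\rm TOL}{1-\alpha} + \|Q_\delta^*-Q_0^*\|_\infty,
\end{align*}
and it remains to bound $\|Q_\delta^*-Q_0^*\|_\infty\leq \rm TOL/(1-\alpha)$. For this, note $Q_\delta^*$ and $Q_0^*$ are the fixed points of $\HH_\delta$ and $\HH_0$ respectively, so $\|Q_\delta^*-Q_0^*\|_\infty = \|\HH_\delta Q_\delta^* - \HH_0 Q_0^*\|_\infty \leq \|\HH_\delta Q_\delta^*-\HH_\delta Q_0^*\|_\infty + \|\HH_\delta Q_0^*-\HH_0 Q_0^*\|_\infty \leq \alpha\|Q_\delta^*-Q_0^*\|_\infty + \sup_{(x,a)}|\HH_\delta Q_0^*-Q_0^*|$, where I used $\HH_0 Q_0^* = Q_0^*$ and the $\alpha$-contraction of $\HH_\delta$; rearranging gives $\|Q_\delta^*-Q_0^*\|_\infty\leq (1-\alpha)^{-1}\sup_{(x,a)}|\HH_\delta Q_0^*-Q_0^*|\leq \rm TOL/(1-\alpha)$ for $\delta<\delta'$. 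Combining the two bounds yields $\|Q_{\rm NN}-Q_0^*\|_\infty\leq 2\,\rm TOL/(1-\alpha)$, as claimed. The main obstacle in the whole proof is establishing the uniform (rather than merely pointwise) convergence $\HH_\delta Q_0^*\to Q_0^*$ needed to select a single $\delta'$ — everything else is the standard contraction-mapping bookkeeping.
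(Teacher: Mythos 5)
Your parts (i) and (ii) are correct and follow the same strategy as the paper: approximate $Q_\delta^*$ uniformly on the compact set $\X\times A$ via Proposition~\ref{prop:universal}, establish that $\HH_\delta$ is an $\alpha$-contraction in the supremum norm, and then run the triangle-inequality/rearrangement bookkeeping. One genuine difference: you derive the contraction on the primal side, from $\bigl|\sup_b Q(y,b)-\sup_b Q'(y,b)\bigr|\le\|Q-Q'\|_\infty$ together with nonexpansiveness of $\inf_{\PP\in\Sball}$ and $\sup_{a}$; the paper instead derives it from the dual representation of Proposition~\ref{prop:dual}, via a change of measure to $\Q_{x,\delta}$ and a log-sum-exp estimate (see \eqref{eq:HQ_contraction}). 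Your primal route is more elementary and reaches the same constant $\alpha$; the paper's dual route has the advantage of validating the contraction directly on the object the algorithm actually computes, but for the purposes of this proposition the two are interchangeable.

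Part (iii) has a gap and a small slip. The gap: you correctly identify that the whole argument hinges on upgrading the pointwise convergence $\HH_\delta Q_0^*\to\HH_0 Q_0^*=Q_0^*$ of Corollary~\ref{cor:convergence_delta} to \emph{uniform} convergence on $\X\times A$, but you leave the mechanism unspecified (``invoke the structure of the dual expression \dots or cite that the estimates are uniform''). The paper supplies a concrete and short mechanism you should adopt: for $\delta'>\delta$ the Sinkhorn balls are nested, $\mathcal{B}_{\varepsilon,\delta'}(\widehat\PP(x,a))\subseteq\Sball(\widehat\PP(x,a))$, so $\delta\mapsto\HH_\delta Q_0^*(x,a)$ is monotone as $\delta\downarrow 0$; since the limit $Q_0^*$ is continuous (Lemma~\ref{lem:Q_continuous}) and $\X\times A$ is compact, Dini's theorem gives uniform convergence. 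Without some such argument the selection of a single $\delta'$ is unjustified. The slip: you choose $\delta'$ so that $\sup_{(x,a)}|\HH_\delta Q_0^*-Q_0^*|\le{\rm TOL}/(1-\alpha)$, but your fixed-point estimate $\|Q_\delta^*-Q_0^*\|_\infty\le(1-\alpha)^{-1}\sup_{(x,a)}|\HH_\delta Q_0^*-Q_0^*|$ then only yields ${\rm TOL}/(1-\alpha)^2$; to get the claimed ${\rm TOL}/(1-\alpha)$ you need the threshold $\sup_{(x,a)}|\HH_\delta Q_0^*-Q_0^*|\le{\rm TOL}$. This is cosmetic once uniform convergence is in hand, since the threshold can be taken as small as desired. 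Your overall decomposition in (iii) (through $Q_\delta^*$, using part (ii) plus a fixed-point comparison of $\HH_\delta$ and $\HH_0$) is a legitimate alternative to the paper's single three-term triangle inequality followed by one rearrangement, and yields the same constant $2\,{\rm TOL}/(1-\alpha)$.
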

Proposition~\ref{prop:solutions} shows three results regarding solutions to Optimisation Problem~\ref{opt:problem1} in the case $\X$ is bounded.
Firstly, it shows the existence of a neural network which can approximate a function that will satisfy the condition of Optimisation Problem~\ref{opt:problem1}.
Note that this is not simply showing that the neural network can approximate $Q_\delta^*$, but rather that there is a neural network that can approximate a function such that applying the robust Bellman operator $\HH_\delta$ will yield a function that is close to the original function in the sense of the uniform norm for all state-action pairs from the compact set $\X \times A$.
Secondly, we show that if we find a neural network fulfilling the condition of Optimisation Problem~\ref{opt:problem1}, then it will also be close to $Q_\delta^*$ justifying the fomulation of Optimisation Problem~\ref{opt:problem1}.
Thirdly, we can find a $\delta>0$ such that if we have a solution to Optimisation Problem~\ref{opt:problem1}, it will approximate $Q_0^*$, which is the optimal $Q$ function for the Wasserstein case.

The results rely on Proposition~\ref{prop:universal} which ensures the neural network can approximate the $Q_\delta^*$ arbitrarily well hence the condition for a bounded $\X$.
However, it is not necessarily the case that a neural network cannot approximate $Q_\delta^*$ for unbounded $\X$ as this is an area of ongoing research (see, e.g., \cite{neufeld2024universal}).

\subsection{The Robust DQN Algorithm} \label{sec:robustdqn}

In the following, to emphasize the dependence of neural networks on its parameters, i.e., its weights and biases, we write $Q^*_{\rm NN}(\theta; x,a)= Q^*_{\rm NN}(x,a)$ for a neural network $Q^*_{\rm NN} \in \mathfrak{N}_{d \cdot m ,1}$ evaluated at a state action pair $(x,a) \in \X \times A$, given the parameter $\theta \in \Theta$, where $\Theta$ denotes the Euclidean set of all possible parameters of the neural network, i.e., the possible choices of weights and biases. Motivated by Proposition~\ref{prop:dual}, our goal to solve Optimisation Problem~\ref{opt:problem1} now becomes minimising the following loss function
\begin{equation} \label{eq:Loss}
	L(\theta;(x,a)):= \left(\HH_{\delta}Q^*_{\rm NN}(\theta;x,a) -Q^*_{\rm NN}(\theta;x,a)\right)^2
\end{equation}
for any state action pair $(x,a) \in \X \times A$, with respect to $\theta\in \Theta$.
Given a batch of states $x^B:=(x_i)_{i=1,\dots,B} \subset \X$ and actions $a^B:=(a_i)_{i=1,\dots,B} \subset A$, we obtain the loss function on the batch as
\begin{equation} \label{eq:Loss_Batch}
	L^B(\theta;(x^B,a^B)):= \frac{1}{B}\sum_{i=1}^B\left(\HH_{\delta}Q^*_{\rm NN}(\theta;x_i,a_i) -Q^*_{\rm NN}(\theta;x_i,a_i)\right)^2
\end{equation}

Our setting of continuous state space and finite action space is the same as in \cite{mnih2015human} where the authors propose the Deep Q-Network (DQN) algorithm to learn the optimal $Q$ function.
We show how the DQN algorithm can be modified to the robust case in Algorithm ~\ref{algo:robustdqn}.
The DQN algorithm uses a experience replay buffer to store transitions $(x_t,a_t,r_t,x_{t+1})$, where $r_t=r(x_t,a_t,x_{t+1})$, and samples a batch of transitions to compute targets that are then used to update the neural network approximating the $Q$ function.
For the Robust DQN (RDQN) algorithm, we store transitions obtained from interacting with the environment which is assumed to be following the reference distribution $\widehat{\PP}$.
The targets in the DQN algorithm are also computed using a target network, which is a copy of the neural network approximating the $Q$ function.
The target network is updated less frequently than the neural network approximating the $Q$ function for better stability as described in \cite[Training algorithm for deep Q-networks]{mnih2015human}.

The main modification to DQN is to use Proposition~\ref{prop:dual} to calculate the targets $\HH_{\delta}Q^*_{\rm NN}$ to update $Q^*_{\rm NN}$.
From the replay buffer, only the next state $x_{t+1}$ and action $a_t$ are needed to calculate the modified target.
In particular, the outer expectation in Proposition~\ref{prop:dual} will be approximated by the single unbiased sample of the next state $x_t$ and the action $a_t$.
The inner expectation can be approximated by sampling multiple times from the distribution $\nu$.
Note that the RDQN algorithm requires the reward function to be known to calculate the target which is not the case in the original DQN algorithm.

This modification leads then to the numerical routine summarized in Algorithm ~\ref{algo:robustdqn} for computing the target and updating the neural network.
We omit the details of the agent interacting with the environment, the experience replay buffer, the updating of the target network, and refer the reader to \cite[Algorithm 1]{mnih2015human} for more details.
To be clear, Algorithm \ref{algo:robustdqn} replaces the sampling of the minibatch and the gradient step of \cite[Algorithm 1]{mnih2015human}.

\begin{algorithm}[h!]
	\SetAlgoLined
	\SetNoFillComment
	\SetKwInOut{Input}{Input}
	\SetKwInOut{Output}{Output}
	\SetKwFor{While}{while}{do}{endwhile}

	\Input{
	State space $\mathcal{X} \subset \R^d$;
	Control space $A \subset \R^m$;
	Reward function $r$;
	Discount Factor $\alpha \in (0,1)$;
	Sinkhorn distance $\varepsilon>0$;
	Parameter $\delta$ for the Sinkhorn regularisation;
	Number of gradient steps per update $N_{Q}$;
	Number of samples from $\nu$ to approximate the inner expectation $N_{\nu}$;
	Sampling distribution $\nu$;
	Batch size $B$;
	$Q$ function neural network $Q_{\rm NN }(\theta_0;\cdot, \cdot )$ with current parameters $\theta_0$;
	$Q$ function target network $Q_{\rm NN }(\theta_{\text{target}};\cdot, \cdot )$ with current parameters $\theta_{\text{target}}$;
	}

	\For{$m =1,\dots,N_{Q}$}{

	Sample a batch of transitions $(x_i,a_i,x^{\PP}_i)$ for $i=1,\dots,B$ where $x_i$ is the current state, $x^{\PP}_i$ is the next state and $a_i$ is the action taken;

    Initialize ${\bf \lambda}:=(\lambda_1,\dots,\lambda_B) \in \R^B$ or use the previous cached values $\lambda_i^{\text{cache}}$ for $i=1,\dots,B$ if available;
    \tcp{(See Section \ref{sec:cache_lambda})}

	Sample $x_{i,j}^{\nu}\sim \nu$ for $j=1,\dots,N_{\nu}$, $i=1,\dots,B$;
	\tcp{(See Section \ref{sec:sampling_from_nu})}

    Compute $\bar{\varepsilon}_i = \varepsilon + \delta \log\left(\frac{1}{N_{\nu}} \sum_{j=1}^{N_{\nu}} e^{\frac{-\|x_i^{\PP}-x_{i,j}^{\nu}\|}{\delta}}\right)$ for $i=1,\dots,B$ and raise a warning if $\bar{\varepsilon}_i < 0$;
    \tcp{(See Section \ref{sec:warning})}

    \While{$ \nabla_{\lambda_i} \widehat{\mathcal{H}_\delta} Q_{{\rm NN} } (\lambda_i, \theta_{m-1}; x_i, a_i) \text{ does not change sign}$ \tcp{(See Section \ref{sec:optimisation_of_lambda})}}{

        Compute $C_{i,j} = \tfrac{-r(x_i,a_i,x_{i,j}^{\nu})-\alpha \sup_{b \in A}Q_{{\rm NN} }(\theta_{\text{target}};X_{i,j}^{\nu},b)-\lambda_i \| x_{i}^{\PP}-x_{i,j}^{\nu}\|}{\lambda_i \delta}$ for $j=1,\dots,N_{\nu}$, $i=1,\dots,B$;
        \tcp{(See Section \ref{sec:exponential_term})}

        Set $C_{i} = \displaystyle\max_{j} C_{i,j}$ for $j=1,\dots,N_{\nu}$, $i=1,\dots,B$;

        Compute
        \begin{equation*}
            \widehat{\mathcal{H}_\delta} Q_{{\rm NN} } (\lambda_i, \theta_{\text{target}}; x_i, a_i) =
            -\lambda_i^+ {\varepsilon}
            - \lambda_i^+ \delta
            \left[
                C_{i} + \log
                    \left(
                        \frac{1}{N_{\nu}} \sum_{j=1}^{N_{\nu}} e^{\left(C_{i,j} - C_{i}\right)}
                    \right)
            \right]
        \end{equation*}

        where $\lambda_i^+ = \log(1+e^{\lambda_i})$ for $i=1,\dots,B$;

		Gradient step on $\sum_{i=1}^{B} \widehat{\mathcal{H}_\delta} Q_{{\rm NN} }(\lambda_i, \theta_{m-1}; x_i, a_i)$ w.r.t.\ ${\bf \lambda} \in \R^B$ to maximise the value;
	}

    Store the values of $\lambda_i$ for $i=1,\dots,B$ in the cache $\lambda_i^{\text{cache}}$ for samples $i=1,\dots,B$;

	Take gradient step on
	$
	\frac{1}{B} \sum_{i=1}^{B} \left(\widehat{\mathcal{H}_\delta} Q_{{\rm NN} }(\lambda_i, \theta_{\text{target}}; x_i, a_i)-Q_{\rm NN }(\theta_{m-1};x_i,a_i)\right)^2
	$ w.r.t.\,$\theta_{m-1}$ to minimise the value and update parameters to $\theta_{m}$;
	}
	\caption{Robust DQN}\label{algo:robustdqn}
\end{algorithm}

\subsubsection{Optimisation of $\lambda$} \label{sec:cache_lambda}

The optimisation of $\lambda$ is done using stochastic gradient ascent and is the most expensive part of the algorithm.
Therefore, it helps to speed up the process by caching optimised values of $\lambda$ for the samples.
As long as the target network is not updated, the same $\lambda$ value will optimise the target for the same sample.
Even if the target network is updated, the $\lambda$ values are likely to be closer to the previous cached values than the chosen initiliasation, especially in the latter stages of training as the $Q$ network converges.

\subsubsection{Stratified Sampling From $\nu$} \label{sec:sampling_from_nu}

For variance reduction, we can use stratified sampling to sample from the distribution $\nu$.
If we know the inverse cumulative distribution function of $\nu$, we can sample from the distribution by first obtaining a stratified sample from the uniform distribution on $(0,1)$ and applying the inverse cumulative distribution function.
For example, if we need $n$ samples from the Gaussian distribution, we first get a stratified sample $x=(\frac{1}{n+1},\frac{2}{n+1},\dots,\frac{n}{n+1})$ and then apply the inverse cumulative distribution function of the Gaussian distribution to get $z=(\Phi^{-1}(\frac{1}{n+1}),\Phi^{-1}(\frac{2}{n+1}),\dots,\Phi^{-1}(\frac{n}{n+1}))$.

\subsubsection{Warning for $\bar{\varepsilon} < 0$} \label{sec:warning}

Recall that we have the condition $\overline{\varepsilon} \geq 0$ in Proposition~\ref{prop:dual}.
As we are computing the expectation under the reference measure $\widehat{\PP}(x,a)$ with only one sample, this can lead to the case where $\overline{\varepsilon} < 0$ when an outlier sample is chosen leading to an extreme bias in the estimated expectation even though under the true expectation $\overline{\varepsilon} \geq 0$.

The likelihood of this happening depends on the choice of $\varepsilon, \delta$ and $\nu$ relative to the reference measure $\widehat{\PP}(x,a)$ and also the number of samples $N_{\nu}$ used to approximate the inner expectation.
When $\overline{\varepsilon} < 0$, the duality in Proposition~\ref{prop:dual} does not hold and the resulting target is not valid hence it requires the user to adjust the hyperparameters to avoid this.

A less desirable option is to remove the samples where $\overline{\varepsilon} < 0$.
Removing these samples will create a bias in the overall estimate of the expectation under the reference measure $\widehat{\PP}(x,a)$.
The size of the bias will be small if the the samples are outliers and the number of samples removed is small.

For our experiments, we chose to adjust the hyperparameters to avoid this issue.

\subsubsection{Optimisation of $\lambda$} \label{sec:optimisation_of_lambda}

The optimisation of $\lambda$ is done using stochastic gradient ascent and is the most expensive part of the algorithm.
Therefore, it helps to speed up the process by caching optimised values of $\lambda$ for the samples.
As long as the target network is not updated, the same $\lambda$ value will optimise the target for the same sample.
Even if the target network is updated, the $\lambda$ values are likely to be closer to the previous cached values than the chosen initiliasation, especially in the latter stages of training as the $Q$ network converges.

The dual optimisation problem for ${\mathcal{H}_\delta} Q_\delta^*$ is concave in $\lambda$ by construction \cite[Chapter 5]{boyd2004convex}.
For convenience, we use the change of the sign in the gradient as a stopping criterion for stochastic gradient ascent, which works well enough in our experiments.
However, any convex optimisation algorithm can be used to find the maximum of the dual function.

Due to our use of the softplus function $\lambda^+ = \log(1+e^{\lambda})$ to ensure $\lambda > 0$, the gradient of $\lambda$ gets close to zero when $\lambda<0$.
As we are using stochastic gradient ascent to find the maximum point, it can take a large number of iterations for the gradient to change sign when $\lambda<0$.
There are different ways to address this issue.
We chose to use a scheduler to increase the step size used for the gradient ascent as the number of iterations increases.

\subsubsection{The Exponential Term} \label{sec:exponential_term}

If the goal is to use the Wasserstein distance, then we would typically choose a small $\delta$ so that the Sinkhorn distance approximates the Wasserstein distance.
However, as $\delta \downarrow 0$ and we approximate the inner expectation in Proposition~\ref{prop:dual} with Monte Carlo estimates, the exponent in the inner expectation can easily grow large enough to cause machine overflow issues.
To avoid this, we can use the identity $\log \E[\exp(\frac{f(x)}{\delta})] = C + \log \E[\exp(\frac{f(x)}{\delta}-C)]$ where we choose $C = \max_x f(x)$.

\subsection{Worst Case Distribution and \texorpdfstring{$\delta$}{}} \label{sec:worst_case_dist}

Note that as we lower $\delta$, the Sinkhorn distance approaches the Wasserstein distance but the transport plan $\pi$ that realises the Sinkhorn distance $W_{\delta}$ becomes more sparse hence the worst case distribution tends to become more discrete in nature (see e.g. \cite[Remark 4]{wang2021sinkhorn}).
Therefore, even if the reference distribution is a smooth and continuous distribution, the worst case distribution may not be.
This may or may not be desirable depending on the specific application.

The regulariation term in the Sinkhorn distance allows us to control the smoothness of the worst case distribution by adjusting the parameter $\delta$.
Our definition using $\nu$ allows us to choose a prior, whereby if we increase $\delta$, we can make the worst case distribution more similar to $\nu$.

We illustrate this with a specially designed environment where we have an action space with a single action value and a continuous state space in the closed interval $[0,1]$.
From the equivalence in Equation~\eqref{eq:sup_Q_V} and since there is only a single action, the $Q$ function $Q_\delta(x,a)$ in this case is equivalent to the value function $V_\delta(x)$.

If we choose the reward function to be $r(x_0,a,x_1) = \one_{\{x_1 \leq x_0\}}$ and set the discount factor $\alpha$ close to zero, then from Equation~\eqref{eq:robust_problem_1}, we have
\begin{equation}
	\begin{split}
	V_\delta(x) &= \sup_{a \in A} \inf_{\PP \in \Sball\left(\widehat{\PP}(x_0,a)\right)} \E_{\PP} \bigg[r(x_0,a,X_1) + \alpha V_\delta(X_1) \bigg]
	\\
	&\approx\inf_{\PP \in \Sball\left(\widehat{\PP}(x_0,a)\right)} \E_{\PP}\left[\one_{\{X_1 \leq x_0\}}\right]
	\\
	&= \inf_{\PP \in \Sball\left(\widehat{\PP}(x_0,a)\right)} \PP\left(X_1\leq x_0\right)
	\end{split}
\end{equation}
which is the cumulative distribution function of the worst case distribution.
In this particular case, the worst case distribution would be one leading to high values of $X$.
Note that the worst case distribution is tied to the specific choice of reward function hence we cannot use this technique to find the worst case distribution for different environments.
However, we can still use it to see the effect of $\delta$ on the smoothness of the worst case distribution.

With a reference distribution $\widehat{\PP} = \text{Beta}(2,2)$ and a sampling distribution $\nu = \text{Uniform}(0,1)$, we set $\varepsilon=0.5$ while varying $\delta$ and show the resulting $Q$ function in Figure~\ref{fig:worst_CDF_uniform}.
When $\delta=10$, the high level of entropy regularisation forces the worst case distribution to be close to $\nu$ which is a uniform distribution in this case.
This means $\nu$ is effectively a prior for the worst case distribution whereby increasing $\delta$ pushes all distibutions in the Sinkhorn ball, including the worst case distribution, towards $\nu$.
As $\delta$ gets lowered towards $0.01$, the worst case distribution moves towards a much more discrete distribution with the regularisation starting to become negligible.

\begin{figure}[h]
    \centering
    \subfigure[$\delta = 10$]{
        \includegraphics[scale=0.26]{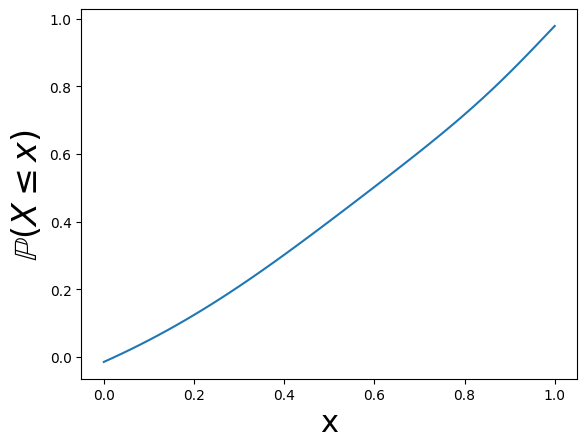}
    }
	\subfigure[$\delta = 1$]{
        \includegraphics[scale=0.26]{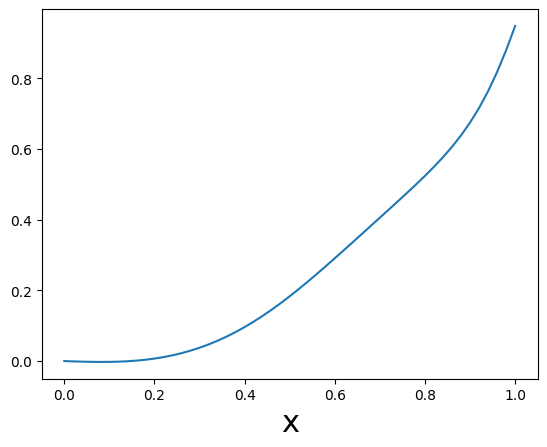}
    }
	\subfigure[$\delta = 0.1$]{
        \includegraphics[scale=0.26]{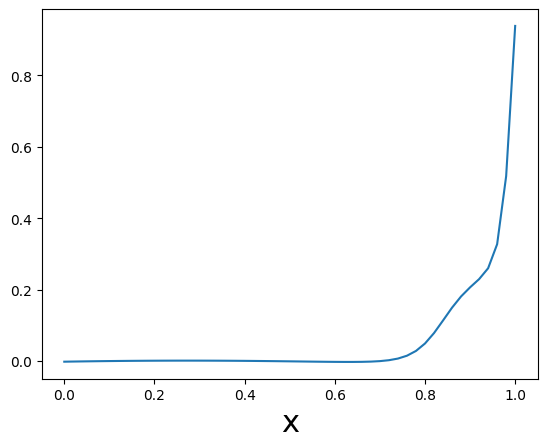}
    }
    \subfigure[$\delta = 0.01$]{
        \includegraphics[scale=0.26]{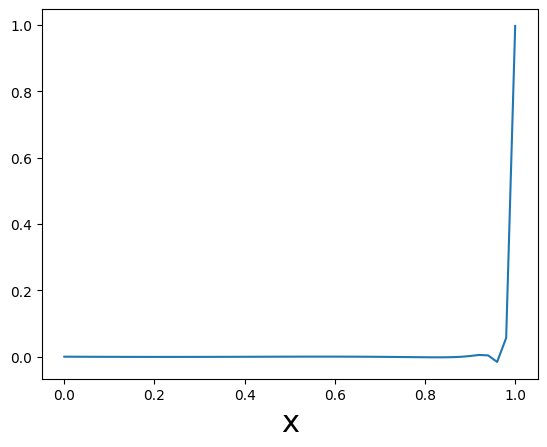}
    }
    \caption{The worst case cumulative distribution function for different values of $\delta$ with $\nu=$ Uniform$(0,1)$}
    \label{fig:worst_CDF_uniform}
\end{figure}

Choosing $\nu$ with the wrong support or where critical parts of the support of $\widehat{\PP}$ are low in probability can lead to the wrong worst case distribution as we see in Figure~\ref{fig:worst_CDF_beta}.
With a reference distribution $\widehat{\PP} = \text{Beta}(2,2)$ and a sampling distribution $\nu = \text{Beta}(1,5)$, we set the same $\varepsilon$ and see that even as $\delta$ is lowered, the worst case distribution does approach the distribution seen above when $\nu$ is a uniform distribution.
This is because larger values in the interval $[1,0]$ have very low probability of being sampled with $\nu = \text{Beta}(1,5)$.

\begin{figure}[h]
    \centering
    \subfigure[$\delta = 10$]{
        \includegraphics[scale=0.26]{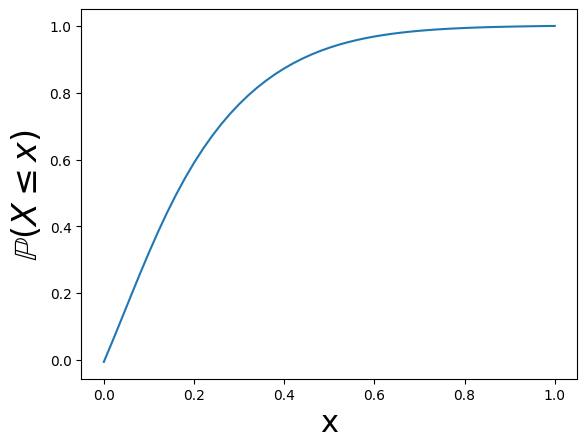}
    }
	\subfigure[$\delta = 1$]{
        \includegraphics[scale=0.26]{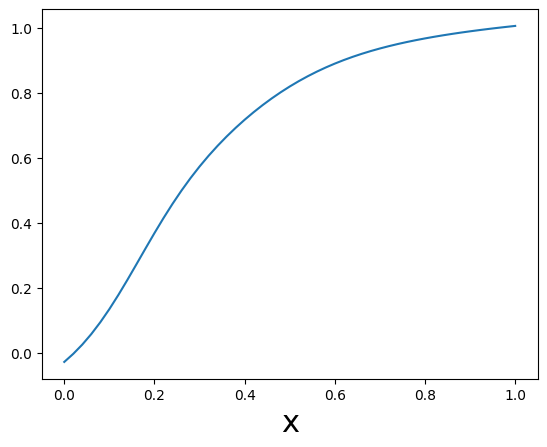}
    }
	\subfigure[$\delta = 0.1$]{
        \includegraphics[scale=0.26]{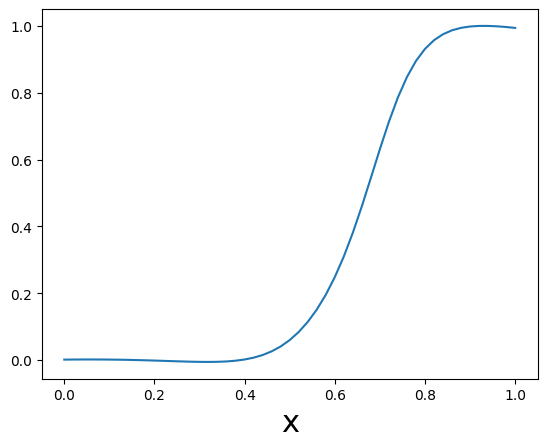}
    }
    \subfigure[$\delta = 0.01$]{
        \includegraphics[scale=0.26]{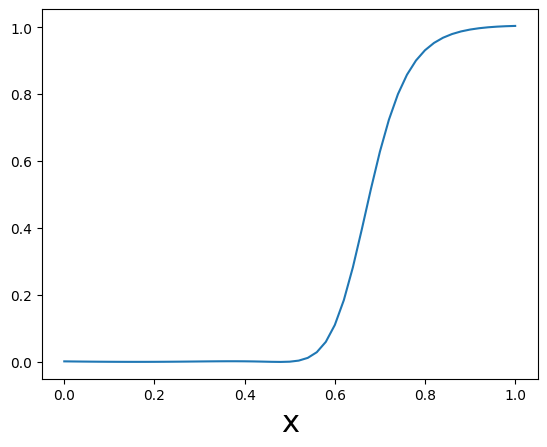}
    }
    \caption{The worst case cumulative distribution function for different values of $\delta$ with $\nu=$ Beta$(1,5)$}
    \label{fig:worst_CDF_beta}
\end{figure}

 \section{Applications} \label{sec:applications}
In this section we discuss applications of Algorithm~\ref{algo:robustdqn} and  show its tractability.

\subsection{Gambling on the Unit Square} \label{sec:toy}

We start illustrating our approach by letting an agent play an easy-to-understand environment.
In this environment, the state $X_t$ is a single number in the interval $[0,1]=:\X$.
The agent can choose to gamble on this number by choosing an action $a \in A:=\{-1,0,1\}$.
The reward is then given by
\begin{equation} \label{eq:toy_reward}
	r(x_0,a,x_1) = M a(x_1 - x_0)
\end{equation}
for some $M>0$.
In other words, the agent wins or loses the amount of money equal to the difference between the next state and current state multiplied by the action chosen and a positive constant $M$ (relevance to be explained subsequently).
The goal is to maximise the total reward.

In the following, we describe the \emph{true distribution} of the environment.
The initial state $X_0 \sim \text{Beta}(\alpha',\beta')$, i.e., it is distributed according to a Beta distribution with parameters $\alpha'=1.2,\beta'=2$.
The realised state $x_0$ and action $a$ are used to determine the parameters of the Beta distribution used to sample the next state.
Specifically, when $a \neq 0$, the next state, $X_1 \sim \text{Beta}(\alpha,\beta) =: \PP(x_0,a)$ where $\alpha=g(\alpha' - a x_0)$ and $\beta=g(\beta' + a(1 - x_0$)). The function $g(x) = \log(1+e^x)$ is the softplus function to ensure $\alpha,\beta > 0$.
Put simply, the agent's action and the current state combine to shift the $\alpha,\beta$ parameters used to sample the next state.
To be clear, for each state transition, the parameters $\alpha,\beta$ are always calculated using the original values $\alpha',\beta'$.
If $a=0$, then the next state is sampled from the initial distribution $\text{Beta}(\alpha',\beta') = \PP(x_0,0)$.

\subsubsection{Ambiguity in the distribution}

We create ambiguity in the \emph{true distribution} of the state transition in the following manner.
The agent knows that the initial state follows a Beta distribution but is not given the parameters.
The agent is also aware of how the actions affect the parameters of the Beta distribution in the state transition.
Before the game starts, 5 samples from the initial Beta distribution are given.
The agent will construct a \emph{reference distribution} by inferring a Beta distribution based on these 5 samples.
We use the method of moments to estimate $\alpha'$ and $\beta'$, which was shown to have better performance for small samples compared to maximum likelihood estimation with the Beta distribution \cite{ali2023investigating}.
In other words, we have $\widehat{\PP}(x_t,0) = $ Beta$(\hat{\alpha'},\hat{\beta'})$ where $\hat{\alpha'},\hat{\beta'}$ are the estimated parameters and $\widehat{\PP}(x_t,a) = $ Beta$(g(\hat{\alpha'}-a x_t),g(\hat{\beta'} + a(1-x_t)))$ for $a \neq 0$.
For $\nu$, we choose the uniform distribution on $[0,1]$ since it is a simple and valid choice for the state space.

\subsubsection{The case for robustness}

As the goal of robust optimisation is to ensure that the policy performs well under worst case scenarios, the result should be to reduce the occurrence of very unfavorable outcomes at the expense of reducing the reward under the most favorable outcomes.
Therefore, we would expect the RDQN algorithm to be at its most effective when the agent is penalised more heavily for the wrong action than it is rewarded for the right action.

To this end, we experiment by tweaking the reward function to penalise the agent more heavily for the wrong action.
We multiply the reward by a factor of 1, 5 and 10 \emph{only when it is negative} to create three different environments that vary in the severity of the penalty for the wrong action.
This means $M=1,5,10$ in \eqref{eq:toy_reward}.

\begin{lem} \label{lem:toy_assump}
	The game satisfies Assumptions \ref{asu_1} and~\ref{asu_2}.
\end{lem}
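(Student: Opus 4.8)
The plan is to verify the two assumptions separately, using throughout that the action set $A=\{-1,0,1\}$ is finite (and thus carries the discrete topology) and that the state space $\X=[0,1]$ is compact. For Assumption~\ref{asu_1}, finiteness of $A$ reduces continuity of $(x,a)\mapsto\widehat{\PP}(x,a)$ in the Wasserstein-$1$ topology $\tau_1$ to continuity of $x\mapsto\widehat{\PP}(x,a)$ on $[0,1]$ for each of the three actions. The case $a=0$ is immediate, since $\widehat{\PP}(x,0)=\mathrm{Beta}(\hat\alpha',\hat\beta')$ is independent of $x$, where $\hat\alpha',\hat\beta'>0$ are the fixed method-of-moments estimates. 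For $a=\pm1$ one has $\widehat{\PP}(x,a)=\mathrm{Beta}\big(g(\hat\alpha'-ax),\,g(\hat\beta'+a(1-x))\big)$, and since $x\mapsto\big(g(\hat\alpha'-ax),\,g(\hat\beta'+a(1-x))\big)$ is continuous from $[0,1]$ into $(0,\infty)^2$ (the softplus $g$ being continuous and strictly positive), it remains to show that $(\alpha,\beta)\mapsto\mathrm{Beta}(\alpha,\beta)$ is continuous from $(0,\infty)^2$ into $(\MoneX,\tau_1)$. For this I would use that, since every $\mathrm{Beta}(\alpha,\beta)$ is supported on the compact set $[0,1]$, the topology $\tau_1$ restricted to probability measures on $[0,1]$ coincides with that of weak convergence, so it suffices to prove weak continuity; and weak continuity follows because the Beta density is jointly continuous in $(\alpha,\beta)$ at each point of $(0,1)$, so that Scheff\'e's lemma upgrades pointwise convergence of densities along any convergent sequence of parameters to convergence in total variation. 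Finally, $\widehat{\PP}(x,a)$ trivially has finite first moment, being supported on the bounded set $[0,1]$.

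For Assumption~\ref{asu_2}, item (iii) is part of the specification of the game, namely $0<\alpha<1$. For item (i), on each slice $[0,1]\times\{a\}\times[0,1]$ the map $(x_0,x_1)\mapsto r(x_0,a,x_1)$ is continuous and piecewise affine — the two affine branches, selected by the sign of $a(x_1-x_0)$, agreeing on the set where $a(x_1-x_0)=0$ — and bounded by $M$; taking the union over $a\in\{-1,0,1\}$ yields continuity and boundedness of $r$ on $\X\times A\times\X$. For item (ii), fix $x_1\in[0,1]$: when $a=a'$ the map $x_0\mapsto r(x_0,a,x_1)$ is piecewise affine with slopes of absolute value at most $M$, giving $|r(x_0,a,x_1)-r(x_0',a,x_1)|\le M\|x_0-x_0'\|$; when $a\ne a'$ we have $\|a-a'\|\ge1$ and therefore $|r(x_0,a,x_1)-r(x_0',a',x_1)|\le 2M\le 2M\|a-a'\|$. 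Adding the two estimates shows that the Lipschitz bound in Assumption~\ref{asu_2}~(ii) holds with $L=2M$.

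The only step that is not routine bookkeeping is the $\tau_1$-continuity of the map $(\alpha,\beta)\mapsto\mathrm{Beta}(\alpha,\beta)$, and even this becomes straightforward once $W_1$-convergence is identified with weak convergence on the compact interval $[0,1]$ and Scheff\'e's lemma is invoked; every other verification reduces to elementary estimates made possible by the finiteness of $A$ and the boundedness of $\X$.
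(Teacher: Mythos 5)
Your proof is correct, and for Assumption~\ref{asu_1} it follows essentially the same route as the paper: both arguments hinge on the joint continuity of the Beta density in its parameters, upgraded to weak convergence of the measures (you via Scheff\'e's lemma, the paper via pointwise convergence of densities and a textbook reference), combined with the observation that weak convergence must be strengthened to $\tau_1$-convergence. Where you diverge is in how that last step is handled: the paper explicitly computes the first moment $\alpha/(\alpha+\beta)$ and checks its convergence before invoking its Lemma~\ref{lem:w1_convergence}, whereas you shortcut this by noting that on the compact state space $[0,1]$ the Wasserstein-$1$ topology coincides with the weak topology, so no separate moment computation is needed. Your version is slightly cleaner and exploits the compactness of $\X$ more directly; the paper's version would survive unchanged if $\X$ were unbounded with uniformly integrable first moments. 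For Assumption~\ref{asu_2} your proof is actually \emph{more} complete than the paper's: the paper only verifies continuity and boundedness (items (i) and (iii)) and does not address the Lipschitz condition (ii) at all, whereas you supply a correct verification with constant $L=2M$ by splitting into the cases $a=a'$ (piecewise affine with slopes bounded by $M$) and $a\neq a'$ (where $\|a-a'\|\geq 1$ makes the crude bound $2M$ suffice). This is a genuine gap in the paper's own argument that your write-up fills.
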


\subsubsection{Optimal policy}

Since we know the true transition dynamics, we can calculate the expected reward $\E_{X_{t+1} \sim \PP(x_t,a)}[r(x_t,a,X_{t+1})]$ for each action given the state which is shown in Figure~\ref{fig:optimal_policy}.
Note that while the state transition is dependent on the current state and action, the expected reward given the current state and action is the same under the initial distribution or any subsequent state transition distribution.
We use a policy that takes the action with the highest expected reward, as a proxy for the optimal policy under the \emph{true distribution}, to calculate a benchmark.
Based on simulations over 1m steps using this policy, the average reward received around 0.073.

\begin{figure}
	\centering
	\includegraphics[scale=0.5]{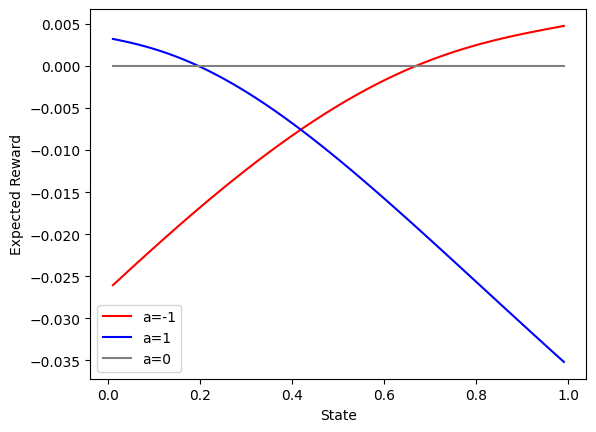}
	\caption[The expected reward for each action based on the current state]{The expected reward for each action $a_{t+1}$ based on the current state $x_t$ and action $a_t$ under $\PP(x_t,a_t)$. The x-axis is the state $x$ and the y-axis is the expected reward for the action taken.}
	\label{fig:optimal_policy}
\end{figure}

\subsubsection{Experiments}

We will compare the performance of DQN and RDQN.
Both algorithms are trained on the environment following the \emph{reference distribution}.
The performance of the two algorithms are evaluated on the environment following the \emph{true distribution}.
For implementation details, we refer the reader to \url{https://github.com/luchungi/Sinkhorn_RDQN/}

Table \ref{tab:toy_perf_factor_5} shows the performance of the two algorithms in terms of average reward per step.
The entire game is repeated 100 times.
This means for each game, a new set of 5 initial samples are given, from which the \emph{reference distribution} is inferred and used to train the agent.
The trained agent is then evaluated on 100 independent environments, all using the \emph{true distribution}, for 10,000 steps each.
The average reward per step is calculated over the total of 1,000,000 reward samples.
The reward factor used is 5.
The statistics in Table \ref{tab:toy_perf_factor_5} are derived from the set of 100 games.

Firstly, for $\delta=0.0001$, we see that as $\varepsilon$ increases, the more unfavorable outcomes at the 5\% and 10\% quantiles improve as the RDQN agent becomes more conservative.
This comes at the expense of more favorable outcomes at the median level.
The best performance is achieved when $\varepsilon=0.1$, where there are no negative outcomes even at the 5\% quantile.
It also achieves a higher mean reward per step compared to the DQN agent.

Secondly, increasing the regularisation parameter $\delta$ does not help improve the performance.
Although the mean, median and maximum reward per step are similar, the higher regularisation results in more unfavorable outcomes at the 5\% and 10\% quantiles and also higher variance.
This is likely due to the fact that our prior does not match the \emph{true distribution} as we are using the uniform distribution for $\nu$.

We also evaluate the performance of the agents under the \emph{reference distribution} and it expectedly shows the DQN agent outperforming the RDQN agent.
We relegate the results to Appendix~\ref{app:toy_example}.

\begin{table}[!htbp]
\centering
\begin{tabular}{lccrrrrrrr}
	\toprule
	Model & $\varepsilon$ & $\delta$ & Mean & Std & Min & 5\% & 10\% & 50\% & Max \\
	\midrule
	DQN & - & - & 0.032 & 0.063 & -0.199 & -0.110 & -0.073 & \textbf{0.062} & 0.073 \\
	RDQN & 0.05 & 0.0001 & 0.031 & 0.046 & -0.105 & -0.076 & -0.047 & 0.051 & 0.073 \\
	RDQN & 0.1 & 0.0001 & \textbf{0.047} & 0.018 & -0.021 & \textbf{0.015} & \textbf{0.028} & 0.049 & 0.072 \\
	RDQN & 0.1 & 0.01 & 0.041 & 0.034 & -0.076 & -0.034 & -0.010 & 0.055 & \textbf{0.073} \\
	RDQN & 0.2 & 0.0001 & 0.027 & \textbf{0.017} & \textbf{0.000} & 0.004 & 0.006 & 0.025 & 0.068 \\
	\bottomrule
\end{tabular}
\caption[Performance of DQN vs RDQN in terms of average reward per step]{Performance of DQN vs RDQN in terms of average reward per step. The entire game is repeated 100 times, i.e., a new set of 5 samples are given for each game to create the \emph{reference distribution}. The average reward per step is evaluated on the trained agent playing 10,000 steps over 100 independent environments, following the \emph{true distribution}, for a total of 1,000,000 reward samples per game. The statistics are calculated over the 100 games. The reward factor is 5 in this case.}
\label{tab:toy_perf_factor_5}
\end{table}

In Table \ref{tab:toy_perf_factors}, we show the performance of the two algorithms for different reward factors.
We have seen above that the RDQN agent outperforms the DQN agent when there is an asymmetry that penalises the agent more heavily for the wrong action.
With a symmetric reward function, the RDQN agent does not do better than the DQN agent across all quantiles and also the mean.
When the reward function becomes more asymmetric with a factor of 10, the outperformance becomes more pronounced.
Therefore, the RDQN algorithm is more appropriate in environments where the agent is penalised relatively more heavily for the wrong action.

\begin{table}[!htbp]
\centering
\begin{tabular}{lcccrrrrrrr}
	\toprule
	Model & $\varepsilon$ & $\delta$ & $M$ & Mean & Std & Min & 5\% & 10\% & 50\% & Max \\
	\midrule
	DQN & - & - & 1 & 0.101 & 0.012 & 0.040 & 0.076 & 0.084 & 0.105 & 0.113 \\
	RDQN & 0.1 & 0.0001 & 1 & 0.091 & 0.014 & 0.007 & 0.067 & 0.073 & 0.096 & 0.112 \\
	\midrule
	DQN & - & - & 5 & 0.032 & 0.063 & -0.199 & -0.110 & -0.073 & 0.062 & 0.073 \\
	RDQN & 0.1 & 0.0001 & 5 & 0.047 & 0.018 & -0.021 & 0.015 & 0.028 & 0.049 & 0.072 \\
	\midrule
	DQN & - & - & 10 & -0.009 & 0.111 & -0.441 & -0.257 & -0.177 & 0.043 & 0.057 \\
	RDQN & 0.1 & 0.0001 & 10 & 0.031 & 0.019 & -0.066 & 0.009 & 0.014 & 0.032 & 0.058 \\
	\bottomrule
\end{tabular}
\caption[Performance of DQN vs RDQN in terms of average reward per step with $M=1,5,10$]{Performance of the agents in terms of average reward per step as in Table \ref{tab:toy_perf_factor_5} but with $M=1,5,10$.}
\label{tab:toy_perf_factors}
\end{table}

\subsection{Portfolio Optimisation} \label{sec:port_opt}
In our final experiment, we show that the RDQN algorithm can be used in a more complex financial trading environment.
The S\&P 500 index is a stock market index that tracks the performance of 500 large companies listed on stock exchanges in the United States.
We use the S\&P 500 index as a case study to show that the RDQN algorithm can achieve better risk-adjusted returns than the DQN algorithm.

\subsubsection{The environment}
The action, $a \in A := \{-1,-0.75,\dots,0.75,1\}$, is the weight of the portfolio in the S\&P 500 index which starts from -1 and goes in increments of 0.25 until 1.
In other words, the agent is allowed to short the index up to an amount equal to the value of the portfolio.
The reward is the log return of the portfolio, i.e., the difference in the log value of the portfolio from one time step to the next.
The goal is to maximise the cumulative log return, effectively maximising the portfolio value.

Before constructing the state, we make an assumption that the log returns for each time step are bounded.
This is a reasonable assumption for financial markets and ensures our setting satisfies Assumption~\ref{asu_2}.
The state $X_t$ at time $t$ consists of four components:
\begin{enumerate}
	\item $(X_t^{(1)},\ldots,X_t^{(60)})$ are the log returns of the past 60 time steps $\subset \R^{60}$, \label{itm:log_returns}
	\item $X_t^{(61)}$ is the log value of the portfolio $\subset \R$, \label{itm:log_value}
	\item $X_t^{(62)}$ is the current position in the S\&P 500 index $\in \{-1,-0.75,\dots,0.75,1\}$. \label{itm:position}
	\item $X_t^{(63)}$ time delta from current time step to the next time step in calendar years $\in \R^+$\footnote{The time delta is relevant in two ways. It is used to compute the interest that will be accrued for cash holdings and it is part of the input for the generative model.}.
\end{enumerate}
Therefore, $\X \subset \R^{63}$.
Note that the time delta is a variable that is independent of the state and action.

We include a transaction cost that is $c:=0.05\%$ of the value of the notional amount being traded.
This is factored into the reward that results from the action.
If there is no change in the position, no transaction cost is incurred.
The interest rate is set to a continuously compounded rate of $r_f:=2.4\%$ which is the natural log of the compounded return from investing in 3-month treasury bills for the period 3 Jan 1995 to 28 Dec 2023 divided by the number of years in the period.

Note that given the current action and state, the only uncertainty that remains is the next log return since the historical returns in the state and the current position are fully determined by the current state and action.
We train a generative model from \cite[Section 5]{lu2024generative} on the S\&P 500 index data to simulate the next log return.
The log value of the portfolio is a deterministic function of the action, the previous action, the previous log value of the portfolio and the log return.
Formally, we have
\begin{equation} \label{eq:port_opt_transition}
	\X \times A \ni (x_t,a_t) \mapsto \widehat{\PP}(x_t,a_t) := \delta_{(x_t^{(2)},\ldots,x_t^{(60)})} \otimes \PP_{\text{gen}}(x_t,a_t) \otimes \delta_{x_t^{(61)} + r(x_t,a_t,X_{t+1})} \otimes \delta_{a_t} \otimes \delta_{\Delta_{t+1}}
\end{equation}
where $\PP_{\text{gen}}(x_t,a_t)$ is the generative model that generates the next log return based on the current state and action, i.e., $X_{t+1}^{(60)} \sim \PP_{\text{gen}}(x_t,a_t)$, $\delta_{(\cdot)}$ is the Dirac measure, $\Delta_{t+1}$ is the time delta from time step $t+1$ to $t+2$ and the reward function is given by
\begin{equation} \label{eq:port_opt_reward}
    \begin{split}
        \X \times A \times \X \ni (x_t,a_t,x_{t+1}) &\mapsto r(x_t,a_t,x_{t+1})
        \\
        &:= r'(x_t,a_t,x_{t+1}^{(60)})
        \\
        &:=\log(1 + a_t(\exp(X_{t+1}^{(60)})-1) + (1-a_t)(e^{r_f x_t^{(63)}} - 1) - c|a_t - x_t^{(62)}|) \in \R
    \end{split}
\end{equation}

Since the key uncertainty lies with the single log return in the state, it only makes sense to have the Sinkhorn ball around the measure $\PP_{\text{gen}}(x_t,a_t)$ rather than the entire state space.
We define for all $x_t \in \X$ and $a_t \in A$, the ambiguity set
\begin{equation} \label{eq:port_opt_ambiguity_set}
    \mathcal{P}(x_t,a_t) := \left\{ \delta_{(x_t^{(2)},\ldots,x_t^{(60)})} \otimes \PP \otimes \delta_{x_t^{(61)} + r(x_t,a_t,X_{t+1})} \otimes a_t \otimes \delta_{\Delta_{t+1}} : \PP \in \Sball\left(\PP_{\text{gen}}(x_t,a_t)\right) \right\}
\end{equation}

Next we define a function that will help us generate states at time step $t+1$ with the current state $x_t$ and action $a_t$ based on samples from $\nu$ rather than the generative model.
    \begin{equation}
        \begin{split}
        \X \times A \times \R \times \R^+ \ni (x_t,a_t,y,\Delta_{t+1}) &\mapsto f_X(x_t,a_t,y,\Delta_{t+1})
        \\
        &:= (x_t^{(2)},\ldots,x_t^{(60)},y,x_t^{(61)} + r'(x_t,a_t,y),a_t,\Delta_{t+1}) \in \X
        \end{split}
    \end{equation}
In other words, we are replacing the next log return with a sample from $\nu$ which also affects the log value of the portfolio at time step $t+1$.

\begin{prop} \label{prop:port_opt}
	In the setting of Section~\ref{sec:port_opt}, a corresponding Bellman equation holds true where
    \begin{equation} \label{eq:port_opt_bellman}
        \begin{split}
            (\widetilde{\HH}_\delta \widetilde{Q}_\delta^*)(x_t,a_t)&:=\inf_{\PP \in \mathcal{P}(x_t,a_t)} \E_{\PP} \left[r(x_t,a_t,X_{t+1})+\alpha \sup_{a_{t+1} \in A}\widetilde{Q}_\delta^*(X_{t+1},a_{t+1})\right]
            \\
            &=\widetilde{Q}_\delta^*(x_t,a_t)
        \end{split}
    \end{equation}
    where
    \begin{equation}
        \widetilde{Q}_\delta^*(x_t,a_t) := \inf_{\PP \in \mathcal{P}(x_t,a_t)} \E_{\PP} \left[r(x_t,a_t,X_{t+1})+\alpha \widetilde{V}_\delta^*(X_{t+1})\right]
    \end{equation}
    for a value function $\widetilde{V}_\delta^*(x_t)$ that is defined analogue to \eqref{eq:q_val_definition} for the ambiguity set from \eqref{eq:port_opt_ambiguity_set}.
    The corresponding duality for $\varepsilon>0$ also holds true where
    \begin{equation} \label{eq:port_opt_duality}
        \begin{split}
        &(\widetilde{\HH}_\delta \widetilde{Q}_\delta)(x_t,a_t) = \sup_{\lambda > 0} \Bigg\{ -\lambda\varepsilon - \lambda \E_{X_{t+1} \sim \widehat{\PP}(x_t,a_t)} \Bigg[\log
        \\
        &\E_{Y \sim \nu}\left[\exp\left(\frac{-r(x_t,a_t,f_X(x_t,a_t,Y)) - \alpha \sup_{a_{t+1} \in A}\widetilde{Q}_\delta^*(f_X(x_t,a_t,Y),a_{t+1}) - \lambda \left|X_{t+1}^{(60)}- Y\right|}{\lambda\delta}\right)\right]\Bigg] \Bigg\}
        \end{split}
    \end{equation}
    when
    \begin{equation*}
		\widetilde{\varepsilon}:= \varepsilon + \delta \E_{X_{t+1} \sim \widehat{\PP}(x_t,a_t)} \left[\log\left(\E_{Y \sim \nu}\left[\exp\left(\frac{- \left|X_{t+1}^{(60)}- Y\right|}{ \delta}\right)\right]\right)\right]\geq 0.
	\end{equation*}
\end{prop}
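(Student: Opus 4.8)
The plan is to reduce Proposition~\ref{prop:port_opt} to the already-established results (Proposition~\ref{prop:bellman}, the $Q$-learning reformulation in Section~\ref{sec:qlearning}, and Proposition~\ref{prop:dual}) by recognizing that the portfolio environment is a special case of the abstract setting, once one identifies the correct state space and reference kernel. Concretely, I would first observe that the transition kernel $\widehat{\PP}(x_t,a_t)$ in \eqref{eq:port_opt_transition} is a product of a Dirac mass (on the shifted return window), a genuinely stochastic component $\PP_{\text{gen}}(x_t,a_t)$ on the one coordinate $X_{t+1}^{(60)}$, and further Dirac masses determined deterministically by $(x_t,a_t)$ and the exogenous $\Delta_{t+1}$. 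Hence the only source of ambiguity is the scalar law $\PP_{\text{gen}}(x_t,a_t)$, and the ambiguity set $\mathcal{P}(x_t,a_t)$ in \eqref{eq:port_opt_ambiguity_set} is, up to the deterministic push-forward $f_X(x_t,a_t,\cdot,\Delta_{t+1})$, exactly the Sinkhorn ball $\Sball(\PP_{\text{gen}}(x_t,a_t))$ in the $\R$-valued return coordinate.

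The second step is to verify that Assumptions~\ref{asu_1}, \ref{asu_3}, and~\ref{asu_2} hold in this reduced one-dimensional picture. For Assumption~\ref{asu_2}, boundedness and continuity of $r$ in \eqref{eq:port_opt_reward} follow from the standing assumption that log returns are bounded (so $\exp(X_{t+1}^{(60)})$ stays in a compact interval and the $\log(1+\cdot)$ argument is bounded away from zero), together with the Lipschitz structure of $a_t(\exp(y)-1)$, the cash term, and the transaction cost $c|a_t-x_t^{(62)}|$; the discount factor assumption is unchanged. For Assumption~\ref{asu_1}, I would argue that $(x_t,a_t)\mapsto \PP_{\text{gen}}(x_t,a_t)$ is continuous in $\tau_1$ — this should follow from the continuity/regularity of the generative model of \cite[Section 5]{lu2024generative}, which the paper is invoking as a black box — and that the push-forward under the continuous map $f_X$ preserves $\tau_1$-continuity and finiteness of first moments (here boundedness of returns makes the moment condition automatic). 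Assumption~\ref{asu_3} is inherited directly since $W_\delta(\PP_{\text{gen}},\PP_{\text{gen}})$ is unaffected by the deterministic coordinates. With these in hand, Proposition~\ref{prop:bellman} applied to the reduced MDP yields the Bellman equation \eqref{eq:port_opt_bellman}, and the $Q$-function identity \eqref{eq:HQ_equals_Q} carries over verbatim, giving $\widetilde{\HH}_\delta\widetilde{Q}_\delta^* = \widetilde{Q}_\delta^*$.

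The third step is the duality \eqref{eq:port_opt_duality}. Here I would apply Proposition~\ref{prop:dual} with the reduced state being the scalar log return, with reference measure $\PP_{\text{gen}}(x_t,a_t)$, sampling measure $\nu$, cost $\|X_{t+1}^{(60)}-Y\|=|X_{t+1}^{(60)}-Y|$, and test function $z\mapsto r(x_t,a_t,f_X(x_t,a_t,z,\Delta_{t+1})) + \alpha\sup_{a_{t+1}\in A}\widetilde{Q}_\delta^*(f_X(x_t,a_t,z,\Delta_{t+1}),a_{t+1})$. The integrability hypotheses of Proposition~\ref{prop:dual} (the exponential-moment condition and $\nu(\{0\le |y-X_1^\nu|<\infty\})=1$) hold because the returns live in a bounded set, so $|X_{t+1}^{(60)}-Y|$ is bounded and the exponential moment is trivially finite; measurability of the test function follows from continuity of $r$, of $f_X$, and of $\widetilde{Q}_\delta^*$ (the last by the analogue of Lemma~\ref{lem:Q_continuous}), and the regular-conditional-distribution requirement holds by \cite[Theorem 8.29]{klenke2013probability} as in the proof of Proposition~\ref{prop:dual}. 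Substituting the definition of $f_X$ then gives exactly \eqref{eq:port_opt_duality}, with the condition $\widetilde\varepsilon\ge 0$ matching $\overline\varepsilon\ge 0$.

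\textbf{Main obstacle.} The routine parts are the boundedness/Lipschitz bookkeeping for $r$ and the integrability checks, which are immediate from bounded returns. The genuine difficulty is the continuity requirement in Assumption~\ref{asu_1}: one must confirm that the learned generative transition $\PP_{\text{gen}}(x_t,a_t)$ is continuous in $(x_t,a_t)$ in the Wasserstein-1 topology, and that composing with $f_X$ (which itself depends on $x_t,a_t$ through $r'$) keeps this continuity. Since the generative model is only described by reference to \cite{lu2024generative}, I would either cite a continuity statement from that work or, if unavailable, impose continuity of $\PP_{\text{gen}}$ as an explicit standing assumption for this section; everything else then follows mechanically from the abstract theory.
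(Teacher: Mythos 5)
Your proposal follows essentially the same route as the paper: identify the single stochastic coordinate $\PP_{\text{gen}}(x_t,a_t)$, view $\mathcal{P}(x_t,a_t)$ as the product of Dirac kernels with a Sinkhorn ball on the scalar return, verify the standing assumptions (boundedness and Lipschitz continuity of $r$ via the bounded-returns assumption), invoke the Bellman machinery of \cite[Theorem 2.7]{neufeld2023markov} as in Proposition~\ref{prop:bellman}, and then apply the Sinkhorn duality of Proposition~\ref{prop:dual} to the test function built from $f_X$.

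The one point you defer --- $\tau_1$-continuity of $(x_t,a_t)\mapsto\PP_{\text{gen}}(x_t,a_t)$, which you propose to cite or impose as an assumption --- is actually closed in the paper by a short direct argument: the generative model is a pushforward $f_{\theta,x}(Z)$ of a fixed four-dimensional Gaussian under a neural network that is continuous in $x$, so for $x_n\to x$ and $g\in C_b(\R)$ dominated convergence gives $\int g(f_{\theta,x_n}(z))\,d\mu(z)\to\int g(f_{\theta,x}(z))\,d\mu(z)$, i.e.\ weak convergence, while boundedness of log returns yields convergence of first moments; Lemma~\ref{lem:w1_convergence} then upgrades this to Wasserstein-1 continuity. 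The paper also spends a little more care than you do on the product structure, invoking the arguments of \cite[Lemma 6.1, Proposition 3.1]{neufeld2023markov} to pass from weak compactness and continuity of the scalar Sinkhorn ball to the same properties for the full set-valued map $(x,a)\twoheadrightarrow\mathcal{P}(x,a)$, rather than treating the reduction as purely notational. With that continuity argument supplied, your outline is complete and matches the paper's proof.
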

The key change here versus Proposition~\ref{prop:dual} is that the cost $\left|X_{t+1}^{(60)}- Y\right|$ is only on the 1-dimensional log return rather than the entire state vector as a result of the infimum in \eqref{eq:port_opt_bellman} being taken over the ambiguity set $\mathcal{P}(x_t,a_t)$ rather than a Sinkhorn ball around the reference distribution $\widehat{\PP}(x_t,a_t)$ for the entire state space.
This means the choice of $\nu$ is only relevant for the log return and not the entire state space.

\subsubsection{Ambiguity in the distribution}
The learned generative model serves as the \emph{reference distribution} while the \emph{true distribution} is the actual S\&P 500 data.
For $\nu$, we chose the Student's t-distribution with the location-scale representation, i.e., $t(\mu,\sigma,\nu)$, where $\mu$ is the location parameter, $\sigma$ is the scale parameter and $\nu$ is the degrees of freedom.
We generated 5e6 samples from the simulator and chose $\mu=0$, $\sigma=0.03$ and $\nu=2$ to ensure that $\nu$ has a heavier tail compared to the reference distribution based on observation.
The choice of the Student's t-distribution as a prior also stems from the fact that is a bell shaped distribution with heavier tails than the normal distribution fitting stylised facts of financial data \cite{cont2001empirical}.
Increasing the regularisation parameter $\delta$ will enforce the worst case distribution to maintain a similar shape to the prior $\nu$.

\subsubsection{The case for robustness}

The \emph{true} distribution is likely to deviate from the \emph{reference} distribution.
In other words, there will be a distributional shift that can cause a policy that performs well in the \emph{reference} distribution to perform poorly in the \emph{true} distribution.
The source of the distributional shift can be due to a number of reasons.
For example, the generative model may not be able to capture the true distribution of the S\&P 500 index.
This can be due to missing features, the model being too simple or simply that the market behaves differently for other reasons.

A policy trained to optimally exploit the \emph{reference} distribution may not be robust to this distributional shift.
Using a distributionally robust policy, such as the one we propose, can help mitigate this issue.

\subsubsection{Experiments}

We train DQN and RDQN agents on the simulator, i.e., the generative model from \cite[Section 5]{lu2024generative},  with the same 5 independent seeds.
The number of training steps is the same for all agents and was determined based on observation of when the final wealth of the agent plateaued for each algorithm.
As $Q$-learning is off-policy, we simulate batches of episodes simultaneously to populate the replay buffer.
For implementation details, we refer the reader to \url{https://github.com/luchungi/Sinkhorn_RDQN/}.

Post-training, we evaluate the performance of the agents on the simulator and the actual S\&P 500 index.
For the simulator, we generate 1,000 paths independent from the training dataset, each of length equal to the S\&P 500 data period from 3 Jan 1995 to 28 Dec 2023.
The performance of the agents is evaluated in terms of the wealth accumulated, the volatility of \emph{log} returns, the Sharpe ratio, the downside deviation and the Sortino ratio.

Table \ref{tab:sim_perf} shows the performance of the agents on the simulator.
It is clear from the wealth metric, that the DQN agent delivers significantly better results than the average simulated path which indicates it has learned a policy that is able to exploit the simulator effectively.
Unsurprisingly, the RDQN agent underperforms the DQN agent in terms of the  wealth metric since it is designed to be more conservative, consequently, it delivers lower volatility and downside deviation.
Importantly, the RDQN agent also outperforms the simulated paths across all metrics on average.

\begin{table}[!htbp]
	\centering
	\begin{tabular}{lccccccc}
		\toprule
		Model & $\varepsilon$ & $\delta$ & Wealth & Vol. & Sharpe & Down dev. & Sortino \\
		\midrule
        Simulator & - & - & 22.09 & 0.140 & 0.729 & 0.084 & 1.223 \\
        \midrule
        DQN & - & - & \textbf{1033.11} & 0.104 & \textbf{2.170} & 0.057 & \textbf{3.982} \\
        \midrule
        RDQN & 2.5e-3 & 1e-4 & 60.21 & 0.082 & 1.813 & 0.047 & 3.219 \\
        RDQN & 3.0e-3 & 1e-4 & 46.28 & \textbf{0.070} & 1.855 & \textbf{0.041} & 3.258 \\
        RDQN & 3.5e-3 & 1e-4 & 45.79 & 0.072 & 1.796 & 0.042 & 3.140 \\
		\bottomrule
	\end{tabular}
	\caption[Performance of DQN vs RDQN on simulated paths]{Performance on simulated paths of length equal to the S\&P 500 from 3 Jan 1995 to 28 Dec 2023. The values are the mean of means across 5 independent agents, where each agent's performance is the mean over 1,000 simulated paths. The wealth is the final portfolio value when the episode is terminated. The volatility is calculated as the standard deviation of the log returns over the entire episode. The Sharpe ratio is calculated as the mean log return divided by the standard deviation of the log returns. For downside deviation, we zeroise positive log returns before calculating the standard deviation. The Sortino ratio is the mean log return divided by the downside deviation. For the standard deviation and ratios, we annualise the values using 252 trading days.}
	\label{tab:sim_perf}
\end{table}

While the simulator likely captures some aspects of the S\&P 500 index characteristics, there will still be a distributional shift between the simulator and the actual S\&P 500 index.
Therefore, our main experiment is to evaluate the agents on the actual S\&P 500 index from 3 Jan 1995 to 28 Dec 2023.
Similarly, we assess 5 independent agents and show the mean of the metrics in Table~\ref{tab:spx_perf}.
There are three sets of results in Table~\ref{tab:spx_perf}:
\begin{enumerate}
    \item The first set of results use agents trained and evaluated with a transaction cost of 0.05\%.
    \item The second set of results use agents trained and evaluated with zero transaction cost.
    \item The third set of results use agents trained with a transaction cost of 0.25\% but evaluated with a transaction cost of 0.05\%.
\end{enumerate}

\begin{table}[!htbp]
	\centering
	\begin{tabular}{lcccccccc}
		\toprule
		Model & $\varepsilon$ & $\delta$ & Wealth & Max draw. & Vol. & Sharpe & Down dev. & Sortino \\
		\midrule
		S\&P 500 & - & - & 9.52 & -0.568 & 0.191 & 0.410 & 0.138 & 0.569 \\
        \midrule
        \multicolumn{9}{l}{Trained and evaluated with 0.05\% transaction cost} \\
        \midrule
        DQN & - & - & 1.50 & -0.537 & 0.169 & 0.058 & 0.120 & 0.082 \\
        RDQN & 2.5e-3 & 1e-4 & 2.67 & -0.363 & 0.125 & 0.243 & 0.090 & 0.340 \\
        RDQN & 3.0e-3 & 1e-6 & 2.46 & -0.349 & \textbf{0.111} & 0.235 & \textbf{0.080} & 0.327 \\
        RDQN & 3.0e-3 & 1e-5 & 2.23 & -0.374 & 0.116 & 0.231 & 0.084 & 0.319 \\
        RDQN & 3.0e-3 & 1e-4 & \textbf{2.89} & -0.371 & 0.121 & \textbf{0.265} & 0.087 & \textbf{0.373} \\
        RDQN & 3.5e-3 & 1e-4 & 2.53 & \textbf{-0.340} & 0.116 & 0.255 & 0.083 & 0.357 \\
        \midrule
        \multicolumn{9}{l}{Trained and evaluated with zero transaction cost} \\
        \midrule
        DQN & - & - & 5.69 & -0.423 & 0.160 & 0.356 & 0.113 & 0.507 \\
        RDQN & 2.5e-3 & 1e-4 & 5.75 & -0.363 & 0.124 & 0.480 & 0.088 & 0.681 \\
        RDQN & 3.0e-3 & 1e-6 & 6.13 & -0.347 & 0.121 & \textbf{0.520} & 0.085 & \textbf{0.743} \\
        RDQN & 3.0e-3 & 1e-5 & 3.58 & \textbf{-0.309} & \textbf{0.102} & 0.410 & \textbf{0.073} & 0.579 \\
        RDQN & 3.0e-3 & 1e-4 & \textbf{6.41} & -0.332 & 0.120 & 0.510 & 0.085 & 0.720 \\
        RDQN & 3.5e-3 & 1e-4 & 4.51 & -0.351 & 0.108 & 0.478 & 0.077 & 0.673 \\
        \midrule
        \multicolumn{9}{l}{Trained with 0.25\% transaction cost and evaluated with 0.05\% transaction cost} \\
        \midrule
        DQN & - & - & 3.92 & -0.470 & 0.166 & 0.219 & 0.118 & 0.307 \\
        RDQN & 2.5e-3 & 1e-4 & \textbf{4.43} & -0.381 & 0.130 & 0.346 & 0.094 & 0.481 \\
        RDQN & 3.0e-3 & 1e-6 & 3.44 & -0.346 & 0.115 & 0.364 & 0.083 & 0.505 \\
        RDQN & 3.0e-3 & 1e-5 & 3.63 & \textbf{-0.306} & \textbf{0.110} & \textbf{0.392} & \textbf{0.078} & \textbf{0.551} \\
        RDQN & 3.0e-3 & 1e-4 & 3.03 & -0.366 & 0.124 & 0.316 & 0.089 & 0.443 \\
        RDQN & 3.5e-3 & 1e-4 & 3.36 & -0.337 & 0.114 & 0.364 & 0.081 & 0.510 \\
		\bottomrule
	\end{tabular}
	\caption[Performance of DQN vs RDQN on the S\&P 500 index]{Performance on S\&P 500 from 3 Jan 1995 to 28 Dec 2023. All values are the \emph{mean} across the 5 independent runs. The maximum drawdown is the maximum ratio loss from a peak to a trough in the wealth. The other metrics are calculated as in Table \ref{tab:sim_perf}. Each subtable shows the performance of the agents trained and evaluated with different transaction costs.}
	\label{tab:spx_perf}
\end{table}

\begin{figure} [!htbp]
	\centering
    \subfigure[Trained and evaluated with 0.05\% transaction cost]{
        \includegraphics[scale=0.4]{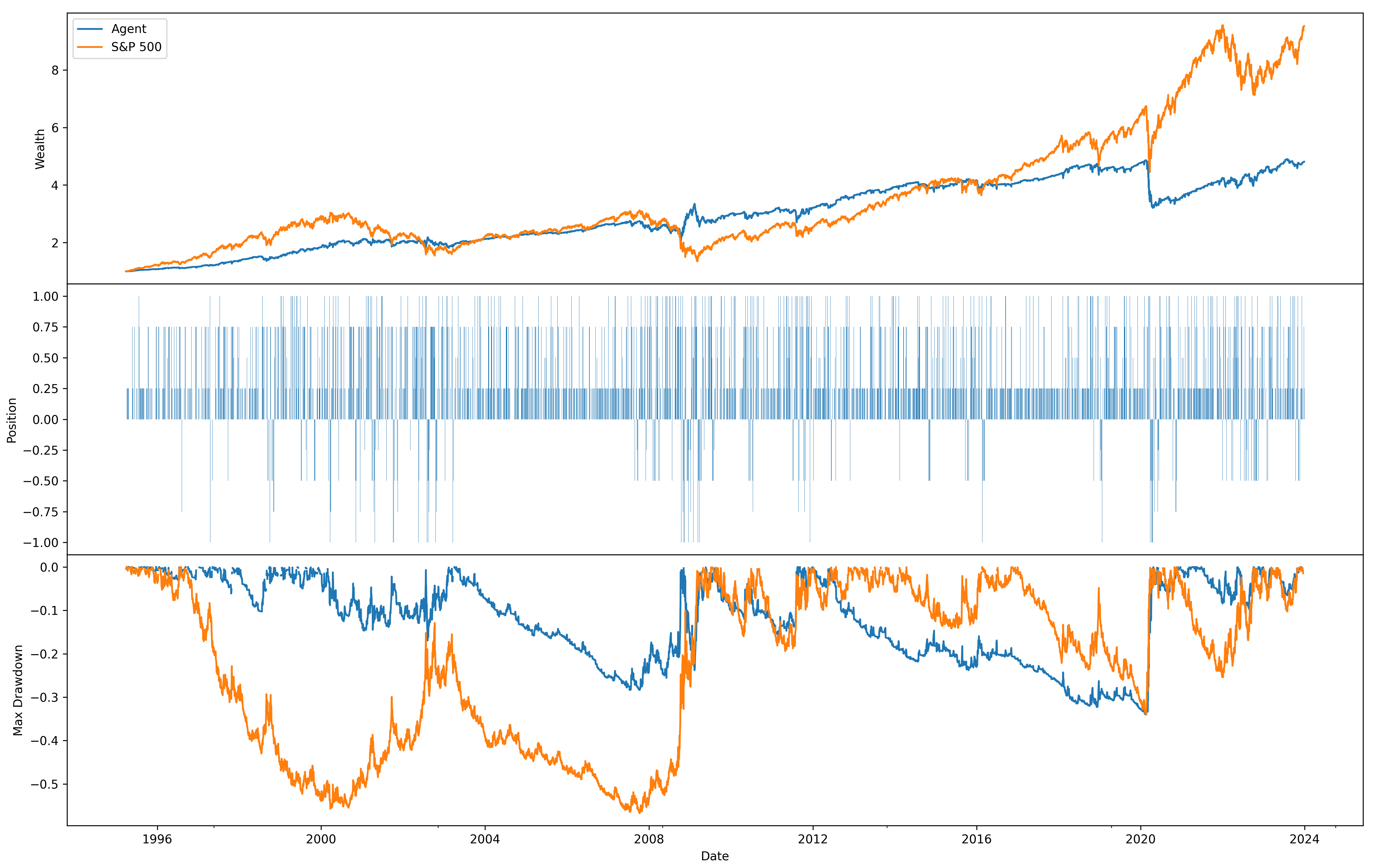}
        \label{fig:rdqn_standard_cost}
    }
    \subfigure[Trained and evaluated with zero transaction cost]{
        \includegraphics[scale=0.4]{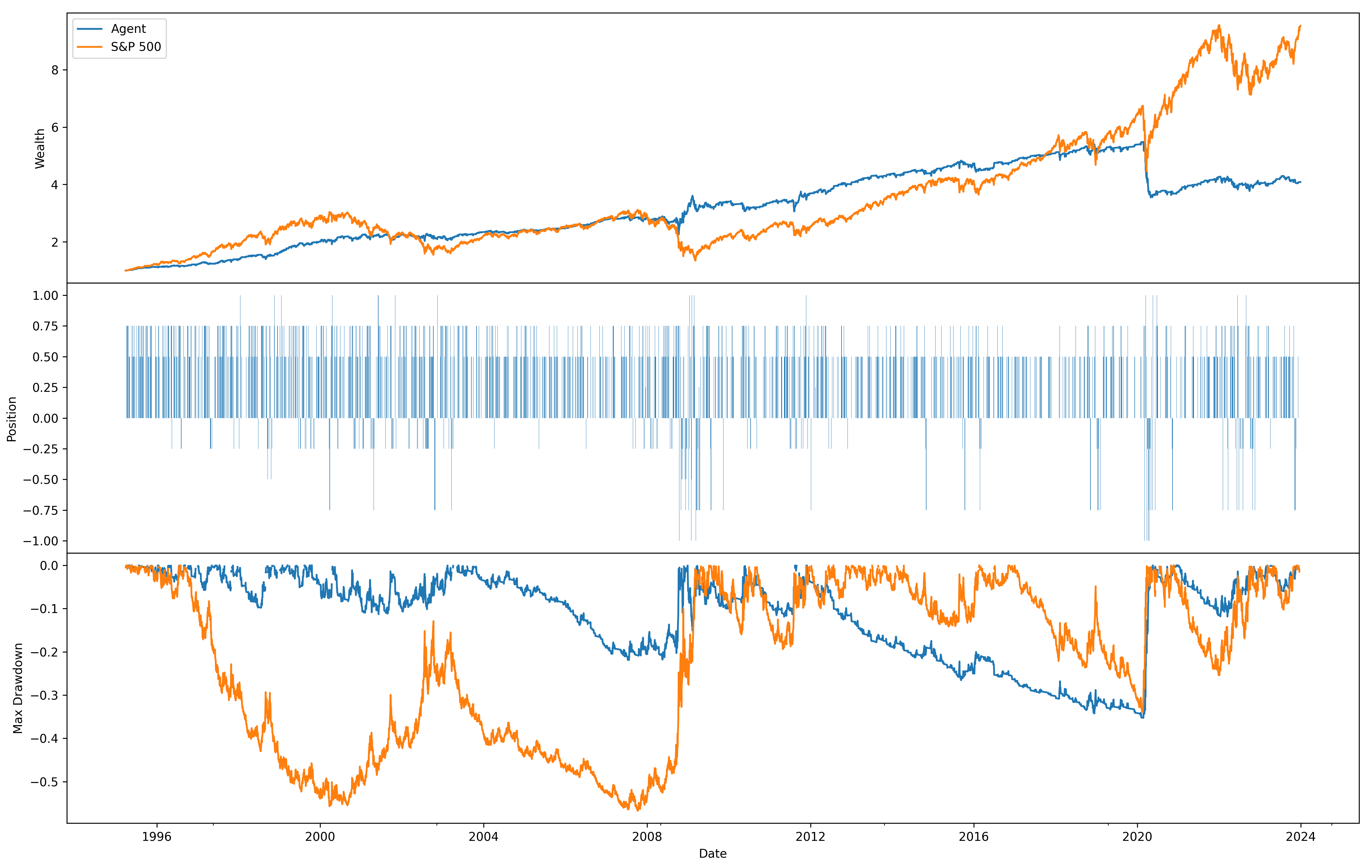}
        \label{fig:rdqn_high_to_standard_cost}
    }
    \caption[RDQN agent trained and evaluated with varied transaction cost]{RDQN agents trained and evaluated with varied transaction cost on the S\&P 500 index from 3 Jan 1995 to 28 Dec 2023. Both agents use $\varepsilon=0.0025,\delta=0.0001$ with the same seed. Within each figure, the top graph shows the wealth trajectory of the RDQN agent (blue line) and the S\&P 500 index (orange line). The middle graph shows the positions taken and the bottom graph shows the maximum drawdown of the RDQN agent (blue line) and the S\&P 500 index (orange line).}
    \label{fig:rdqn_spx}
\end{figure}

In the first set of results in Table~\ref{tab:spx_perf}, we see that the DQN agent is significantly underperforming the S\&P 500 index accumulating only a fraction of the wealth compared to a buy-and-hold strategy.
Although the volatility is slightly lower than the index, the Sharpe and Sortino ratios are significantly lower.
As we have seen that the DQN agent is able to exploit the simulator extremely well in Table~\ref{tab:sim_perf}, this likely indicates that the simulator's distribution deviates meaningfully from that of the actual S\&P 500 index.

This is where robustness comes into play; we observe see a reversal of fortune for the RDQN agent where it is now outperforming the DQN agent on all average risk-adjusted metrics.
However, the average accumulated of wealth is still significantly lower than the S\&P 500 index.
Despite the much lower volatility, downside deviation and maximum drawdowns compared to the index, the average Sharpe and Sortino ratios are also meaningfully lower than the index.

One explanation stems from the frequent trading by the RL agents which incur significant accumulated transaction cost, compared to the buy-and-hold strategy, which does not incur any transaction costs.
An example of the RDQN agent's trajectory is shown in Figure~\ref{fig:rdqn_standard_cost}, which shows the wealth trajectory, positions taken and maximum drawdown of the agent compared to the S\&P 500 index.
This agent is trained with $\varepsilon=0.0025,\delta=0.0001$ and a transaction cost of 0.05\%.
The evaluation is also done with a transaction cost of 0.05\%.
We see from the middle graph in Figure~\ref{fig:rdqn_standard_cost} that there is frequent changing of positions which accumulates significant transaction costs.
This is representative of both the DQN and RDQN agents.

For stronger evidence of the effect of transaction costs, we show the performance of the RDQN agent when trained and evaluated with zero transaction cost in the second set of results in Table~\ref{tab:spx_perf}.
Both types of agents are able to accumulate significantly more wealth than in the first set of results.
As a result, the Sharpe and Sortino ratios are also significantly higher with the RDQN agents outperforming both the DQN agent and the S\&P 500 index.
The lower risk profile of the RDQN agent is maintained with lower volatility, downside deviation and maximum drawdown in the second set of results.

It is, however, unrealistic to assume that the agent will not incur any transaction costs in practice.
Therefore, we attempt to influence the agent to trade less frequently by reward shaping (see e.g. \cite{lample2017playing}, \cite{berner2019dota}, \cite{hu2020learning}).
We increase the transaction cost to 0.25\% during training which sets a higher bar for the agent to trade.
At evaluation time, we revert the transaction cost back to 0.05\% which is the same as the first set of results.
The performance is shown in the third set of results in Table~\ref{tab:spx_perf} where we see a marked improvement for both the DQN and RDQN agents.
The RDQN agents still outperform the DQN agent on risk-adjusted returns but while the best Sortino and Sharpe ratios of the RDQN agent are comparable to that of the S\&P 500 index, the wealth accumulated is still significantly lower.

Overall, we see better adaptation to the distributional shift with the RDQN agent compared to the DQN agent consistently across all three sets of results, particularly with risk-adjusted returns.
However, the RL agents' strategies are not able to consistently outperform the buy-and-hold strategy when transaction costs are taken into account.
This is at least partly attributable to the fact that the agents are trained on a simulator which does not accurately reflect the true distribution of the S\&P 500 index, as evidenced by the gap in performance on the simulator and the S\&P 500 index.

\section{Conclusion} \label{sec:conclusion}

We introduced a novel distributionally robust $Q$-learning algorithm, Robust DQN (RDQN), for continuous state spaces and discrete action spaces subject to model uncertainty in the state transitions.
By formulating the problem within the framework of distributionally robust Markov decision processes and utilizing the Sinkhorn distance to define an ambiguity set around a reference probability measure, it allows us to overcome the computational difficulties associated with the robust Bellman equation by the derivation of a more tractable dual optimisation problem.
We provided theoretical justification for our approach, showing in Proposition~\ref{prop:bellman} that the dynamic programming principle holds for our setting.

The resulting robust Q-function was then parameterized using deep neural networks, allowing for the adaptation of the successful Deep Q-Network (DQN) algorithm to the distributionally robust setting.
The RDQN algorithm, presented in Algorithm~\ref{algo:robustdqn}, leverages a modified target calculation based on the dual formulation and can be trained using standard deep learning techniques.
We provide theorectical gaurantees for the existence of solutions in Proprosition~\ref{prop:solutions} when the state space is compact.

Our empirical evaluations on a carefully designed gambling environment and a real-world portfolio optimisation task on the S\&P 500 index demonstrated the practical viability and benefits of the RDQN algorithm.
In the gambling task discussed in Section~\ref{sec:toy}, the RDQN agent exhibited a greater resilience to unfavorable outcomes and achieved a higher mean reward per step compared to the standard DQN algorithm, particularly when the reward structure penalized wrong actions more heavily.
In the portfolio optimisation task from Section~\ref{sec:port_opt}, the RDQN agents with appropriate ambiguity levels demonstrated the potential to better adapt to distributional shifts, outperforming the DQN agent in terms of risk-adjusted returns.

Despite these promising results, several avenues for future research remain.
Extending the RDQN algorithm to handle continuous action spaces and investigating its performance in more complex, high-dimensional environments represent other important future directions.
For efficiency gains, improvements in the dual dual optimisation step, which currently relies on stochastic gradient descent, could be made by exploring alternative methods suited to convex optimisation.
Choosing appropriate values for the Sinkhorn radius $\varepsilon$ and regularisation level $\delta$ in a principled manner is another area of interest, as the current approach relies on empirical tuning.

\section{Auxiliary Results and Proofs} \label{sec:aux_results_proofs}

\subsection{Auxiliary results}

\begin{lem} \label{lem:w1_convergence}
    The following statements are equivalent:
    \begin{enumerate}
        \item A sequence of measures $(\PP_n)_{n \in \N} \subset \MoneX$ converges to $\PP \in \MoneX$ in the Wasserstein-1 topology, i.e., $W_1(\PP_n,\PP) \to 0$ as $n \to \infty$.
        \item A sequence of measures $(\PP_n)_{n \in \N} \subset \MoneX$ converges to $\PP \in \MoneX$ weakly and converges in the first moment, i.e., $\int_{\X} \|x\| \D \PP_n(x) \to \int_{\X} \|x\| \D \PP(x)$ as $n \to \infty$.
    \end{enumerate}
    In other words, convergence in the Wasserstein-1 topology is equivalent to a combination of weak convergence and convergence of the first moment.
\end{lem}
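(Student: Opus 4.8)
The plan is to establish the two implications separately, in both cases working directly from the coupling definition of $W_1$. For (1) $\Rightarrow$ (2), convergence of first moments is immediate: for any $\pi\in\Pi(\PP_n,\PP)$ the reverse triangle inequality gives $\bigl|\int_\X\|x\|\,\D\PP_n-\int_\X\|x\|\,\D\PP\bigr|=\bigl|\int_{\X\times\X}(\|x\|-\|y\|)\,\D\pi\bigr|\le\int_{\X\times\X}\|x-y\|\,\D\pi$, and taking the infimum over $\pi$ bounds the left side by $W_1(\PP_n,\PP)\to 0$. For weak convergence I would invoke the Portmanteau theorem and show $\int f\,\D\PP_n\to\int f\,\D\PP$ for every bounded Lipschitz $f$, which follows from the same estimate with $f$ in place of $\|\cdot\|$: $\bigl|\int f\,\D\PP_n-\int f\,\D\PP\bigr|\le \operatorname{Lip}(f)\,W_1(\PP_n,\PP)$.

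For (2) $\Rightarrow$ (1) I would use the Skorokhod representation theorem, which applies since $\X$, being a closed subset of $\R^d$, is Polish: realise $Y_n\sim\PP_n$ and $Y\sim\PP$ on one probability space with $Y_n\to Y$ a.s. Then $(Y_n,Y)$ couples $\PP_n$ and $\PP$, so $W_1(\PP_n,\PP)\le\E\|Y_n-Y\|$, and the task reduces to showing $\E\|Y_n-Y\|\to 0$. I would obtain this from the domination $0\le\|Y_n-Y\|\le\|Y_n\|+\|Y\|=:h_n$, noting that $h_n\to 2\|Y\|$ a.s.\ while $\E h_n=\int_\X\|x\|\,\D\PP_n+\int_\X\|x\|\,\D\PP\to 2\int_\X\|x\|\,\D\PP=\E[2\|Y\|]<\infty$ by hypothesis~(2); the generalised dominated convergence theorem (Pratt's lemma, or equivalently a reverse-Fatou argument applied to $h_n-\|Y_n-Y\|$) then forces $\E\|Y_n-Y\|\to 0$.

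The hard part is the direction (2) $\Rightarrow$ (1): weak convergence alone does not control the transport cost coming from mass that escapes to infinity, and it is precisely the first-moment hypothesis that repairs this. The Skorokhod-plus-Pratt route above is the cleanest way to exploit it; an equivalent hands-on alternative would be a truncation argument, splitting $W_1(\PP_n,\PP)$ into a contribution inside a large ball $B_R$ (handled by weak convergence via bounded Lipschitz test functions) and a tail bounded by $\int_{\{\|x\|>R\}}\|x\|\,\D\PP_n+\int_{\{\|x\|>R\}}\|x\|\,\D\PP$, then sending $R\to\infty$ and using that weak convergence together with convergence of first moments makes $x\mapsto\|x\|$ uniformly integrable along $(\PP_n)_{n\in\N}$. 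Finally, I would remark that the statement is \cite[Theorem 6.9]{villani2009optimal} specialised to $p=1$, so it may alternatively be cited directly.
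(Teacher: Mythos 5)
Your argument is correct, and it is genuinely more self-contained than what the paper does: the paper's entire proof is a citation to \cite[Definition 6.8]{villani2009optimal} together with \cite[Theorem 6.9]{villani2009optimal}, which is exactly the reference you identify in your closing remark. Your two directions are both sound. For $(1)\Rightarrow(2)$ the coupling estimate $\bigl|\int f\,\D\PP_n-\int f\,\D\PP\bigr|\le \operatorname{Lip}(f)\,W_1(\PP_n,\PP)$ handles both the bounded-Lipschitz test functions (hence weak convergence via Portmanteau) and, with $f=\|\cdot\|$ via the reverse triangle inequality, the first moments. For $(2)\Rightarrow(1)$ the Skorokhod-plus-Pratt route is clean and correctly exploits the first-moment hypothesis, which is indeed the crux (weak convergence alone cannot control mass escaping to infinity); the domination $\|Y_n-Y\|\le\|Y_n\|+\|Y\|$ with convergent expectations is precisely the setting of the generalised dominated convergence theorem. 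The only point worth tightening is that statement (2) as written does not literally assert $\int_\X\|x\|\,\D\PP<\infty$ — you invoke this when you write $\E[2\|Y\|]<\infty$ "by hypothesis (2)" — so you should either read finiteness of the first moments into the hypothesis (as Villani does by working in the Wasserstein space $P_1(\X)$, and as the paper does via Assumption 2.4) or state it explicitly. What your approach buys is a proof that does not rely on the reader unpacking Villani's more general $p$-Wasserstein machinery; what the paper's citation buys is brevity and the full strength of the equivalence with several other characterisations.
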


\begin{proof}
    This follows from \cite[Definition 6.8]{villani2009optimal} combined with \cite[Theorem 6.9]{villani2009optimal}.
\end{proof}

\begin{lem} \label{lem:lower_semi_cont}
    Let $\delta \geq 0$. The map $\Mone \times \Mone \ni (\PP_1,\PP_2) \mapsto W_{\delta}(\PP_1,\PP_2)$ is jointly lower semi-continuous.
\end{lem}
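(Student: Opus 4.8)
The plan is to establish the slightly stronger claim that $(\PP_1,\PP_2)\mapsto W_\delta(\PP_1,\PP_2)$ is sequentially lower semi-continuous with respect to \emph{weak} convergence of both arguments; by Lemma~\ref{lem:w1_convergence}, convergence in the Wasserstein-1 topology implies weak convergence, so this suffices for the topology used elsewhere in the paper. The case $\delta=0$ is the classical lower semi-continuity of the Wasserstein-1 distance (see \cite{villani2009optimal}), and also follows from the argument below by dropping the entropy term, so fix $\delta>0$. Let $\PP_1^n\to\PP_1$ and $\PP_2^n\to\PP_2$ weakly; I must show $\liminf_{n\to\infty}W_\delta(\PP_1^n,\PP_2^n)\geq W_\delta(\PP_1,\PP_2)$. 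Passing to a subsequence I may assume the $\liminf$ is attained as a limit $\ell\in[0,\infty]$, and if $\ell=\infty$ there is nothing to prove, so assume $\ell<\infty$. Then for each large $n$ I pick a near-optimal coupling $\pi^n\in\Pi(\PP_1^n,\PP_2^n)$ with
\begin{equation*}
\int_{\X\times\X}\|x-y\|\,\D\pi^n(x,y)+\delta H(\pi^n\mid\PP_1^n\otimes\nu)\leq W_\delta(\PP_1^n,\PP_2^n)+\tfrac1n<\infty .
\end{equation*}

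Next I would extract a limiting coupling. Since $\{\PP_1^n\}_n$ and $\{\PP_2^n\}_n$ are weakly convergent sequences of probability measures on the Polish space $\X$ (recall $\X\subseteq\R^d$ is closed), they are tight by Prokhorov's theorem, and hence so is $\{\pi^n\}_n$, because for compact $K_1,K_2\subset\X$ one has $\pi^n\big((K_1\times K_2)^c\big)\leq\PP_1^n(K_1^c)+\PP_2^n(K_2^c)$ and $K_1\times K_2$ is compact. Therefore, along a further subsequence, $\pi^n\to\pi$ weakly for some probability measure $\pi$ on $\X\times\X$; continuity of the two coordinate projections then forces the marginals of $\pi$ to be $\PP_1$ and $\PP_2$, i.e.\ $\pi\in\Pi(\PP_1,\PP_2)$.

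Finally I would pass to the limit in the two terms. The cost $(x,y)\mapsto\|x-y\|$ is nonnegative and continuous, so the Portmanteau theorem (for nonnegative lower semi-continuous integrands) gives $\int_{\X\times\X}\|x-y\|\,\D\pi\leq\liminf_n\int_{\X\times\X}\|x-y\|\,\D\pi^n$. For the entropy term, note that $\PP_1^n\otimes\nu\to\PP_1\otimes\nu$ weakly since $\PP_1^n\to\PP_1$ weakly and $\nu$ is fixed, and that the map $(\mu,\lambda)\mapsto H(\mu\mid\lambda)$ is jointly lower semi-continuous under weak convergence; this is classical and can be read off the Donsker--Varadhan variational formula $H(\mu\mid\lambda)=\sup_{g\in C_b(\X\times\X)}\big\{\int g\,\D\mu-\log\int e^{g}\,\D\lambda\big\}$, which exhibits $H$ as a supremum of functionals that are jointly continuous under weak convergence. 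Hence $H(\pi\mid\PP_1\otimes\nu)\leq\liminf_n H(\pi^n\mid\PP_1^n\otimes\nu)$. Combining the two inequalities and using $\liminf a_n+\liminf b_n\leq\liminf(a_n+b_n)$ for nonnegative sequences,
\begin{align*}
W_\delta(\PP_1,\PP_2)&\leq\int_{\X\times\X}\|x-y\|\,\D\pi(x,y)+\delta H(\pi\mid\PP_1\otimes\nu)\\
&\leq\liminf_{n\to\infty}\Big(\int_{\X\times\X}\|x-y\|\,\D\pi^n(x,y)+\delta H(\pi^n\mid\PP_1^n\otimes\nu)\Big)\\
&\leq\liminf_{n\to\infty}\Big(W_\delta(\PP_1^n,\PP_2^n)+\tfrac1n\Big)=\ell,
\end{align*}
which is the desired bound. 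The only step that is not routine bookkeeping is the joint lower semi-continuity of the relative entropy (plus the observation that the unboundedness of the cost causes no difficulty in the Portmanteau step, since lower semi-continuity of $\mu\mapsto\int f\,\D\mu$ for nonnegative lower semi-continuous $f$ needs no uniform integrability); I would single this out as the main point to get right, and otherwise the argument is a standard tightness-plus-weak-limit compactness scheme.
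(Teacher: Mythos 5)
Your argument is correct and follows the same overall scheme as the paper's proof: tightness of a sequence of (near-)optimal couplings, extraction of a weak limit lying in $\Pi(\PP_1,\PP_2)$ via Prokhorov's theorem, and lower semi-continuity of the transport-plus-entropy functional along that subsequence. Two differences in execution are worth recording. First, you work with $\tfrac1n$-optimal couplings $\pi^n$ rather than exact minimisers; the paper first proves that minimisers of the objective over $\Pi(\PP_1^{(n)},\PP_2^{(n)})$ exist (using compactness of the coupling set and lower semi-continuity) and only then runs the compactness argument on the sequence of minimisers. Your route is leaner, since existence of minimisers is not needed for lower semi-continuity. Second, and more substantively, you explicitly handle the fact that the reference measure in the entropy term varies with $n$: the $n$-th objective involves $H(\,\cdot\mid\PP_1^{(n)}\otimes\nu)$ rather than $H(\,\cdot\mid\PP_1\otimes\nu)$, and you control this via the joint lower semi-continuity of $(\mu,\lambda)\mapsto H(\mu\mid\lambda)$ (Donsker--Varadhan) together with the weak convergence $\PP_1^n\otimes\nu\to\PP_1\otimes\nu$. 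The paper's proof instead freezes $\PP_1$ in its auxiliary functional $f(\pi)=\int\|x-y\|\,\D\pi+\delta H(\pi\mid\PP_1\otimes\nu)$ and identifies $f$ evaluated at the $n$-th minimiser with $W_\delta(\PP_1^{(n)},\PP_2^{(n)})$, which does not literally match the definition of $W_\delta$ for the $n$-th pair; your use of joint lower semi-continuity in both arguments of the relative entropy is the cleaner way to close that step. The remaining ingredients you flag (tightness of $\{\pi^n\}$ inherited from tightness of the marginal sequences, the Portmanteau inequality for the nonnegative continuous cost, and superadditivity of $\liminf$ for nonnegative sequences) are all standard and correctly applied, so the proof goes through.
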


\begin{proof}
    Let $(\PP_1^{(n)})_{n \in \N} \subset \Mone$ and $(\PP_2^{(n)})_{n \in \N} \subset \Mone$ be two sequences of probability measures such that $\PP_1^{(n)} \to \PP_1$ and $\PP_2^{(n)} \to \PP_2$ weakly as $n \to \infty$.

    We define
    \begin{equation*}
        f(\pi):=\int_{\X \times \X} \|x-y\| \D \pi(x,y) + \delta H(\pi~|~\PP_1 \otimes \nu)
    \end{equation*}
    which is lower semi-continuous w.r.t. weak convergence (see \cite[Lemma 3]{nutz2021introduction}).

	Next, we fix $n \in \N$.
    Consequently, since $\Pi(\PP_1^{(n)},\PP_2^{(n)})$ is compact (see \cite[Lemma 4.4]{villani2009optimal}), if we take a sequence of measures $(\pi^{(m)})_{m \in \N} \in \Pi(\PP_1^{(n)},\PP_2^{(n)})$ such that $\lim_{m \to \infty} f(\pi^{(m)}) = W_{\delta}(\PP_1^{(n)},\PP_2^{(n)})$, then there exists a subsequence $(\pi^{(m_k)})_{k \in \N}$ such that $\pi^{(m_k)} \to \pi^{(n)*}$ weakly as $k \to \infty$ for some $\pi^{(n)*} \in \Pi(\PP_1^{(n)},\PP_2^{(n)})$.
    Since $f$ is lower semi-continuous, we have
    \begin{equation*}
        W_{\delta}(\PP_1^{(n)},\PP_2^{(n)}) = \liminf_{k \to \infty} f(\pi^{(m_k)}) \geq f(\pi^{(n)*})
    \end{equation*}
    Hence $\pi^{(n)*}$ is a minimiser of $f$ over $\Pi(\PP_1^{(n)},\PP_2^{(n)})$.
    This implies that, there exists a sequence $(\pi^{(n)*})_{n \in \N}$ with $\pi^{(n)*} \in \Pi(\PP_1^{(n)},\PP_2^{(n)})$ such that $f(\pi^{(n)*}) = W_{\delta}(\PP_1^{(n)}, \PP_2^{(n)})$ for all $n \in \N$.
    In other words, the minimiser of $f$ over $\Pi(\PP_1^{(n)},\PP_2^{(n)})$ is attained at $\pi^{(n)*}$.

    Note that for any $n \in \N$,
    \begin{equation*}
        \pi^{(n)*} \in \Pi^*:= \bigcup_{n \in \N} \Pi(\PP_1^{(n)},\PP_2^{(n)}),
    \end{equation*}
    and that $\Pi^*$ is tight (see \cite[Lemma 4.4]{villani2009optimal}).

    By Prokhorov's theorem, there exists a subsequence $(\pi^{(n_k)*})_{k \in \N}$ such that $\pi^{(n_k)*} \to \pi^*$ weakly as $k \to \infty$ for some $\pi^* \in \Pi^*$.
    We show that $\pi^* \in \Pi(\PP_1,\PP_2)$.
    For all $g(x) \in C_b(\R^d)$, we have by the definition of weak convergence
	\begin{equation}
		\begin{split}
		\int_{\X \times \X} g(x) \D \pi^*(x,y) &= \lim_{k \to \infty} \int_{\X \times \X} g(x) \D \pi^{(n_k)*}(x,y)
		\\
		&= \lim_{k \to \infty} \int_{\X} g(x) \D\PP_1^{(n_k)}(x)
		\\
		&= \int_{\X} g(x) \D\PP_1(x)
		\end{split}
	\end{equation}
    which shows that the first marginal of $\pi^*$ is $\PP_1$.
    Similarly, we can show that the second marginal of $\pi^*$ is $\PP_2$ by using the same steps as above on the second argument of $\pi^{(n_k)*}$.
    We can conclude $\pi^* \in \Pi(\PP_1,\PP_2)$.

    We obtain
    \begin{equation*}
        \liminf_{n \to \infty} W_{\delta}(\PP_1^{(n)},\PP_2^{(n)}) = \liminf_{n \to \infty} f(\pi^{(n)*}) \geq f(\pi^*) \geq W_{\delta}(\PP_1,\PP_2)
    \end{equation*}
    where the first inequality follows from the definition of lower semi-continuity and the second inequality is due to $\pi^* \in \Pi(\PP_1,\PP_2)$.
\end{proof}

\begin{lem} \label{lem:sinkhorn_ball_finite_first_moment}
	Let $\delta \geq 0$ and $\varepsilon > 0$. Let $\widehat{\PP}(s,a)$ have a finite first moment for all $(s,a) \in \X \times A$, then for all $(s,a) \in \X \times A$ and $\PP \in \Sball(\widehat{\PP}(s,a))$, it holds that $\int_{\X} \|x\| \D\PP(x) \leq \int_{\X} \|y\| \D\widehat{\PP}(s,a)(y) + \varepsilon$.
\end{lem}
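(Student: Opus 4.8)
The plan is to first discard the entropic penalty and reduce to the ordinary Wasserstein distance, and then run the standard triangle-inequality argument that controls the first moment of a measure by that of a nearby one.

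First I would observe that for any coupling $\pi \in \Pi(\widehat{\PP}(s,a),\PP)$ the two measures $\pi$ and $\widehat{\PP}(s,a) \otimes \nu$ are both probability measures on $\X \times \X$, so that the relative entropy satisfies $H(\pi ~|~ \widehat{\PP}(s,a) \otimes \nu) \geq 0$. Hence
\begin{equation*}
	\int_{\X \times \X} \|x-y\| \D \pi(x,y) \;\leq\; \int_{\X \times \X} \|x-y\| \D \pi(x,y) + \delta\, H(\pi ~|~ \widehat{\PP}(s,a) \otimes \nu)
\end{equation*}
for every admissible $\pi$, and taking the infimum over $\pi \in \Pi(\widehat{\PP}(s,a),\PP)$ on both sides gives $W(\widehat{\PP}(s,a),\PP) = W_0(\widehat{\PP}(s,a),\PP) \leq W_\delta(\widehat{\PP}(s,a),\PP) \leq \varepsilon$, using that $\PP \in \Sball(\widehat{\PP}(s,a))$. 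For $\delta = 0$ this chain is just the identity $W_0 = W$ from the convention fixed after Definition~\ref{def:sinkhorn}, so there is nothing to do in that case.

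Next I would note that the ball condition $W_\delta(\widehat{\PP}(s,a),\PP) \leq \varepsilon$ together with $H \geq 0$ produces, for each $\eta > 0$, a coupling $\pi_\eta \in \Pi(\widehat{\PP}(s,a),\PP)$ with $\int_{\X\times\X} \|x-y\| \D\pi_\eta(x,y) \leq \varepsilon + \eta < \infty$. Since $\widehat{\PP}(s,a)$ has finite first moment by hypothesis, the pointwise triangle inequality $\|y\| \leq \|x\| + \|x-y\|$ integrated against $\pi_\eta$ yields
\begin{equation*}
	\int_{\X} \|y\| \D\PP(y) \;=\; \int_{\X\times\X} \|y\| \D\pi_\eta(x,y) \;\leq\; \int_{\X} \|x\| \D\widehat{\PP}(s,a)(x) + \int_{\X\times\X} \|x-y\| \D\pi_\eta(x,y) \;\leq\; \int_{\X} \|x\| \D\widehat{\PP}(s,a)(x) + \varepsilon + \eta,
\end{equation*}
where the first equality uses that the second marginal of $\pi_\eta$ is $\PP$. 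Letting $\eta \downarrow 0$ gives the claim; as a by-product the right-hand side being finite shows $\PP$ has finite first moment. There is no real obstacle here: the only points requiring a little care are that relative entropy nonnegativity needs both arguments to be probability measures (which holds since $\nu \in \MoneX$), and that one should work with $\varepsilon$-optimal couplings rather than assuming the infimum defining $W_\delta$ is attained.
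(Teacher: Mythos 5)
Your proof is correct. The first half — dropping the nonnegative entropic penalty to conclude $W(\widehat{\PP}(s,a),\PP) \leq W_\delta(\widehat{\PP}(s,a),\PP) \leq \varepsilon$ — is exactly what the paper does. Where you diverge is in the second half: the paper invokes the Kantorovich--Rubinstein duality (\cite[Theorem 5.10]{villani2009optimal}) and tests against the $1$-Lipschitz function $g(x)=\|x\|$ to get $\int \|x\|\D\PP - \int\|y\|\D\widehat{\PP}(s,a) \leq W \leq \varepsilon$, whereas you stay on the primal side, take $\eta$-optimal couplings, and integrate the pointwise triangle inequality $\|y\| \leq \|x\| + \|x-y\|$. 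Both routes are standard and yield the identical bound, but yours has a small technical advantage: the dual formulation over unbounded Lipschitz test functions formally presupposes integrability of $\|x\|$ under $\PP$ — which is the very finiteness being established — so the paper's argument implicitly needs a truncation step, while your primal argument delivers finiteness of the first moment of $\PP$ as a direct by-product. Your care in working with $\varepsilon$-optimal couplings rather than assuming the infimum is attained, and in noting that $H \geq 0$ requires both arguments to be probability measures, is appropriate and correct.
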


\begin{proof}
    By our definition of the Sinkhorn distance, we have
    \begin{equation*}
        W(\widehat{\PP}(s,a),\PP) \leq W_{\delta}(\widehat{\PP}(s,a),\PP)
    \end{equation*}
    where $W$ is the standard Wasserstein-1 distance since $H(\pi|\widehat{\PP}(s,a) \otimes \nu) \geq 0$.
    Therefore, for any $\PP \in \Sball(\widehat{\PP}(s,a))$, we have
    \begin{equation}
        W(\widehat{\PP}(s,a),\PP) \leq \varepsilon.
    \end{equation}

    Due to the metric property of the Wasserstein distance and the Kantorovich-Rubinstein duality \cite[Theorem 5.10]{villani2009optimal}, we have
    \begin{equation*}
        W(\widehat{\PP}(s,a),\PP) = W(\PP,\widehat{\PP}(s,a)) = \sup_{g \in \mathcal{L}_1} \left\{\int_{\X} g(x) \D\PP(x) - \int_{\X} g(y) \D \widehat{\PP}(s,a)(y) \right\}
    \end{equation*}
    where $\mathcal{L}_1$ is the set of Lipschitz continuous functions with Lipschitz constant 1.
    Since $g(x)=\|x\|$ is Lipschitz continuous with Lipschitz constant 1, we must have
    \begin{equation*}
        \int_{\X} \|x\| \D\PP(x) - \int_{\X} \|y\| \D \widehat{\PP}(s,a)(y) \leq W(\widehat{\PP}(s,a),\PP) \leq \varepsilon
    \end{equation*}
    which implies
    \begin{equation*}
        \int_{\X} \|x\| \D\PP(x) \leq \int_{\X} \|y\| \D\widehat{\PP}(s,a)(y) + \varepsilon
    \end{equation*}
    Since $\widehat{\PP}(s,a)$ has a finite first moment, we have
    \begin{equation} \label{eq:finite_first_moment}
        \int_{\X} \|x\| \D\PP(x) < \infty
    \end{equation}
    Therefore, all $\PP \in \Sball(\widehat{\PP}(s,a))$ have a finite first moment which is uniformly bounded.
\end{proof}

\begin{lem} \label{lem:weak_compact}
    Let $\delta \geq 0$ and $\varepsilon > 0$. Let $\widehat{\PP}(s,a)$ have a finite first moment for all $(s,a) \in \X \times A$, then the Sinkhorn ball $\Sball(\widehat{\PP}(s,a))$ is weakly compact for all $(s,a) \in \X \times A$.
\end{lem}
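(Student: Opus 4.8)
The plan is to verify weak compactness via Prokhorov's theorem, by showing that $\Sball(\widehat{\PP}(s,a))$ is (i) tight, hence relatively weakly compact, and (ii) closed under weak convergence. Since $\X$ is a closed subset of $\R^d$, it is Polish, so the weak topology on $\MoneX$ is metrizable on tight subsets; consequently sequential compactness coincides with compactness there, and it suffices to argue with sequences.

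For tightness I would invoke Lemma~\ref{lem:sinkhorn_ball_finite_first_moment}: under Assumption~\ref{asu_1} every $\PP \in \Sball(\widehat{\PP}(s,a))$ satisfies $\int_{\X} \|x\| \D\PP(x) \leq C_{s,a}$, where $C_{s,a} := \int_{\X} \|y\| \D\widehat{\PP}(s,a)(y) + \varepsilon < \infty$. By Markov's inequality, $\PP(\{x \in \X : \|x\| > R\}) \leq C_{s,a}/R$ for every $R>0$, uniformly in $\PP$. Given $\eta > 0$, the set $K_\eta := \{x \in \X : \|x\| \leq C_{s,a}/\eta\}$ is compact (it is closed and bounded in $\R^d$, using that $\X$ is closed), and $\PP(K_\eta) \geq 1 - \eta$ for all $\PP$ in the ball. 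This is precisely tightness, so Prokhorov's theorem gives relative weak compactness.

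For closedness, let $(\PP_n)_{n \in \N} \subset \Sball(\widehat{\PP}(s,a))$ with $\PP_n \to \PP$ weakly. Applying Lemma~\ref{lem:lower_semi_cont} with the constant sequence $\widehat{\PP}(s,a)$ in the first argument and $\PP_n \to \PP$ in the second yields $W_\delta(\widehat{\PP}(s,a),\PP) \leq \liminf_{n \to \infty} W_\delta(\widehat{\PP}(s,a),\PP_n) \leq \varepsilon$, so $\PP \in \Sball(\widehat{\PP}(s,a))$. Hence the ball is weakly closed, and together with relative compactness this establishes weak compactness. The argument applies verbatim for $\delta = 0$, where $W_0 = W$.

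I do not anticipate a genuine obstacle here, as the two substantive ingredients---the uniform first-moment bound and the lower semi-continuity of $W_\delta$---are already available from the preceding lemmas. The only points needing a little care are the topological bookkeeping (metrizability of the weak topology on $\MoneX$ restricted to a tight set, so that sequential and topological compactness agree) and the use of the closedness of $\X$ to ensure that the truncated sets $K_\eta$ are compact rather than merely bounded.
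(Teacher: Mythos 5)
Your proposal is correct and follows essentially the same route as the paper's proof: tightness via the uniform first-moment bound from Lemma~\ref{lem:sinkhorn_ball_finite_first_moment} and Markov's inequality, weak closedness via the lower semi-continuity of $W_\delta$ from Lemma~\ref{lem:lower_semi_cont}, and Prokhorov's theorem to conclude. Your additional remarks on metrizability and the closedness of $\X$ are sensible bookkeeping that the paper leaves implicit.
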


\begin{proof}
    The proof follows \cite[Theorem 1]{yue2022linear}.
    First, we show that $\Sball(\widehat{\PP}(s,a))$ is tight.
    Due to Lemma~\ref{lem:sinkhorn_ball_finite_first_moment} and Assumption~\ref{asu_1}, we can find $C>0$ such that for all $\PP \in \Sball(\widehat{\PP}(s,a))$, $\int_{\X} \|x\| \D\PP(x) \leq C$.
    Then for any $\eta > 0$, we have a compact set $B(\frac{C}{\eta}) = \{ x \in \X~|~\|x\| \leq \frac{C}{\eta} \}$ such that
    \begin{equation*}
        \PP(\X \setminus B(\frac{C}{\eta})) \leq \frac{\int_{\X} \|x\| \D\PP(x)}{\frac{C}{\eta}} \leq \eta
    \end{equation*}
    $\PP \in \Sball(\widehat{\PP}(s,a))$ for all where the first inequality is due to Markov's inequality which shows tightness.

    Next, due to Lemma~\ref{lem:lower_semi_cont}, if we have a sequence $(\PP^{(n)})_{n \in \N} \subset \Sball(\widehat{\PP}(s,a))$ such that $\PP^{(n)} \to \PP$ weakly as $n \to \infty$, then we have
    \begin{equation*}
        W_{\delta}(\widehat{\PP}(s,a),\PP) \leq \liminf_{n \to \infty} W_{\delta}(\widehat{\PP}(s,a),\PP^{(n)}) \leq \varepsilon
    \end{equation*}
    which implies $\PP \in \Sball(\widehat{\PP}(s,a))$.
    Therefore, the Sinkhorn ball $\Sball(\widehat{\PP}(s,a))$ is weakly closed.
    We can now conclude that $\Sball(\widehat{\PP}(s,a))$ is weakly compact by Prokhorov's theorem.
\end{proof}

\begin{lem} \label{lem:sinkhorn_continuity}
	Let $\delta \geq 0$. Let $\PP_1,\PP_2 \in \Mone$ be two probability measures such that $\PP_2 \ll \nu$ where $\nu$ is the same reference measure as in Definition~\ref{def:sinkhorn}.
	Let $\PP_1^{(n)} \to \PP_1$ weakly as $n \to \infty$ such that $W_{\delta}(\PP_1^{(n)},\PP_2) < \infty$ for all $n \in \N$.
	Then $\lim_{n \to \infty} W_{\delta}(\PP_1^{(n)},\PP_2) = W_{\delta}(\PP_1,\PP_2)$.
\end{lem}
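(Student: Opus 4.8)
The plan is to prove the two matching one-sided bounds $\liminf_{n\to\infty}W_\delta(\PP_1^{(n)},\PP_2)\ge W_\delta(\PP_1,\PP_2)$ and $\limsup_{n\to\infty}W_\delta(\PP_1^{(n)},\PP_2)\le W_\delta(\PP_1,\PP_2)$. The lower bound is immediate from the joint lower semi-continuity of $W_\delta$ established in Lemma~\ref{lem:lower_semi_cont}, applied to the pairs $(\PP_1^{(n)},\PP_2)$ with constant second component. For the upper bound we may assume $W_\delta(\PP_1,\PP_2)<\infty$, since otherwise there is nothing to prove; here we use $\PP_1^{(n)}\to\PP_1$ in the Wasserstein-$1$ distance $W_1$ (by Lemma~\ref{lem:w1_convergence} this amounts to weak convergence together with convergence of first moments), which is precisely what controls the transport cost of the couplings built below.

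Fix $\eta>0$ and choose a near-optimal coupling $\pi^\ast\in\Pi(\PP_1,\PP_2)$ with $\int_{\X\times\X}\|x-y\|\,\D\pi^\ast(x,y)+\delta H(\pi^\ast\,|\,\PP_1\otimes\nu)\le W_\delta(\PP_1,\PP_2)+\eta$; for $\delta>0$ this forces $\pi^\ast\ll\PP_1\otimes\nu$. Since $\X\subseteq\R^d$ is Polish we disintegrate $\pi^\ast(\D x,\D y)=\PP_1(\D x)\,\kappa(x,\D y)$ along its first marginal, with $\kappa(x,\cdot)\ll\nu$ for $\PP_1$-almost every $x$. Pick couplings $\gamma_n\in\Pi(\PP_1,\PP_1^{(n)})$ with $\int\|x-x'\|\,\D\gamma_n(x,x')\le W_1(\PP_1,\PP_1^{(n)})+\tfrac{1}{n}$ and write their reverse disintegration $\gamma_n(\D x,\D x')=\PP_1^{(n)}(\D x')\,\widehat m_n(x',\D x)$. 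Set $\Gamma_n(\D x,\D x',\D y):=\gamma_n(\D x,\D x')\,\kappa(x,\D y)$ on $\X^3$ and let $\pi_n$ be its $(x',y)$-marginal, so that
\begin{equation*}
\pi_n(\D x',\D y)=\PP_1^{(n)}(\D x')\,\kappa_n(x',\D y),\qquad \kappa_n(x',\cdot):=\int_{\X}\widehat m_n(x',\D x)\,\kappa(x,\cdot).
\end{equation*}
The first marginal of $\pi_n$ is $\PP_1^{(n)}$ by construction, and integrating out $x'$ in $\Gamma_n$ returns $\pi^\ast$, so the second marginal of $\pi_n$ is $\PP_2$; hence $\pi_n\in\Pi(\PP_1^{(n)},\PP_2)$, and $\pi_n\ll\PP_1^{(n)}\otimes\nu$ because $\kappa_n(x',\cdot)$ is a mixture of measures dominated by $\nu$.

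It then remains to bound $W_\delta(\PP_1^{(n)},\PP_2)\le\int\|x'-y\|\,\D\pi_n+\delta H(\pi_n\,|\,\PP_1^{(n)}\otimes\nu)$ by $f(\pi^\ast)$ up to a vanishing error. For the transport term, integrating $\|x'-y\|\le\|x'-x\|+\|x-y\|$ against $\Gamma_n$ gives $\int\|x'-y\|\,\D\pi_n\le\int\|x-x'\|\,\D\gamma_n+\int\|x-y\|\,\D\pi^\ast$. For the entropy term, since $\kappa_n(x',\cdot)$ is a $\widehat m_n(x',\cdot)$-mixture of the measures $\kappa(x,\cdot)$ and $\mu\mapsto H(\mu\,|\,\nu)$ is convex, Jensen's inequality yields $H(\kappa_n(x',\cdot)\,|\,\nu)\le\int\widehat m_n(x',\D x)\,H(\kappa(x,\cdot)\,|\,\nu)$, and integrating against $\PP_1^{(n)}$ (using $\int\widehat m_n(x',\D x)\,\PP_1^{(n)}(\D x')=\PP_1(\D x)$) gives $H(\pi_n\,|\,\PP_1^{(n)}\otimes\nu)\le H(\pi^\ast\,|\,\PP_1\otimes\nu)$. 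Combining,
\begin{equation*}
W_\delta(\PP_1^{(n)},\PP_2)\le W_1(\PP_1,\PP_1^{(n)})+\tfrac{1}{n}+\int\|x-y\|\,\D\pi^\ast+\delta H(\pi^\ast\,|\,\PP_1\otimes\nu)\le W_1(\PP_1,\PP_1^{(n)})+\tfrac{1}{n}+W_\delta(\PP_1,\PP_2)+\eta,
\end{equation*}
and sending $n\to\infty$ and then $\eta\downarrow0$ completes the argument.

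The main obstacle is the entropy estimate $H(\pi_n\,|\,\PP_1^{(n)}\otimes\nu)\le H(\pi^\ast\,|\,\PP_1\otimes\nu)$: the point of constructing $\pi_n$ as a kernel-mixture of $\pi^\ast$ transported along a coupling of $\PP_1$ with $\PP_1^{(n)}$ — rather than, say, repairing the second marginal by an additional transport, which would distort the reference measure $\PP_1^{(n)}\otimes\nu$ — is exactly that convexity of relative entropy (a data-processing inequality) then delivers this bound while the transport cost stays controlled by $\gamma_n$. In fact the displayed estimates give the stronger Lipschitz-type bound $|W_\delta(\PP_1,\PP_2)-W_\delta(\PP_1',\PP_2)|\le W_1(\PP_1,\PP_1')$, from which the lemma follows at once; for $\delta=0$ the entropy terms vanish and this is merely the triangle inequality for $W_1$. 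The remaining ingredients are routine: existence and measurability of the disintegrations on the Polish space $\X$, and the fact that a near-optimal (not necessarily optimal) $\pi^\ast$ suffices, so no attainment of the infimum is required.
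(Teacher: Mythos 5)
Your argument is correct in its mechanics and takes a genuinely different route from the paper. The paper reduces the claim to the standard Sinkhorn distance $S_\delta$ (the case $\nu=\PP_2$) via the identity $W_\delta(\PP_1,\PP_2)=S_\delta(\PP_1,\PP_2)+\delta H(\PP_2\,|\,\nu)$, so that the $\PP_1$-dependence of the reference measure $\PP_1\otimes\nu$ drops out, and then invokes an external stability theorem for entropic optimal transport (via $c$-cyclical invariance of the optimizers). You instead build an explicit competitor coupling by gluing a near-optimal plan $\pi^\ast$ for $(\PP_1,\PP_2)$ to a near-optimal $W_1$-coupling of $\PP_1$ with $\PP_1^{(n)}$, and control the entropy by the chain rule $H(\pi\,|\,\mu\otimes\nu)=\int H(\kappa(x,\cdot)\,|\,\nu)\,\D\mu(x)$ together with convexity of $H(\cdot\,|\,\nu)$. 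This is self-contained, handles the varying reference measure $\PP_1^{(n)}\otimes\nu$ directly, and yields the stronger quantitative estimate $|W_\delta(\PP_1,\PP_2)-W_\delta(\PP_1',\PP_2)|\leq W_1(\PP_1,\PP_1')$, which the paper's argument does not produce.

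There is, however, one discrepancy you should make explicit: your upper bound needs $W_1(\PP_1,\PP_1^{(n)})\to 0$, whereas the lemma only hypothesizes weak convergence, and you smuggle in the upgrade by asserting convergence of first moments, which is not part of the stated assumptions. This is not a flaw you can argue around: under mere weak convergence the conclusion is false. Take $\PP_1=\PP_2=\delta_0$ and $\PP_1^{(n)}=(1-\tfrac1n)\delta_0+\tfrac1n\delta_{ne_1}$; then $\PP_1^{(n)}\to\PP_1$ weakly, the only coupling with $\delta_0$ is the product, and the transport term equals $1$ for every $n$ while it vanishes in the limit (the entropy term is a constant independent of $n$). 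So your proof establishes the corrected statement with $\tau_1$-convergence of $\PP_1^{(n)}$ in place of weak convergence — which is exactly what is available in the lemma's only application (Lemma~\ref{lem:lhc_sinkhorn_ball}, where $\widehat{\PP}(x_n,a_n)\to\widehat{\PP}(x,a)$ in $\tau_1$ by Assumption~\ref{asu_1}) — rather than the statement as printed. State the strengthened hypothesis up front rather than deriving it from Lemma~\ref{lem:w1_convergence}, whose moment-convergence half you are not entitled to. The remaining ingredients (disintegration on the Polish space $\X$, absolute continuity of the mixed kernel $\kappa_n(x',\cdot)$ with respect to $\nu$ for $\PP_1^{(n)}$-a.e.\ $x'$, and the generalized Jensen inequality for relative entropy under mixing) are used correctly.
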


\begin{proof}
    First, we define a common variant of the Sinkhorn distance
    \begin{equation} \label{eq:cuturi_sinkhorn}
        S_{\delta}(\PP_1,\PP_2)=\inf_{\pi \in \Pi(\PP_1,\PP_2)} \int_{\X \times \X} \|x - y\| \D \pi (x,y)+\delta H(\pi|\PP_1 \otimes \PP_2)
    \end{equation}
    In other words, it is choosing $\nu=\PP_2$.
    Let $\PP_1^{(n)} \to \PP_1$ weakly as $n \to \infty$.
    By \cite[Theorem 6.21]{nutz2021introduction}, $S_{\delta}(\PP_1^{(n)},\PP_2) \to S_{\delta}(\PP_1,\PP_2)$ weakly provided the minimisers $\pi^{(n)} \in \Pi(\PP^{(n)}_1,\PP_2)$ for all $n \in \N$ are \emph{c-cyclically invariant}.
    A sufficient condition from \cite[Proposition 1.2]{ghosal2022stability} for \emph{c-cyclically invariance} is that $S_{\delta}(\PP^{(n)}_1,\PP_2)$ is finite for any $n \in \N$.

    Next, we want to show that $W_{\delta}(\PP_1,\PP_2) = S_{\delta}(\PP_1,\PP_2) + \delta H(\PP_2 | \nu)$ where $\PP_2 \ll \nu$.
    Therefore, since $W_{\delta}(\PP^{(n)}_1,\PP_2)$ is finite for all $n \in \N$, then $S_{\delta}(\PP_1^{(n)},\PP_2)$ is also finite and since $S_{\delta}(\PP_1,\PP_2)$ is continuous w.r.t. $\PP_1$, then $W_{\delta}(\PP_1,\PP_2)$ is also continuous w.r.t. $\PP_1$.
	Indeed, we have
    \begin{align*}
        S_{\delta}(\PP_1,\PP_2) &= \inf_{\pi \in \Pi(\PP_1,\PP_2)}\int \|x - y\| \D \pi (x,y) + \delta \E_{\pi}\left[\log\left(\frac{\D \pi (x,y)}{d\PP_1 \otimes d\PP_2}\right)\right]
		\\
        &= \inf_{\pi \in \Pi(\PP_1,\PP_2)}\int \|x - y\| \D \pi (x,y) + \delta \E_{\pi}\left[\log\left(\frac{\D \pi (x,y)}{d\PP_1 \otimes d\nu}\right)\right] + \delta \E_{\pi} \left[\log\left(\frac{d\nu}{d\PP_2}\right)\right]
		\\
        &= \inf_{\pi \in \Pi(\PP_1,\PP_2)}\int \|x - y\| \D \pi (x,y) + \delta \E_{\pi}\left[\log\left(\frac{\D \pi (x,y)}{d\PP_1 \otimes d\nu}\right)\right] + \delta \E_{\PP_2}\left[\log\left(\frac{d\nu}{d\PP_2}\right)\right]
		\\
        &= W_{\delta}(\PP_1,\PP_2) - \delta H(\PP_2 | \nu)
    \end{align*}
\end{proof}

\begin{lem} \label{lem:uhc_sinkhorn_ball}
    Let $\delta \geq 0$ and $\varepsilon > 0$. The set valued map $F:\X \times A \ni (x,a) \twoheadrightarrow \Sball(\widehat{\PP}(x,a))$ is upper hemicontinuous.
\end{lem}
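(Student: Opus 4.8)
The plan is to reduce the claim to a sequential statement and then combine a uniform tightness bound (to get precompactness) with the joint lower semicontinuity of $W_\delta$ already established in Lemma~\ref{lem:lower_semi_cont}. Since $A \subseteq \R^m$ is finite it carries the discrete topology, so $\X \times A$ decomposes as the disjoint union of the slices $\X \times \{a\}$, and it suffices to fix $a \in A$ and show that $x \twoheadrightarrow \Sball(\widehat{\PP}(x,a))$ is upper hemicontinuous. Because $\X$ is a closed subset of $\R^d$, hence Polish, the space $\MoneX$ equipped with the topology of weak convergence is metrizable; the values $\Sball(\widehat{\PP}(x,a))$ are nonempty by Assumption~\ref{asu_3} and weakly compact by Lemma~\ref{lem:weak_compact}. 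For compact-valued correspondences over a metrizable (hence first-countable) domain, upper hemicontinuity at a point $x_0$ is equivalent to the sequential criterion: for every $x_n \to x_0$ in $\X$ and every choice $\PP_n \in \Sball(\widehat{\PP}(x_n,a))$, the sequence $(\PP_n)_n$ has a subsequence converging weakly to some $\PP \in \Sball(\widehat{\PP}(x_0,a))$. I would verify this criterion.

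First I would establish uniform tightness of $(\PP_n)_n$. Assumption~\ref{asu_1} gives continuity of $x \mapsto \widehat{\PP}(x,a)$ in the Wasserstein-$1$ topology, so by Lemma~\ref{lem:w1_convergence} the first moments $\int_{\X}\|y\|\,\D\widehat{\PP}(x_n,a)(y)$ converge and are in particular bounded by some $C_0 < \infty$; Lemma~\ref{lem:sinkhorn_ball_finite_first_moment} then bounds $\int_{\X}\|x\|\,\D\PP_n(x)$ by $C_0 + \varepsilon$ uniformly in $n$. Since $\{x \in \X : \|x\| \leq R\}$ is compact for every $R$, Markov's inequality yields the required uniform tail estimate, and Prokhorov's theorem produces a weakly convergent subsequence $\PP_{n_k} \to \PP \in \MoneX$. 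Then I would close the graph: $\widehat{\PP}(x_{n_k},a) \to \widehat{\PP}(x_0,a)$ weakly (being $\tau_1$-convergent) and $\PP_{n_k} \to \PP$ weakly, so the joint lower semicontinuity of $W_\delta$ from Lemma~\ref{lem:lower_semi_cont} gives
\[
W_\delta\big(\widehat{\PP}(x_0,a),\PP\big) \leq \liminf_{k \to \infty} W_\delta\big(\widehat{\PP}(x_{n_k},a),\PP_{n_k}\big) \leq \varepsilon,
\]
the last step because each $\PP_{n_k}$ lies in the $\varepsilon$-ball centred at $\widehat{\PP}(x_{n_k},a)$. Hence $\PP \in \Sball(\widehat{\PP}(x_0,a))$, which is exactly the sequential criterion, so $F$ is upper hemicontinuous. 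The case $\delta = 0$ is identical, as Lemmas~\ref{lem:w1_convergence}, \ref{lem:sinkhorn_ball_finite_first_moment} and \ref{lem:lower_semi_cont} are all stated for $\delta \geq 0$.

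The step I expect to be the crux is the uniform tightness bound, and it is precisely where Assumption~\ref{asu_1} is consumed: without control on the first moments of the centres $\widehat{\PP}(x_n,a)$ along $x_n \to x_0$, the family $(\PP_n)_n$ need not be relatively compact and the sequential criterion can genuinely fail. Everything after that is a routine assembly of Prokhorov's theorem and the lower semicontinuity of $W_\delta$. A minor point to be careful about is that invoking the \emph{sequential} characterisation of topological upper hemicontinuity is legitimate here only because $\MoneX$ is metrizable and the images are weakly compact; this is why the tightness estimate must be phrased uniformly over a tail of the sequence rather than merely pointwise in $n$.
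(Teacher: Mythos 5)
Your proposal is correct and follows essentially the same route as the paper's proof: uniform tightness of the selections via the first-moment bound of Lemma~\ref{lem:sinkhorn_ball_finite_first_moment} and Markov's inequality, Prokhorov's theorem to extract a weak subsequential limit, and the joint lower semicontinuity of $W_\delta$ from Lemma~\ref{lem:lower_semi_cont} to place that limit back in $\Sball(\widehat{\PP}(x,a))$, concluding via the sequential characterisation of upper hemicontinuity for compact-valued correspondences. Your explicit remark that Assumption~\ref{asu_1} is what makes the first-moment bound uniform along the sequence of centres is a point the paper's proof passes over more quickly, but it is the same argument.
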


\begin{proof}
    Let $(x_n,a_n)_{n \in \N} \subseteq \X \times A$ such that $(x_n,a_n) \to (x,a) \in (\X,A)$ weakly as $n \to \infty$ and let $(\PP_n)_{n \in \N} \subset \Mone$ where $\PP_n \in \Sball(\widehat{\PP}(x_n,a_n))$ for all $n \in \N$.
    We want to show that $\PP_n \to \PP$ weakly as $n \to \infty$ for some $\PP \in \Sball(\widehat{\PP}(x,a))$, then upper hemicontinuity follows from \cite[Theorem 17.20]{aliprantis2006infinite}.

    First, we show that $(\PP_n)_{n \in \N}$ is tight.
    By Assumption~\ref{asu_1} and Lemma~\ref{lem:sinkhorn_ball_finite_first_moment} we have that $\PP_n$ has a uniformly bounded first moment for all $n \in \N$.
    In other words, there exists $C>0$ such that
    \begin{equation*}
        \sup_{n \in \N} \int_{\X} \|x\| \D\PP_n(x) \leq C.
    \end{equation*}
    Then for any $\eta > 0$, we can find a compact set $B(\frac{C}{\eta}) := \{ x \in \X~|~\|x\| \leq \frac{C}{\eta} \}$ such that for all $n \in \N$,
    \begin{equation*}
        \PP_n(\X \setminus B(\frac{C}{\eta})) \leq \frac{\int_{\X} \|x\| \D\PP_n(x)}{\frac{C}{\eta}} \leq \eta
    \end{equation*}
    where the first inequality is due to Markov's inequality.
    Therefore, $(\PP_n)_{n \in \N}$ is tight.
    By Prokhorov's theorem, there exists a subsequence $(\PP_{n_k})_{k \in \N}$ such that $\PP_{n_k} \to \PP$ weakly as $k \to \infty$ for some $\PP \in \Mone$.

    Since we assume that the map $(x,a) \mapsto \widehat{\PP}(x,a)$ is continuous, we have $\widehat{\PP}(x_{n_k},a_{n_k}) \to \widehat{\PP}(x,a)$ weakly as $k \to \infty$.
    This combined with Lemma~\ref{lem:lower_semi_cont}, we have
    \begin{equation*}
        W_{\delta}(\widehat{\PP}(x,a),\PP) \leq \liminf_{k \to \infty} W_{\delta}(\widehat{\PP}(x_{n_k},a_{n_k}),\PP_{n_k}) \leq \varepsilon.
    \end{equation*}
    Therefore, we have $\PP \in \Sball(\widehat{\PP}(x,a))$.
\end{proof}

\begin{lem} \label{lem:lhc_sinkhorn_ball}
    Let $\delta \geq 0$ and $\varepsilon > 0$. The set valued map $F:\X \times A \ni (x,a) \twoheadrightarrow \Sball(\widehat{\PP}(s,a))$ is lower hemicontinuous.
\end{lem}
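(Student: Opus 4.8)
We use the sequential characterisation of lower hemicontinuity \cite[Theorem 17.21]{aliprantis2006infinite}: since $\X\times A$ is metrisable and, restricted to the measures with uniformly bounded first moment supplied by Lemma~\ref{lem:sinkhorn_ball_finite_first_moment}, the weak topology on $\MoneX$ is metrisable, it suffices to show that for every sequence $(x_n,a_n)\to(x,a)$ in $\X\times A$ and every $\PP\in\Sball(\widehat{\PP}(x,a))$ there are $\PP_n\in\Sball(\widehat{\PP}(x_n,a_n))$ with $\PP_n\to\PP$ weakly (in fact in $\tau_1$). The key preliminary step is a one-sided continuity statement: for every $\rho\in\MoneX$ with $\rho\ll\nu$, finite first moment and $H(\rho\,|\,\nu)<\infty$, one has $W_\delta(\widehat{\PP}(x_n,a_n),\rho)\to W_\delta(\widehat{\PP}(x,a),\rho)$. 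I would prove this by observing that $\widehat{\PP}(x_n,a_n)\to\widehat{\PP}(x,a)$ in $\tau_1$ (hence weakly) by Assumption~\ref{asu_1}; that $W_\delta(\widehat{\PP}(x_n,a_n),\rho)<\infty$ for each $n$, via the identity $W_\delta(\mu,\rho)=S_\delta(\mu,\rho)+\delta H(\rho\,|\,\nu)$ derived in the proof of Lemma~\ref{lem:sinkhorn_continuity} together with the crude bound $S_\delta(\widehat{\PP}(x_n,a_n),\rho)\le\int_{\X\times\X}\|u-y\|\,\D(\widehat{\PP}(x_n,a_n)\otimes\rho)<\infty$ (the product coupling has zero relative entropy, and both marginals have finite first moment by Assumption~\ref{asu_1} and Lemma~\ref{lem:sinkhorn_ball_finite_first_moment}); and then applying Lemma~\ref{lem:sinkhorn_continuity}. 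Any $\rho\in\Sball(\widehat{\PP}(x,a))$ meets these hypotheses, since $W_\delta(\widehat{\PP}(x,a),\rho)<\infty$ forces $\rho\ll\nu$ and $H(\rho\,|\,\nu)<\infty$.

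Next comes the construction. Fix some $Q\in\MoneX$ with $W_\delta(\widehat{\PP}(x,a),Q)<\varepsilon$ strictly (see below) and set $\PP^{(m)}:=(1-\tfrac1m)\PP+\tfrac1m Q$ for $m\in\N$. Because a convex combination of couplings of $(\widehat{\PP}(x,a),\PP)$ and of $(\widehat{\PP}(x,a),Q)$ is a coupling of $(\widehat{\PP}(x,a),\PP^{(m)})$, the transport cost is linear in the coupling, and $\pi\mapsto H(\pi\,|\,\widehat{\PP}(x,a)\otimes\nu)$ is convex (the reference measure not depending on the second marginal), the map $\rho\mapsto W_\delta(\widehat{\PP}(x,a),\rho)$ is convex, whence
\begin{equation*}
W_\delta(\widehat{\PP}(x,a),\PP^{(m)})\le\Big(1-\tfrac1m\Big)\varepsilon+\tfrac1m W_\delta(\widehat{\PP}(x,a),Q)=\varepsilon-\tfrac1m\big(\varepsilon-W_\delta(\widehat{\PP}(x,a),Q)\big)<\varepsilon.
\end{equation*}
Each $\PP^{(m)}$ still has finite first moment, is $\ll\nu$ and has finite entropy w.r.t.\ $\nu$, so the preliminary step gives $W_\delta(\widehat{\PP}(x_n,a_n),\PP^{(m)})\to W_\delta(\widehat{\PP}(x,a),\PP^{(m)})<\varepsilon$; pick $N(m)$ with $\PP^{(m)}\in\Sball(\widehat{\PP}(x_n,a_n))$ for all $n\ge N(m)$. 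Choosing $m(n)\uparrow\infty$ slowly enough that $n\ge N(m(n))$ eventually and setting $\PP_n:=\PP^{(m(n))}$ — with $\PP_n:=\widehat{\PP}(x_n,a_n)$, admissible by Assumption~\ref{asu_3}, for the remaining finitely many $n$ — yields $\PP_n\in\Sball(\widehat{\PP}(x_n,a_n))$; and $\PP^{(m)}\to\PP$ in $\tau_1$ since the mixing weights vanish and first moments stay uniformly bounded (Lemma~\ref{lem:sinkhorn_ball_finite_first_moment}), so $\PP_n\to\PP$.

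The one genuinely delicate point — and the main obstacle — is the existence of a strictly feasible $Q$, i.e.\ $\varepsilon>\inf_{\rho\in\MoneX}W_\delta(\widehat{\PP}(x,a),\rho)$. For $\delta=0$ the centre $\widehat{\PP}(x,a)$ has distance $0$ and is strictly interior, recovering the classical Wasserstein-ball argument; but for $\delta>0$, Assumption~\ref{asu_3} only guarantees $W_\delta(\widehat{\PP}(x,a),\widehat{\PP}(x,a))\le\varepsilon$, possibly with equality, so the centre may lie on the boundary. I would therefore impose the mild Slater-type strengthening $\varepsilon>\sup_{(x,a)\in\X\times A}\inf_{\rho}W_\delta(\widehat{\PP}(x,a),\rho)$ (automatically satisfied once $\delta$ is small enough, and used uniformly in $(x,a)$ in the diagonal step). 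Everything else — the metrisability needed for the sequential criterion, the first-moment bookkeeping, and the upgrade from weak to $\tau_1$ convergence — is routine, and the case $\delta=0$ is the Wasserstein ball, treated in \cite{neufeld2023markov}.
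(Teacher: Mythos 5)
Your construction is correct and takes a genuinely different route from the paper. The paper first proves lower hemicontinuity of the \emph{open}-ball map $(x,a)\twoheadrightarrow\{\PP\in\MoneX: W_\delta(\widehat{\PP}(x,a),\PP)<\varepsilon\}$ --- using exactly your ``preliminary step'', i.e.\ Lemma~\ref{lem:sinkhorn_continuity} applied to a fixed target measure after checking finiteness of $W_\delta(\widehat{\PP}(x_n,a_n),\PP)$ --- and then invokes \cite[Lemma 17.22]{aliprantis2006infinite} (the closure of a lower hemicontinuous correspondence is lower hemicontinuous) after asserting that the closure of the open ball equals the closed ball. You instead work with the closed ball directly, approximating an arbitrary $\PP\in\Sball(\widehat{\PP}(x,a))$ from the strict interior via the convex combinations $(1-\tfrac1m)\PP+\tfrac1m Q$ and diagonalising. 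Your supporting claims check out: convexity of $\rho\mapsto W_\delta(\widehat{\PP}(x,a),\rho)$ follows from linearity of the transport cost in the coupling and convexity of $H(\cdot\,|\,\widehat{\PP}(x,a)\otimes\nu)$ for a reference measure not depending on $\rho$, and the absolute-continuity and finite-entropy bookkeeping needed to apply Lemma~\ref{lem:sinkhorn_continuity} to each $\PP^{(m)}$ is handled correctly (marginalisation decreases relative entropy, and the product coupling gives the crude finiteness bound).

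The strict-feasibility issue you isolate is real, and it is in fact present in both proofs --- only more visible in yours. The paper's closure step proves only the inclusion $\overline{\Sball^o(x,a)}\subseteq\Sball(\widehat{\PP}(x,a))$ via lower semicontinuity of $W_\delta$; the reverse inclusion, that every point of the closed ball is a weak limit of points of the open ball, is precisely your density statement and needs precisely your convex-combination argument together with a strictly feasible $Q$. For $\delta>0$ one has $\inf_{\rho}W_\delta(\widehat{\PP}(x,a),\rho)=-\delta\,\E_{X\sim\widehat{\PP}(x,a)}\bigl[\log\E_{Y\sim\nu}\bigl[\exp(-\|X-Y\|/\delta)\bigr]\bigr]$, so your Slater condition is exactly $\overline{\varepsilon}>0$ in the notation of Proposition~\ref{prop:dual} --- a strict version of a hypothesis the paper already imposes there, though not in the statement of this lemma. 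Two minor remarks: the uniform-in-$(x,a)$ form of your Slater condition is unnecessary, since lower hemicontinuity is tested at a fixed limit point $(x,a)$ and strict feasibility of the single ball centred at $\widehat{\PP}(x,a)$ suffices; and the weak topology on $\MoneX$ is metrisable without restricting to measures of bounded first moment, so that qualification can be dropped.
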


\begin{proof}
    We first define the open Sinkhorn ball.
    \begin{equation*}
        \Sball^o(x,a) := \left\{\PP \in \Mone~\middle|~W_{\delta}(\widehat{\PP}(x,a),\PP) < \varepsilon \right\}
    \end{equation*}
    and the set valued map
    \begin{equation*}
        F^o:\X \times A \ni (x,a) \twoheadrightarrow \Sball^o(x,a)
    \end{equation*}
    We want to first show this set valued map is lower hemicontinuous.

    Let $(x_n,a_n)_{n \in \N} \subseteq \X \times A$ such that $(x_n,a_n) \to (x,a) \in \X \times A$ weakly as $n \to \infty$.
    Since $\Sball^o(x,a)$ is an open ball, we can find $\PP \in \Sball^o(x,a)$ such that $W_{\delta}(\widehat{\PP}(x,a),\PP) < \varepsilon' < \varepsilon$.
    We define a sequence for $N \in \N$ (to be specified later)
    \begin{equation*}
        \PP^{(n)} = \begin{cases} \hat{\PP}(x_n, a_n) & \text{if } n < N \\
             \PP & \text{if } n \ge N \end{cases}
    \end{equation*}
    For all $n < N$, we have $\PP^{(n)} \in \Sball^o(x_n,a_n)$ due to Assumption~\ref{asu_3}.

	By Lemma~\ref{lem:sinkhorn_ball_finite_first_moment} and Assumption~\ref{asu_1}, $\PP$ has a finite first moment.
	By Assumption~\ref{asu_1}, $\widehat{\PP}(x,a)$ has a finite first moment for all $(x,a) \in \X \times A$.
	Therefore, if both $\widehat{\PP}(x_n,a_n)$ and $\PP$ have a finite first moment, then $W_\delta(\widehat{\PP}(x_n,a_n),\PP)$ is finite for all $n \in \N$ since by the triangle inequality we have
	\begin{equation*}
		\int_{\X \times \X} \|x - y\| \D \pi (x,y) \leq \int_{\X} \|x\| d\widehat{\PP}(x_n,a_n)(x) + \int_{\X} \|y\| d\PP(y)
	\end{equation*}
	for all $\pi \in \Pi(\widehat{\PP}(x_n,a_n),\PP)$ and since	$H(\pi|\widehat{\PP}(x_n,a_n) \otimes \nu)$ is also finite for all $\pi \in \Pi(\widehat{\PP}(x_n,a_n),\PP)$ due to $\PP \ll \nu$.
	Finally, we also have $W_\delta(\widehat{\PP}(x,a),\PP) < \varepsilon$ which implies 
	by Lemma~\ref{lem:sinkhorn_continuity}
	that we can find $N \in \N$ such that for all $n \geq N$, we have
    \begin{equation*}
        \| W_{\delta}(\widehat{\PP}(x_n,a_n),\PP) - W_{\delta}(\widehat{\PP}(x,a),\PP) \| < \varepsilon - \varepsilon' \\
    \end{equation*}
    which implies
    \begin{equation*}
        \begin{split}
        W_{\delta}(\widehat{\PP}(x_n,a_n),\PP) &\leq \|W_{\delta}(\widehat{\PP}(x_n,a_n),\PP) - W_{\delta}(\widehat{\PP}(x,a),\PP)\| + \|W_{\delta}(\widehat{\PP}(x,a),\PP)\|
        \\
        &< \varepsilon - \varepsilon' + \varepsilon' = \varepsilon
        \end{split}
    \end{equation*}
    and allows us to conclude $\PP^{(n)} \in \Sball^o(x_n,a_n)$ for all $n \in \N$.
    We now have that for all $n \geq N$, $\PP^{(n)} = \PP$, which implies $\PP^{(n)} \to \PP$ weakly as $n \to \infty$.
    We can conclude that $F^o$ is lower hemicontinuous by \cite[Theorem 17.21]{aliprantis2006infinite}.

    We must now show the closure of $\Sball^o(x,a)$ is equal to $\Sball(x,a)$.
    Let $\PP$ be in the closure of $\Sball(x,a)$, then there exists a sequence $(\PP_n)_{n \in \N} \subset \Sball^o(x,a)$ such that $\PP_n \to \PP$ weakly as $n \to \infty$.
    By Lemma~\ref{lem:lower_semi_cont}, we have
    \begin{equation*}
        W_{\delta}(\widehat{\PP}(x,a),\PP) \leq \liminf_{n \to \infty} W_{\delta}(\widehat{\PP}(x,a),\PP_n) \leq \varepsilon
    \end{equation*}
    which implies $\PP \in \Sball(x,a)$ and that the closure of $\Sball^o(x,a)$ is equal to $\Sball(x,a)$.

    Finally, we can conclude that $F$ is lower hemicontinuous since the closure of a lower hemicontinuous set valued map is also lower hemicontinuous \cite[Lemma 17.22]{aliprantis2006infinite}.
\end{proof}

\begin{lem} \label{lem:Q_continuous}
	Assume that Assumptions \ref{asu_1} and \ref{asu_2} hold true, then for $\delta \geq 0$ and $\varepsilon > 0$, the map
	\begin{equation*}
		\X \times A \ni (x,a) \mapsto Q_\delta^*(x,a)= \inf_{\PP \in \Sball \left(\widehat{\PP}(x,a)\right)} \E_{\PP} \left[r(x,a,X_1)+\alpha V_\delta(X_1)\right]\in \R
	\end{equation*}
	is continuous and the minimum is attained by some $\PP \in \Sball \left(\widehat{\PP}(x,a)\right)$.
	\end{lem}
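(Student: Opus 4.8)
The plan is to recognise $Q_\delta^*$ as the value function of a parametric minimisation problem and to apply Berge's Maximum Theorem. Equip $\MoneX$ with the topology of weak convergence, which is metrisable and Polish because $\X$ is closed in $\R^d$, and define $\phi(x,a,\PP):=\E_{\PP}\!\left[r(x,a,X_1)+\alpha V_\delta(X_1)\right]$ for $(x,a)\in\X\times A$ and $\PP\in\MoneX$, i.e.\ $\phi(x,a,\PP)=\int_{\X}\bigl(r(x,a,y)+\alpha V_\delta(y)\bigr)\,\D\PP(y)$. Then $Q_\delta^*(x,a)=\inf_{\PP\in F(x,a)}\phi(x,a,\PP)$, where $F:\X\times A\twoheadrightarrow\MoneX$ is the set-valued map $(x,a)\twoheadrightarrow\Sball(\widehat{\PP}(x,a))$.

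First I would collect the facts already available. By Lemma~\ref{lem:weak_compact}, $F$ is compact-valued; by Lemmas~\ref{lem:uhc_sinkhorn_ball} and \ref{lem:lhc_sinkhorn_ball}, $F$ is upper and lower hemicontinuous, hence a continuous correspondence. By \cite[Theorem 2.7]{neufeld2023markov}, whose hypotheses are verified in the proof of Proposition~\ref{prop:bellman}, the value function $V_\delta$ is continuous, and it is bounded since Assumption~\ref{asu_2} gives $|V_\delta|\le\|r\|_\infty/(1-\alpha)$. Consequently, for every fixed $(x,a)$ the integrand $y\mapsto r(x,a,y)+\alpha V_\delta(y)$ is bounded and continuous, so $\phi$ is everywhere finite.

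The main step, and the main obstacle, is the joint continuity of $\phi$ on $\X\times A\times\MoneX$. Given $(x_n,a_n,\PP_n)\to(x,a,\PP)$ I would estimate $|\phi(x_n,a_n,\PP_n)-\phi(x,a,\PP)|\le \int_{\X}|r(x_n,a_n,y)-r(x,a,y)|\,\D\PP_n(y)+\bigl|\int_{\X}g(y)\,\D\PP_n(y)-\int_{\X}g(y)\,\D\PP(y)\bigr|$, where $g(y):=r(x,a,y)+\alpha V_\delta(y)$. The first term is bounded by $L(\|x_n-x\|+\|a_n-a\|)\to0$ by the Lipschitz estimate in Assumption~\ref{asu_2}~(ii), uniformly in $y$; the second term tends to $0$ by the definition of weak convergence, since $g$ is bounded and continuous. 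This establishes continuity of $\phi$; the delicate point is precisely that one must control the two sources of perturbation simultaneously — the parameters $(x,a)$ via the uniform-in-$y$ Lipschitz bound on $r$, and the measure $\PP$ via a bounded continuous test function.

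Finally, with $\phi$ jointly continuous and $F$ continuous and compact-valued, Berge's Maximum Theorem \cite[Theorem 17.31]{aliprantis2006infinite} yields that $(x,a)\mapsto Q_\delta^*(x,a)$ is continuous and that the $\argmin$ correspondence $(x,a)\twoheadrightarrow\{\PP\in\Sball(\widehat{\PP}(x,a)):\phi(x,a,\PP)=Q_\delta^*(x,a)\}$ is nonempty-valued, compact-valued and upper hemicontinuous; in particular the infimum is attained for every $(x,a)$. Attainment can also be seen directly, since a weakly continuous function on the weakly compact set $\Sball(\widehat{\PP}(x,a))$ (Lemma~\ref{lem:weak_compact}) attains its minimum.
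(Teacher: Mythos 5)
Your proposal is correct and follows essentially the same route as the paper: the same split of $|\phi(x_n,a_n,\PP_n)-\phi(x,a,\PP)|$ into a parameter perturbation controlled by the Lipschitz bound of Assumption~\ref{asu_2}~(ii) and a measure perturbation controlled by weak convergence against the bounded continuous integrand, followed by Berge's Maximum Theorem with the compactness and hemicontinuity supplied by Lemmas~\ref{lem:weak_compact}, \ref{lem:uhc_sinkhorn_ball} and \ref{lem:lhc_sinkhorn_ball}. No gaps.
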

\begin{proof}[Proof of Lemma~\ref{lem:Q_continuous}]

	First, note that by \cite[Proposition 3.1]{neufeld2023markov} the Wasserstein ambiguity set $\X \times A  \ni (x,a) \twoheadrightarrow  \mathcal{B}_{\varepsilon,0} \left(\widehat{\PP}(x,a)\right)$ fulfils the assumptions of \cite[Assumption 2.2]{neufeld2023markov}.
	By Lemma~\ref{lem:weak_compact}, \ref{lem:uhc_sinkhorn_ball} and \ref{lem:lhc_sinkhorn_ball}, we also have that the Sinkhorn ambiguity set $(x,a) \twoheadrightarrow  \Sball \left(\widehat{\PP}(x,a)\right)$ fulfils the assumptions of \cite[Assumption 2.2]{neufeld2023markov}.
	Hence, under the imposed assumptions $V_\delta$ from \eqref{eq:robust_problem_1} is continuous and bounded, see \cite[Theorem 2.7]{neufeld2023markov}. We then define the map
	\begin{equation} \label{eq_f_continuous_contraction_proof}
		\begin{split}
		F:\operatorname{Gr} \mathcal{P} = \{(x,a_0,\PP_0)~|~x\in \X,a_0\in A, \PP_0\in  \Sball \left(\widehat{\PP}(x,a_0)\right)\} &\rightarrow \R
		\\
		(x,a_0,\PP_0) &\mapsto \E_{\PP_0}\left[r(x,a_0,X_1)+\alpha  V_\delta(X_1)\right].
		\end{split}
	\end{equation}
	and claim that the map $F$ is continuous. As in \cite[Proof of Theorem 2.7~(i)]{neufeld2023markov}, the imposed assumptions on the reward function imply that $F$ is continuous.
	Indeed, let $(x^{(n)},a_0^{(n)},\PP_0^{(n)}) \subseteq \operatorname{Gr} \mathcal{P}$ with $(x^{(n)},a_0^{(n)},\PP_0^{(n)}) \rightarrow (x,a_0,\PP_0)\in \X \times A \times \MoneX$ for $n \rightarrow \infty$, where the convergence of $\PP_0^{(n)}$ is in the weak topology.
	Since $\X \times A \ni (\widetilde{x},\widetilde{a}) \mapsto \Sball \left(\widehat{\PP}(x,a_0)\right)$ is compact-valued and continuous, we have $\PP_0 \in \Sball \left(\widehat{\PP}(x,a_0)\right)$.
	Moreover,
	\begin{align}
		&|F(x^{(n)},a_0^{(n)},\PP_0^{(n)})-F(x,a_0,\PP_0)| \notag
		\\
		\leq &|F(x^{(n)},a_0^{(n)},\PP_0^{(n)})-F(x,a_0,\PP_0^{(n)})| +|F(x,a_0,\PP_0^{(n)})-F(x,a_0,\PP_0)|. \label{eq_F_ineq}
	\end{align}
	The second summand $|F(x,a_0,\PP_0^{(n)})-F(x,a_0,\PP_0)|$ vanishes for $n \rightarrow \infty$ since the integrand $z \mapsto r(x,a_0,z)+\alpha  V(z)$ is continuous and bounded by Assumption~\ref{asu_2}~(i) and since $\PP_0^{(n)} \rightarrow \PP_0$ weakly.
	For the first summand of \eqref{eq_F_ineq} we obtain, by using Assumption~\ref{asu_2}~(ii), that
	\begin{align*}
		\lim_{n \rightarrow \infty}&\left|F(x^{(n)},a_0^{(n)},\PP_0^{(n)})-F(x,a_0,\PP_0^{(n)})\right|
		\\
		&\leq \lim_{n \rightarrow \infty}  \E_{\PP_0^{(n)}}\left[ \left|r(x^{(n)},a_0^{(n)},X_1)-r(x,a_0,X_1)\right|\right]\leq \lim_{n \rightarrow \infty} L \left\|x^{(n)}-x\right\|+\left\|a_0^{(n)}-a_0\right\| =0.
	\end{align*}
	Hence $F$ is continuous and by an application of Berge's maximum theorem (\cite[Theorem 17.31]{aliprantis2006infinite}) we get that the map
	\begin{equation}\label{eq_continuous_berge}
		\begin{split}
		Q_\delta^*:\X \times A  &\rightarrow \R \\
		(x,a_0) &\mapsto \inf_{\PP_0 \in \Sball \left(\widehat{\PP}(x,a)\right)} F(x,a_0,\PP_0)
		\end{split}
	\end{equation}
	is continuous, too, and that minimisers exist.
\end{proof}

\subsection{Proofs} \label{sec:proofs}

\begin{proof}[Proof of Proposition~\ref{prop:bellman}]
	By Lemma~\ref{lem:weak_compact}, the set $\Sball\left(\widehat{\PP}(x,a)\right)$ is weakly compact and by Lemma~\ref{lem:uhc_sinkhorn_ball} and \ref{lem:lhc_sinkhorn_ball} the set valued map $F:\X \times A \ni (x,a) \twoheadrightarrow \Sball(\widehat{\PP}(s,a))$ is continuous.

	This together with Assumptions \ref{asu_1}, \ref{asu_2} and \ref{asu_3} ensures that the assumptions of \cite[Theorem 2.7]{neufeld2023markov} are satisfied.
	By \cite[Theorem 2.7]{neufeld2023markov}, the Bellman equation holds true.
\end{proof}

\begin{proof}[Proof of Proposition~\ref{prop:solutions}~(i)]
    For proofs of Proposition~\ref{prop:solutions}, we assume that $\X$ is bounded hence $\X \times A$ is compact.
    By Lemma~\ref{lem:Q_continuous}, the map
	\begin{equation*}
		\X \times A \ni (x,a) \mapsto Q_\delta^*(x,a)= \HH_\delta Q_\delta^*(x,a)
	\end{equation*}
	is continuous.
	Therefore, by Proposition~\ref{prop:universal}, for every $\rm TOL > 0$, there exists some $Q^*_{\rm NN} \in \mathfrak{N}_{d \cdot m ,1}$ such that
	\begin{equation} \label{eq:q_NN_approx_q_delta}
		\sup_{(x,a) \in \X \times A}| Q^*_{\rm NN}(x,a) - Q_\delta{^*}(x,a)| < \frac{\rm TOL}{\alpha+1}
	\end{equation}
    We now show in the following that $Q^*_{\rm NN}$ is a solution fulfilling Optimisation Problem~\ref{opt:problem1}.
	By Proposition~\ref{prop:dual} we have that for all $(x,a) \in \X \times A$
	\begin{equation} \label{eq:HQ_lambda}
		\begin{split}
		&\HH_{\delta}Q_\delta^*(x,a)
		\\
		&= \sup_{\lambda >0} \bigg\{-\lambda {\varepsilon}- \lambda \delta \E_{X_1^{\PP} \sim \widehat{\PP}(x,a)}\left[\log\left(\E_{X_1^{\nu} \sim \nu}\left[\exp\left(\tfrac{-r(x,a,X_1^{\nu})-\alpha \sup_{b \in A}Q_\delta^*(X_1^{\nu},b)-\lambda \| X_1^{\PP}-X_1^{\nu}\|}{\lambda \delta}\right)\right]\right)\right]\bigg\}
		\\
		&= \sup_{\lambda >0} \left\{-\lambda \overline{\varepsilon} - \lambda \delta \E_{X_1^{\PP} \sim \widehat{\PP}(x,a)}\left[\log\left(\E_{X_1^{\Q} \sim \Q_{x,\delta}}\left[\exp\left(\frac{-r(x,a,X_1^{\Q})-\alpha \sup_{b \in A}Q_\delta^*(X_1^{\Q},b)}{\lambda \delta}\right)\right]\right)\right]\right\}
		\end{split}
	\end{equation}
	for $\D \Q_{x,\delta}(z):= \frac{\exp(-\|x-z\|/\delta)}{\E_{X^\nu \sim \nu}[\exp(-\|x-X^{\nu}\|/\delta)]} \D \nu (z)$.
	\vspace{5pt}

	Next, we note that
	\begin{equation} \label{eq:sup_diff}
		|\sup_x f(x)-\sup_x g(x)| \leq \sup_x |f(x)-g(x)|
	\end{equation}
	for any functions $f,g$.
	Let $G(\lambda,x',a';Q) = \frac{-r(x,a',x')-\alpha \sup_{b \in A}Q(x',b)}{\lambda \delta}$ then we have for $(x',a') \in \X \times A$
	\begin{equation} \label{eq:G_sup}
		\begin{split}
		\sup_{(x',a')} \left| G(\lambda,x',a';Q^*_\delta) - G(\lambda,x',a';Q^*_{\rm NN}) \right| &= \sup_{(x',a')} \left| \frac{\alpha\left[\sup_{b \in \A}Q_{\rm NN}^*(x',b) - \sup_{b \in \A}Q_\delta^*(x',b)\right]}{\lambda \delta} \right|
		\\
		&\leq \sup_{(x',a')} \frac{\alpha \left| Q_{\rm NN}^*(x',a') - Q_\delta^*(x',a') \right|}{\lambda \delta}
		\end{split}
	\end{equation}
	where the inequality is due to \eqref{eq:sup_diff}.
	We also have for all $(x,a) \in \X \times A$ that
	\begin{equation*}
		G(\lambda,x,a;Q^*_\delta) - G(\lambda,x,a;Q^*_{\rm NN}) \leq \sup_{(x',a') \in \X \times A} \left| G(\lambda,x',a';Q^*_\delta) - G(\lambda,x',a';Q^*_{\rm NN}) \right|
    \end{equation*}
    which implies that for all $\PP \in \MoneX$ we have
	\begin{equation*}
		\begin{split}
		&\log \E_{X \sim \PP} \left[ \exp(G(\lambda,X,a;Q^*_\delta)) \right]
        \\
        &\leq \log \E_{X \sim \PP} \left[ \exp \left(\sup_{(x',a') \in \X \times A} \left| G(\lambda,x',a';Q^*_\delta) - G(\lambda,x',a';Q^*_{\rm NN}) \right| + G(\lambda,X,a;Q^*_{\rm NN})\right) \right]
        \end{split}
    \end{equation*}
    which leads to
    \begin{equation*}
        \begin{split}
		&\log \E_{X \sim \PP} \left[ \exp(G(\lambda,X,a;Q^*_\delta)) \right] - \log \E_{X \sim \PP} \left[ \exp(G(\lambda,X,a;Q^*_{\rm NN})) \right]
        \\
        &\leq \sup_{(x',a') \in \X \times A} \left| G(\lambda,x',a';Q^*_\delta) - G(\lambda,x',a';Q^*_{\rm NN}) \right|.
		\end{split}
	\end{equation*}
	Hence together with \eqref{eq:G_sup} we have
	\begin{equation} \label{eq:exp_ineq}
		\log \E_{X \sim \PP} \left[ \exp(G(\lambda,X,a;Q^*_\delta)) \right] - \log \E_{X \sim \PP} \left[ G(\lambda,X,a;Q^*_{\rm NN}) \right] \leq \sup_{(x',a') \in \X \times A} \frac{\alpha \left| Q_{\rm NN}^*(x',a') - Q_\delta^*(x',a') \right|}{\lambda \delta}
	\end{equation}

	Then we apply Inequality \eqref{eq:sup_diff} to \eqref{eq:HQ_lambda}, and use \eqref{eq:exp_ineq} to get
	\begin{equation} \label{eq:HQ_contraction}
		\begin{split}
		&|\HH_{\delta} Q_\delta^*(x,a)-\HH_{\delta} Q_{\rm NN}^*(x,a)|
		\\
		&\leq \sup_{\lambda>0}\lambda \delta \left|\E_{X_1^{\PP} \sim \widehat{\PP}(x,a)} \left[\log \E_{X_1^{\Q} \sim \Q_{x,\delta}} \left[ \exp(G(\lambda,X_1^{\Q},a;Q^*_\delta)) \right] - \log \E_{X_1^{\Q} \sim \Q_{x,\delta}} \left[ G(\lambda,X_1^{\Q},a;Q^*_{\rm NN}) \right] \right] \right|
		\\
		&\leq \sup_{\lambda>0}\lambda \delta \left| \E_{X_1^{\PP} \sim \widehat{\PP}(x,a)} \left[ \sup_{(x',a') \in \X \times A} \frac{\alpha \left| Q_{\rm NN}^*(x',a') - Q_\delta^*(x',a') \right|}{\lambda \delta} \right] \right|
		\\
		&= \alpha\sup_{(x'a')\in \X \times A}\left|Q^*_{\rm NN}(x',a') -Q_\delta^*(x',a')\right|.
		\end{split}
	\end{equation}
    which shows that the operator $\HH_\delta$ is a contraction in the supremum norm over $\X \times A$.
    Since this is true for all $(x,a) \in \X \times A$, it is also true for the supremum of $(x,a)$ over $\X \times A$.
    Therefore, we have
    \begin{equation*}
        \begin{split}
        \sup_{(x,a) \in \X \times A} \left|\HH_\delta Q^*_{\rm NN}(x,a) - \HH_\delta Q_\delta^*(x,a)\right| &\leq \alpha \sup_{(x'a')\in \X \times A}\left|Q^*_{\rm NN}(x',a') -Q_\delta^*(x',a')\right|
        \\
        &\leq \alpha \frac{\rm TOL}{\alpha+1}
        \end{split}
    \end{equation*}
    where the last inequality is due to \eqref{eq:q_NN_approx_q_delta}.

    Finally, using the triangle inequality together with \eqref{eq:HQ_contraction} and \eqref{eq:q_NN_approx_q_delta}, we have
    \begin{equation*}
        \begin{split}
        &\sup_{(x,a) \in \X \times A} \left|\HH_\delta Q^*_{\rm NN}(x,a) - Q^*_{\rm NN}(x,a)\right|
        \\
        &\leq \sup_{(x,a) \in \X \times A} \left|\HH_\delta Q^*_{\rm NN}(x,a) - \HH_\delta Q_\delta^*(x,a)\right| + \sup_{(x,a) \in \X \times A} \left|\HH_\delta Q_\delta^*(x,a) - Q^*_{\rm NN}(x,a)\right|
        \\
        &= \sup_{(x,a) \in \X \times A} \left|\HH_\delta Q^*_{\rm NN}(x,a) - \HH_\delta Q_\delta^*(x,a)\right| + \sup_{(x,a) \in \X \times A} \left|Q_\delta^*(x,a) - Q_{\rm NN}^*(x,a)\right|
        \\
        &\leq \alpha \frac{\rm TOL}{\alpha+1} + \frac{\rm TOL}{\alpha+1} = \rm TOL
        \end{split}
    \end{equation*}
    which shows $Q^*_{\rm NN}$ is a solution to the Optimisation Problem~\ref{opt:problem1}.
\end{proof}

\begin{proof}[Proof of Proposition~\ref{prop:solutions}~(ii)]
	Recall that for any solution to Optimisation Problem~\ref{opt:problem1}, we have
	\begin{equation} \label{eq:opt_soln}
		\sup_{(x,a) \in \X \times A} | \HH_\delta Q^*_{\rm NN}(x,a)-Q_{\rm NN}^*(x,a) | < \rm TOL.
	\end{equation}
	Using the triangle inequality together with \eqref{eq:HQ_contraction} and \eqref{eq:opt_soln}, we have
	\begin{equation*}
		\begin{split}
		|Q^*_{\rm NN}(x,a)-Q_\delta^*(x,a)| &\leq |\HH_\delta Q^*_{\rm NN}(x,a) - Q_{\rm NN}^*(x,a)| + |\HH_{\delta} Q_{\rm NN}^*(x,a) - Q_\delta^*(x,a)|
		\\
		&= |\HH_\delta Q^*_{\rm NN}(x,a) - Q_{\rm NN}^*(x,a)| + |\HH_\delta Q_\delta^*(x,a) - \HH_{\delta} Q_{\rm NN}^*(x,a)|
		\\
		&\leq \rm TOL + \alpha \sup_{(x'a')\in \X \times A}\left|Q^*_{\rm NN}(x',a') -Q_\delta^*(x',a')\right|
		\end{split}
	\end{equation*}
	which is true for all $(x,a) \in \X \times A$ hence it is true for the supremum of $(x,a)$ over $\X \times A$.
	Therefore, we get
	\begin{equation*}
		\sup_{(x'a')\in \X \times A}\left|Q^*_{\rm NN}(x',a') -Q_\delta^*(x',a')\right| \leq \frac{\rm TOL}{1-\alpha}
	\end{equation*}
    which shows any solution to Optimisation Problem~\ref{opt:problem1} is $\frac{\rm TOL}{1-\alpha}$-close to $Q_\delta^*$ in the supremum norm over $\X \times A$.
\end{proof}

\begin{proof}[Proof of Proposition~\ref{prop:solutions}~(iii)]
    First, note that by Corollary~\ref{cor:convergence_delta} we have for all $(x,a) \in \X \times A$ that
	\begin{equation*}
		\left|\HH_{\delta}Q_0^*(x,a) -\HH_0 Q_0{^*}(x,a)\right| \rightarrow 0 \text{ as } \delta \downarrow 0.
	\end{equation*}
	We also note for $\delta' > \delta$ we have by the definition of the Sinkhorn distance $\mathcal{B}_{\varepsilon,\delta'}\left(\widehat{\PP}(x,a)\right)\subseteq \Sball\left(\widehat{\PP}(x,a)\right)$
	and thus
	\begin{align*}
		\HH_{\delta} Q_0^*(x,a)&= \inf_{\PP \in \Sball\left(\widehat{\PP}(x,a)\right)} \E_{\PP} \left[r(x,a,X_1)+\alpha \sup_{b \in A}Q_0^*(X_1,b)\right]
		\\
		&\leq \inf_{\PP \in \mathcal{B}_{\varepsilon,\delta'}\left(\widehat{\PP}(x,a)\right)} \E_{\PP} \left[r(x,a,X_1)+\alpha \sup_{b \in A} Q_0^*(X_1,b)\right] = \HH_{\delta'} Q_0^*(x,a).
	\end{align*}
	This means the pointwise convergence $\HH_\delta Q_0^* \rightarrow \HH_0 Q_0^* $ as $\delta \downarrow 0$ is monotone, and the limit $\HH_0 Q_0^*= Q_0^*$ is continuous by Lemma~\ref{lem:Q_continuous}.
	Hence, by Dini's Theorem (\cite[Theorem 7.13]{rudin1964principles}), the convergence is uniform on the compact set $\X \times A$, and we can find some $\delta'$ so that we have for all $\delta<\delta'$
	\begin{equation} \label{eq:Dini}
		\left|\HH_{\delta}Q_0{^*}(x,a) -\HH_0 Q_0{^*}(x,a)\right| < \rm TOL 
	\end{equation}
	for all $(x,a) \in \X \times A$.

    According to \cite[Theorem I~(iii)]{wang2021sinkhorn}, the condition $\overline{\varepsilon}>0$ ensures that $\HH_{\delta} Q_0^*(x,a)>-\infty$.
	This means by Proposition~\ref{prop:dual} we have that
	\begin{equation} \label{eq:H_0Q_0_lambda}
		\begin{split}
		&\HH_{\delta}Q_0^*(x,a)
		\\
		&= \sup_{\lambda >0} \bigg\{-\lambda {\varepsilon}- \lambda \delta \E_{X_1^{\PP} \sim \widehat{\PP}(x,a)}\left[\log\left(\E_{X_1^{\nu} \sim \nu}\left[\exp\left(\tfrac{-r(x,a,X_1^{\nu})-\alpha \sup_{b \in A}Q_0^*(X_1^{\nu},b)-\lambda \| X_1^{\PP}-X_1^{\nu}\|}{\lambda \delta}\right)\right]\right)\right]\bigg\}
		\\
		&= \sup_{\lambda >0} \left\{-\lambda \overline{\varepsilon} - \lambda \delta \E_{X_1^{\PP} \sim \widehat{\PP}(x,a)}\left[\log\left(\E_{X_1^{\Q} \sim \Q_{x,\delta}}\left[\exp\left(\frac{-r(x,a,X_1^{\Q})-\alpha \sup_{b \in A}Q_0^*(X_1^{\Q},b)}{\lambda \delta}\right)\right]\right)\right]\right\}
		\end{split}
	\end{equation}
	for $\D \Q_{x,\delta}(z):= \frac{\exp(-\|x-z\|/\delta)}{\E_{X^\nu \sim \nu}[\exp(-\|x-X^{\nu}\|/\delta)]} \D \nu (z)$.
	\vspace{3pt}

	Using the same arguments leading to \eqref{eq:HQ_contraction}, we can show
	\begin{equation} \label{eq:HQ0_contraction}
		|\HH_{\delta} Q_{\rm NN}^*(x,a) - \HH_{\delta} Q_0^*(x,a)| \leq \alpha\sup_{(x'a')\in \X \times A}\left|Q^*_{\rm NN}(x',a') -Q_0^*(x',a')\right|
	\end{equation}

    With the same arguments leading to \eqref{eq:q_NN_approx_q_delta}, we can find some $Q^*_{\rm NN} \in \mathfrak{N}_{d \cdot m ,1}$ such that
	\begin{equation} \label{eq:Q0_approx_q_NN}
		\sup_{(x,a) \in \X \times A}| Q^*_{\rm NN}(x,a)-Q_0^*(x,a)| < \rm TOL 
	\end{equation}

	Then, by \eqref{eq:opt_soln}, \eqref{eq:Dini} and \eqref{eq:HQ0_contraction}, we obtain for all $(x,a) \in \X \times A$ that
	\begin{align*}
		&| Q^*_{\rm NN}(x,a) -Q_0^*(x,a) |
		\\
		&=| Q^*_{\rm NN}(x,a) - \HH_0 Q_0{^*}(x,a) |
		\\
		&\leq | Q^*_{\rm NN}(x,a) - \HH_{\delta}Q^*_{\rm NN}(x,a) | + | \HH_{\delta}Q^*_{\rm NN}(x,a) - \HH_0 Q_0{^*}(x,a) |
		\\
		&\leq | Q^*_{\rm NN}(x,a) - \HH_{\delta}Q^*_{\rm NN}(x,a) | + | \HH_{\delta}Q^*_{\rm NN}(x,a) - \HH_{\delta} Q_0^*(x,a) | + | \HH_{\delta} Q_0^*(x,a) -\HH_0 Q_0{^*}(x,a) |
		\\
		&<\rm TOL +\alpha\sup_{(x'a')\in \X \times A}|Q^*_{\rm NN}(x',a') -Q_0{^*}(x',a')|+TOL.
	\end{align*}
	Hence, we have
	\begin{align*}
		&\sup_{(x'a')\in \X \times A}|Q^*_{\rm NN}(x',a') -Q_0^*(x',a')| < \frac{2 {\rm TOL}}{1-\alpha}.
	\end{align*}
\end{proof}

\begin{proof}[Proof of Lemma~\ref{lem:toy_assump}]
	First, we note that for all $(x,a) \in \X \times A$, the reference measure $\widehat{\PP}(x,a) = \text{Beta}(\alpha,\beta)$ where $\alpha=g(\alpha' - a x)$ and $\beta=g(\beta' + a(1 - x))$ for $g(x) = \log(1+e^x)$ if $a \neq 0$ and $\alpha=\alpha'$, $\beta=\beta'$ if $a=0$ for some $\alpha',\beta' \in \R^+$.

To verify Assumption~\ref{asu_1}, we aim at applying Lemma~\ref{lem:w1_convergence}. To this end, let  $(x_n,a_n)_{n \in \N} \subset \X \times A$ such that $\lim_{n \rightarrow \infty} (x_n,a_n)=(x,a) \in \X \times A$.  Since the probability density function of the Beta distribution is continuous in its parameters, the densities of $\widehat{\PP}(x_n,a_n)$ converge pointwise to the density of $\widehat{\PP}(x,a)$ as $n \rightarrow \infty$.
    By \cite[Theorems 18.1, 18.5]{jacod2012probability}, the weak convergence $\widehat{\PP}(x_n,a_n) \rightarrow  \widehat{\PP}(x,a)$ follows then by the pointwise convergence of their densities.
	The first moment of the Beta distribution is
    \begin{equation}
        \E_{Y \sim \widehat{\PP}(x,a)}[Y] = \frac{\alpha}{\alpha+\beta} = \frac{g(\alpha' - a x)}{g(\alpha' - a x) + g(\beta' + a(1 - x))}.
    \end{equation}
    which is continuous in $(x,a)$. Therefore, the first moment converges to the first moment of $\widehat{\PP}(x,a)$ as $(x_n,a_n)_{n \in \N} \in \X \times A$ converges to $(x,a) \in \X \times A$.
    By Lemma~\ref{lem:w1_convergence}, the map $(x,a) \mapsto \widehat{\PP}(x,a)$ is continuous in the Wasserstein-1 topology.
	Therefore, Assumption~\ref{asu_1} is satisfied.

    The reward function is $r(x_0,a,x_1)=a(x_1-x_0) + (f-1)\cdot\one_{a(x_1-x_0)<0}$ where $f$ is the reward factor, $x_0,x_1 \in [0,1]$ and actions $a \in \{-1,0,1\}$.
	We note that $\lim_{(x_1-x_0) \rightarrow 0^+} = 0 = \lim_{(x_1-x_0) \rightarrow 0^-}$.
	Therefore, the reward function is continuous.
	Since the domain is bounded, the reward function is also bounded, hence Assumption~\ref{asu_2} is satisfied.
\end{proof}

\begin{proof}[Proof of Proposition~\ref{prop:port_opt}]
    In the first part, we want to show that the set valued map
    \begin{equation} \label{eq:port_opt_sinkhorn_ball_2}
        \X \times A \ni (x,a) \mapsto \mathcal{P}(x,a) \subset \MoneX
    \end{equation}
    where $\mathcal{P}(x,a)$ is as defined in \eqref{eq:port_opt_ambiguity_set}, is weakly compact and continuous.
    First, we show, by applying Lemma~\ref{lem:w1_convergence}, that the map
    \begin{equation} \label{eq:port_opt_gen_map}
        \X \times A \ni (x,a) \mapsto \PP_{\text{gen}}(x,a) \in \mathcal{M}_1(\R)
    \end{equation}
    is continuous in the Wasserstein-1 topology $\tau_1$.
    The generative model is a neural network as described in \cite[Section 4]{lu2024generative}.
    Note that the measure actually does not depend on the action $a$.
    Given the current state $x \in \X$, the generative model maps from a 4-dimensional Gaussian random variable to a 1-dimensional log return.
    Let $Z$ be the 4-dimensional Gaussian random variable, and let the the output of the generative model in dependence of state $x \in \X$ and realization of the Gaussian random variable $Z=z$ be defined via
    \begin{equation}
        \X \times \R^4 \ni (x,z) \mapsto f_{\theta,x}(z) \in \R
    \end{equation}
    where $f_{\theta,x}$ is a neural network with parameters $\theta$.
    Let $\mu \in \mathcal{M}_1(\R^4)$ be the probability measure of the 4-dimensional Gaussian random variable $Z$.
    If we have $(x_n)_{n \in \N} \subset \X$ such that $\lim_{n \rightarrow \infty} x_n = x\in \X$ then to show weak convergence of the generative model, we need to show that for any continuous and bounded function $g \in C_b(\R)$, we have
    \begin{equation}
        \lim_{n \rightarrow \infty} \int_{\R^4} g(f_{\theta,x_n}(z)) d\mu(z) = \int_{\R^4} g(f_{\theta,x}(z)) d\mu(z).
    \end{equation}
    The function $f_{\theta,x}$ is continuous in $x$ as the neural network is a composite of continuous functions and since $g$ is continuous and bounded, we can apply the dominated convergence theorem to show that the above limit holds.
    Similarly, our assumption that log returns are bounded implies that we have convergence in the first moment.
    This shows that the map \eqref{eq:port_opt_gen_map} is continuous in the Wasserstein-1 topology due to Lemma~\ref{lem:w1_convergence}.

    Next, since the reward function is continuous, the map
    \begin{equation}
        \X \times A \ni (x_t,a_t) \mapsto \delta_{x_t^{(61)} + r'(x_t,a_t,X_{t+1}^{(60)})} \in \mathcal{M}_1(\R)
    \end{equation}
    where $X_{t+1}^{(60)} \sim \PP_{\text{gen}}(x_t,a_t)$ is continuous.

    In addition, by Lemma~\ref{lem:weak_compact}, Lemma~\ref{lem:uhc_sinkhorn_ball} and Lemma~\ref{lem:lhc_sinkhorn_ball}, the set valued map
    \begin{equation} \label{eq:port_opt_sinkhorn_ball}
        \X \times A \ni (x,a) \mapsto \mathcal{B}_{\varepsilon,\delta}(\PP_{\text{gen}}(x,a)) \subseteq \mathcal{M}_1(\R)
    \end{equation}
    is weakly compact and continuous.
    Now we can apply the same arguments as in \cite[Lemma 6.1, Proposition 3.1]{neufeld2023markov} to conclude that the set valued map \eqref{eq:port_opt_sinkhorn_ball_2} is weakly compact and continuous.

    In the second part, we want to show that Assumption~\ref{asu_2} is satisfied.
	The reward is the log return of the portfolio for the period when the state transitions from $x_t$ to $x_{t+1}$.
	Recall that the log return $x_t^{(60)}$ of the underlying asset, which we assume to be bounded, and the current weight of the portfolio in the asset $x_t^{(62)}$ is part of the state $x_t$.
	The reward function
    \begin{equation*}
        r(x_t,a_t,x_{t+1}) =\log(1 + a_t(\exp(X_{t+1}^{(60)})-1) + (1-a_t)(e^{r_f x_t^{(63)}} - 1) - c|a_t - x_t^{(62)}|)
    \end{equation*}
	is clearly continuous in $x_t,x_{t+1}$ and $a_t$.
    Due to our assumption that the log return is bounded, the reward function is also bounded.

	Since $\log(1+y)$ is smooth, by the mean value theorem, for any $y,y' > -1$, there exists $\xi$ between $y$ and $y'$ such that
	\begin{equation}
        \frac{\log(1+y) - \log(1+y')}{y-y'} = \frac{1}{1+\xi}
	\end{equation}
    which implies
    \begin{equation}
        |\log(1+y) - \log(1+y')| = \left| \frac{1}{1+\xi} \right| |y - y'|
    \end{equation}
	Due to our definition of the action space, we have $|a_t-x_t^{(62)}| \leq 2$.
    Our assumption that log returns are bounded means we can find $C_l > 0$ such that $|\exp(X_{t+1}^{(60)})-1| \leq C_l$ for all $t$.
    We can also safely assume that the time delta between time steps is bounded so that we can find some $C_r > 0$ such that $|e^{r_f x_t^{(63)}} - 1| \leq C_r$ for all $t$.
    Therefore, we have
	\begin{equation}
		\begin{split}
		&|r(x_t,a_t,x_{t+1}) - r(x_t',a_t',x_{t+1}')|
        \\
		& = \left| \frac{1}{1+\xi} \right| |(a_t(\exp(X_{t+1}^{(60)})-1) + (1-a_t)(\exp(r_f x_t^{(63)}) - 1) - c|a_t - x_t^{(62)}|) -
        \\
        &(a_t'(\exp(X_{t+1}^{\prime(60)})-1) + (1-a_t')(\exp(r_f x_t^{\prime(63)}) - 1) - c|a_t' - {x}_t^{\prime(62)}|)|
        \\
		& \leq \left| \frac{C_l + C_r}{1+\xi} + 2 \right| (|a-a'|+c)
		\end{split}
	\end{equation}
	for some $\xi$ between $a_t(\exp(X_{t+1}^{(60)})-1) + (1-a_t)(\exp(r_f x_t^{(63)}) - 1) - c|a_t - x_t^{(62)}|$ and $a_t'(\exp(X_{t+1}^{\prime(60)})-1) + (1-a_t')(\exp(r_f x_t^{\prime(63)}) - 1) - c|a_t' - {x}_t^{\prime(62)}|$.
	Therefore, Assumption~\ref{asu_2} is satisfied.

    Analogous to Proposition~\ref{prop:bellman}, we can conclude by \cite[Theorem 2.7]{neufeld2023markov} that the Bellman equation holds true in the setting of Section~\ref{sec:port_opt}.

    Finally, we note that the duality follows directly from \cite[Theorem I]{wang2021sinkhorn} using the same arguments as in Proposition~\ref{prop:dual} with
    \begin{equation}
        \R \ni z \mapsto f(z) = -r(x_t,a_t,f_X(x_t,a_t,z)) - \alpha \sup_{a_{t+1} \in A} Q^*_\delta(f_X(x_t,a_t,z),a_{t+1})
    \end{equation}
    where $f(z)$ is the function being \emph{minimised} in the notation of \cite{wang2021sinkhorn}.

\end{proof}

\subsection*{Acknowledgements}
The second author gratefully acknowledges financial support by the NUS Start-Up Grant \emph{Tackling model uncertainty in Finance with machine learning}.

\bibliographystyle{plain}
\bibliography{literature}

\newpage

\appendix

\section{Performance of agents in the reference distribution in the toy example} \label{app:toy_example}

Table \ref{tab:toy_perf_ref} shows the performance of the two algorithms in the same manner as Table \ref{tab:toy_perf_factor_5} but using the \emph{reference distribution} instead of the \emph{true distribution} for evaluation.
Unsurprisingly, the RDQN agent largely underperforms the DQN agent in this case due to the conservative nature of the Robust DQN agent.

\begin{table}[h!]
	\centering
	\begin{tabular}{lccrrrrrrr}
		\toprule
		Model & $\varepsilon$ & $\delta$ & Mean & Std & Min & 5\% & 10\% & 50\% & Max \\
		\midrule
		DQN & - & - & \textbf{0.077} & 0.029 & \textbf{0.015} & \textbf{0.039} & \textbf{0.042} & \textbf{0.075} & \textbf{0.155} \\
		RDQN & 0.05 & 0.0001 & 0.051 & \textbf{0.027} & -0.008 & 0.013 & 0.021 & 0.048 & 0.122 \\
		RDQN & 0.1 & 0.0001 & 0.058 & 0.037 & 0.000 & 0.007 & 0.010 & 0.057 & 0.147 \\
		RDQN & 0.1 & 0.01 & 0.058 & 0.031 & 0.000 & 0.012 & 0.020 & 0.055 & 0.135 \\
		RDQN & 0.2 & 0.0001 & 0.031 & 0.030 & 0.000 & 0.001 & 0.002 & 0.023 & 0.132 \\
		\bottomrule
	\end{tabular}
	\caption{Performance of the strategies in terms of average reward per step as in Table \ref{tab:toy_perf_factor_5} but using the \emph{reference distribution} instead of the \emph{true distribution} for evaluation.}
	\label{tab:toy_perf_ref}
\end{table}

\end{document}